\documentclass{article}

\usepackage[preprint]{neurips_2026}

\usepackage[utf8]{inputenc} 
\usepackage[T1]{fontenc}    
\usepackage{hyperref}       
\usepackage{url}            
\usepackage{booktabs}       
\usepackage{amsfonts}       
\usepackage{nicefrac}       
\usepackage{microtype}      
\usepackage{xcolor}         

\usepackage{graphicx}
\usepackage{subcaption}

\usepackage{amsmath}
\usepackage{amssymb}
\usepackage{mathtools}
\usepackage{amsthm}
\usepackage{bm}

\newcommand{\nl}{\mathrm{nl}}

\newcommand{\FV}{\mathrm{FV}}
\newcommand{\LP}{\mathrm{LP}}
\newcommand{\MC}{\mathrm{MC}}
\newcommand{\FQ}{\mathrm{FQ}}
\newcommand{\OA}{\mathrm{OA}}
\newcommand{\LCB}{\mathrm{LCB}}

\theoremstyle{plain}
\newtheorem{theorem}{Theorem}[section]

\theoremstyle{definition}
\newtheorem{definition}[theorem]{Definition}
\newtheorem{assumption}[theorem]{Assumption}
\theoremstyle{remark}

\usepackage{algorithm}
\usepackage{algorithmic}

\usepackage{multirow}
\usepackage{arydshln}
\newcommand{\mycdashline}[1]{%
    \noalign{\vskip 2pt}  
    \cdashline{#1}        
    \noalign{\vskip 2pt}  
}

\newcommand{\inc}{$\mathbin{\textcolor{green!60!black}{+}}$}
\newcommand{\dec}{$\mathbin{\textcolor{red!70!black}{\bm{-}}}$}

\usepackage[skins,breakable,listings]{tcolorbox}

\title{Monotonic Reference-Free Refinement for Autoformalization}

\author{
  Lan Zhang\textsuperscript{1},
  Marco Valentino\textsuperscript{2},
  Andr\'e Freitas\textsuperscript{1,3,4}\\
  \textsuperscript{1}Department of Computer Science, University of Manchester, United Kingdom\\
  \textsuperscript{2}School of Computer Science, University of Sheffield, United Kingdom\\
  \textsuperscript{3}Idiap Research Institute, Switzerland\\
  \textsuperscript{4}National Biomarker Centre, CRUK Manchester Institute, United Kingdom\\
  \texttt{lan.zhang-6@postgrad.manchester.ac.uk}\\
}

\begin{document}

\maketitle
\begin{abstract}
While statement autoformalization has advanced rapidly, full-theorem autoformalization remains largely unexplored. Existing iterative refinement methods in statement autoformalization typically improve isolated aspects of formalization, such as syntactic correctness, but struggle to jointly optimize multiple quality dimensions, which is critical for full-theorem autoformalization. We introduce a \textit{reference-free iterative monotonic process} at inference time for full-theorem autoformalization that leverages complementary feedback from theorem provers and LLM-based judges, without access to ground-truth or existing formalizations and without human intervention. Our approach optimizes a \textit{masked composite objective} over Formal Validity, Logical Preservation, Mathematical Consistency, and Formal Quality, guided by a \textit{responsiveness map} that indicates how different LLMs acting as different roles preferentially improve each dimension. We further propose an acceptance policy that guarantees \textit{certified monotonic improvement}, and provide conditions ensuring convergence and termination. Empirical experiments demonstrate the proposed process enables simultaneous improvement across multiple dimensions, achieving 100.00\% formal validity and a 90.27\% overall score on miniF2F, and 77.96\% formal validity and a 52.45\% overall score on ProofNet.
\end{abstract}

\section{Introduction}
\begin{figure}[!t]
    \centering
    \includegraphics[width=\textwidth]{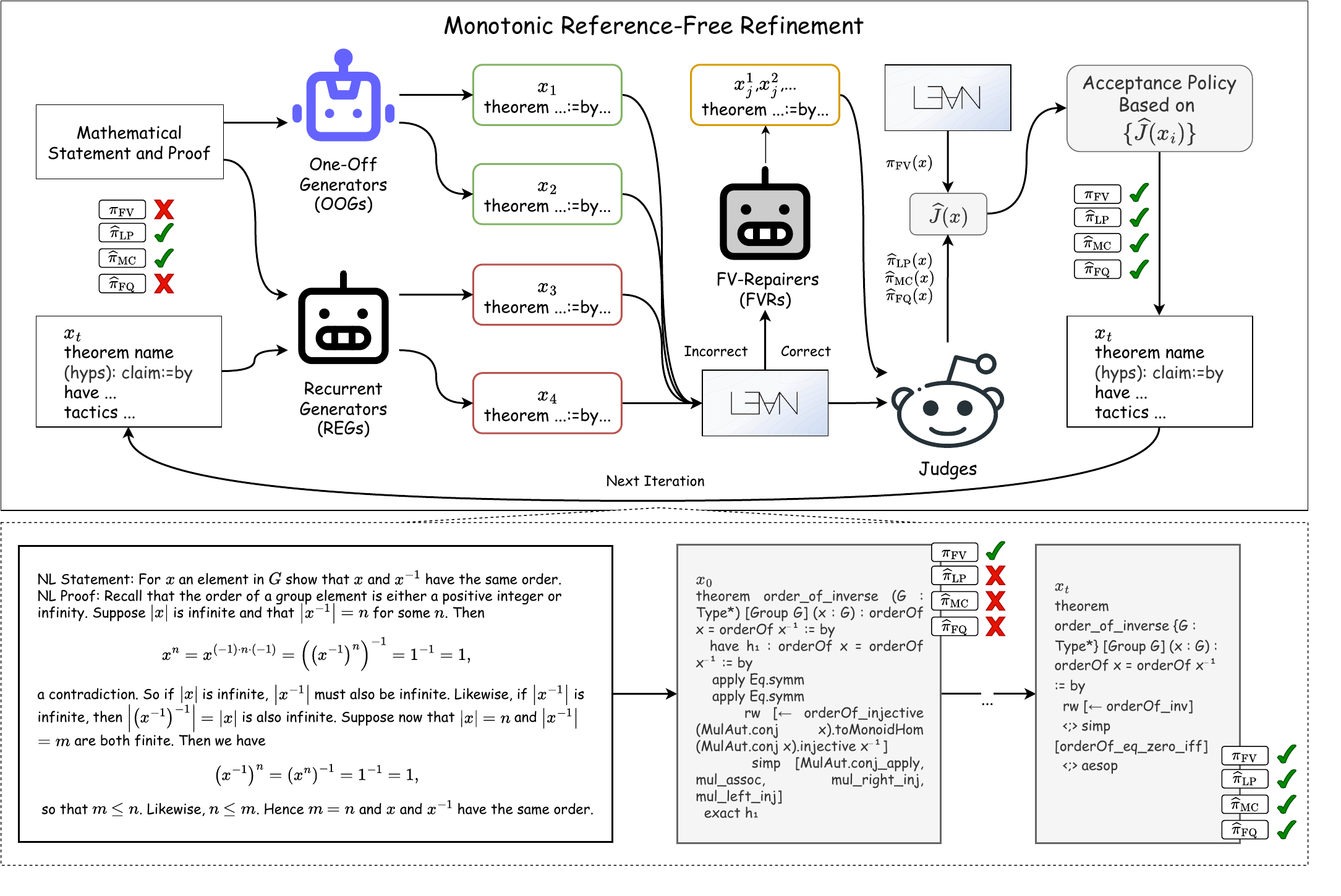}
    \caption{Schematic illustration of the monotonic process with an example. In this process, One-Off Generators (OOGs) produce formalizations from scratch, Recurrent Generators (REGs) refine the current best formalization using feedback from LLM judges, and FV-Repairers (FVRs) correct formally invalid candidates generated by OOGs and REGs. The acceptance policy retains only the formalization estimated to have the highest quality among all candidates and the previous best.}
    \label{fig:framework}
    \vspace{-1em}
\end{figure}

Recent advances in Large Language Models (LLMs) have substantially improved their ability to perform mathematical reasoning \citep{shao2024deepseekmathpushinglimitsmathematical, shao2025deepseekmathv2selfverifiablemathematicalreasoning, petrov2025proofbluffevaluatingllms, zhang2025deeptheoremadvancingllmreasoning}. However, reasoning trajectories expressed in natural-language mathematics are difficult to verify automatically, posing a major challenge for reliably evaluating instances generated through mathematical discovery~\citep{Romera-Paredes2024-pl,Naskrecki2025-uc}. Autoformalization~\citep{wu2022autoformalization, yang2025position} provides a crucial bridge between informal and formal mathematics by converting informal reasoning into transparent and mechanically verifiable formalizations. Existing autoformalization research has primarily focused on statement formalization~\citep{zhang-etal-2024-consistent,lu2024processdrivenautoformalizationlean4,li2024autoformalize,gao2025herald}, often omitting full formal proofs by relying on ``sorry'' placeholders. However, proof-level autoformalization is equally essential for faithfully verifying the correctness of natural-language mathematical reasoning and remains an underexplored yet critical direction.

Given a natural-language theorem and its accompanying proof, \textit{full-theorem autoformalization} aims to produce a complete formalization consisting of both a formal statement and a fully verified formal proof. While this task could in principle be approached by combining statement autoformalization with automated theorem proving (ATP), such a pipeline may introduce error propagation, and the two tasks are typically studied in isolation. In contrast, we formulate full-theorem autoformalization as a single, unified task in which the formal statement and the formal proof are generated jointly. 

To obtain higher-quality full-theorem formalizations, we argue that four dimensions must be considered: (i) \textit{Formal Validity} (FV), which ensures that the formal reasoning process is mechanically verified by the theorem prover; (ii) \textit{Logical Preservation} (LP) and \textit{Mathematical Consistency} (MC), which ensure semantic alignment with the original informal theorem; and (iii) \textit{Formal Quality} (FQ), which captures properties such as clarity, structure, and reusability of the resulting formalization. 

The process of full-theorem autoformalization is inherently a multi-dimensional optimization problem. However, relying on a single LLM at test time to perform full-theorem autoformalization, whether through vanilla prompting~\citep{wu2022autoformalization} or refinement-based methods~\citep{zhang-etal-2024-consistent}, offers limited potential for simultaneously improving multiple dimensions. Moreover, different LLMs used for refinement often exhibit distinct biases toward particular dimensions of formalization quality. This observation motivates the following research question: `\textit{How can we leverage the complementary strengths of different LLMs to improve full-theorem autoformalization across multiple objectives at test time?}'

 To address this research question, we propose a unified objective that accounts for all relevant dimensions, and design a \textit{reference-free}, \textit{iterative}, \textit{monotonic} optimization process at test time that yields non-decreasing improvement with high probability under theoretical guarantees. Our main contributions are summarized as follows:

\noindent 1. To the best of our knowledge, we are the first to formally define \textit{full-theorem autoformalization} and to study it using LLM-based methods. We propose a \textit{masked composite objective} for this task that jointly considers multiple quality dimensions, including a hard constraint, Formal Validity (FV), and several soft dimensions: Logical Preservation (LP), Mathematical Consistency (MC), and Formal Quality (FQ).

\noindent 2. We decompose the roles of LLMs in full-theorem autoformalization into three categories: One-Off Generators (OOG), FV-Repairers (FVR), and Recurrent Generators (REG). The FVRs and REGs can leverage feedback to refine formalizations along different quality dimensions. We further introduce a \textit{responsiveness map} that models the expected, dimension-specific improvements induced by each generator, thereby enabling sample-efficient generator selection.

\noindent 3. We propose a monotonic optimization process at test time (Figure~\ref{fig:framework}) that employs an acceptance policy aggregating signals from both theorem prover and LLM-based judges, without access to ground-truth or existing formalizations and without human involvement in selection, ensuring non-decreasing certified progress across iterations. This acceptance rule leverages a \textit{lower confidence bound} (LCB) assumption to quantify uncertainty in LLM-based judges when evaluating soft dimensions.

\noindent 4. We empirically demonstrate that specialized mathematical LLMs and general-purpose LLMs exhibit complementary strengths across different generator roles, and that our iterative framework effectively ensembles these strengths to achieve monotonic improvement under realistic evaluation noise. For full-theorem autoformalization on the miniF2F and ProofNet benchmarks, our final monotonic process achieves 100.00\% and 77.96\% formal validity, as verified by the Lean4 theorem prover, and overall scores of 90.27\% and 52.45\%, based on combined judgments from the theorem prover and GPT-4.1-mini judges, respectively.

\section{Monotonic Test-Time Optimization for Full-Theorem Autoformalization}\label{sec:mono}
\subsection{Problem Statement}
The task of \textit{full-theorem autoformalization} can be formally defined as follows: given a mathematical statement $s_\nl$ expressed in natural language and its corresponding proof $p_\nl$, the goal is to automatically obtain a formal representation $x=(s,p)$ consisting of a formal statement $s$ and a proof $p$, where $s$ must be semantically aligned with $s_\nl$, and $p$ should have a similar reasoning process to $p_\nl$ but does not necessarily follow the exactly same reasoning steps. The essence of a formalization $x=(s,p)$ with good quality can be categorized to four dimensions:

\textbf{Formal Validity (FV).} The full theorem formalization should be \textit{formally valid} such that the formal statement $s$ must be \textit{syntactically correct} and the formal proof $p$ \textit{entails} such formal statement. This formal validity can be rigorously checked by the relevant theorem prover. This is also commonly referred to as Pass rate.

\textbf{Logical Preservation (LP).} The formal theorem $s$ must capture the \textit{logical content and inferential structure} of $s_\nl$.

\textbf{Mathematical Consistency (MC).} The formal theorem $s$ must coherently and accurately represent the \textit{mathematical objects and operations} present in $s_\nl$.

\textbf{Formal Quality (FQ).} The full theorem formalization $x$ should intrinsically have high-quality in terms of structural clarity and conciseness. For instance, the full theorem formalization should contain non-redundant statements and reasoning steps.

FV, LP, MC are essential for a complete and correct formalization of a theorem. Without FV, the formalization is practically unusable. If LP or MC is lost, the formalization no longer represents the natural-language theorem, resulting in a non-target formalization even if it remains formally valid. Although FQ is not strictly necessary, it remains a desirable property, particularly in contexts where the formalization is intended for reuse, such as in the construction of formal libraries.

\paragraph{Dimension Scores.} Let $\mathcal{X}$ be the space of all possible candidate formalizations. We score each candidate $x$ by
\begin{equation}
    \pi(x)=\big(\pi_\FV(x), \pi_\LP(x), \pi_\MC(x), \pi_\FQ(x)\big)\in [0,1]^4,
\end{equation}
where $\pi_\FV(x)\in\{0,1\}$ indicates the pass status from theorem prover, and $\pi_\LP(x), \pi_\MC(x), \pi_\FQ(x)\in[0,1]$ indicate evaluations on soft dimensions. The goal of full theorem autoformalization is an optimization process to automatically select the ideal formalization $x^\star\in\mathcal{X}$ such that $\pi(x^\star)=(1,1,1,1)$.

\paragraph{Masked Composite Objective.} For the optimization process, we define an objective function:
\begin{equation}
    J_\OA(x)=\frac{1}{3}\pi_\FV(x)\big(\pi_\LP(x)+\pi_\MC(x)+\pi_\FQ(x)\big),
\label{eq:J}
\end{equation}
The multiplicative factor $\pi_\FV\in\{0,1\}$ is a \emph{mask} since it enforces FV as a hard constraint while retaining a \emph{single scalar potential} for the soft dimensions. This objective function (i) \emph{precludes} selecting invalid-but-high soft scores; (ii) simplifies acceptance since it is not required to handle penalties and slack variables separately; (iii) matches downstream utility where FV is binary and primary. For the ideal formalization, we have $J_\OA(x^\star)=1$.

\subsection{Approximation with LLM Judges}
The automatic evaluation on soft dimensions $\pi_\LP(x), \pi_\MC(x), \pi_\FQ(x)$ in $J_\OA(x)$ is inherently intractable. Previous work~\citep{zhang2025goldstandardsepistemicensemble} indicates the ability of using LLM judges for the evaluation of certain aspects of formalizations, however, LLM judges can be ambiguous and should be accepted under uncertainty. We make the following assumptions for LLM judges:

\begin{assumption}\label{asm:judges}
We assume that for each soft dimension $i\in\{\LP,\MC,\FQ\}$, the fused estimator
\begin{equation}\label{eq:judge}
    \widehat{\pi}_i(x)=\sum_{k=1}^{K_i} w_{i,k}\widehat{\pi}^{(k)}_i(x).
\end{equation}
constructed from signals $\widehat{\pi}^{(k)}_i(x)\in[0,1],\ k=1,\dots,K_i$ produced by a set of $K_i$ LLM judges, where weights satisfy $w_{i,k}>0$ and $\sum_k w_{i,k}=1$, satisfy the following:

(i) \textbf{Asymptotic stability}: After asymptotic calibration of weights and LLM judgments, for every $x$, each $\widehat{\pi}_i(x)$ is a deterministic value almost surely;

(ii) \textbf{Measurable uncertainty}: There exists a \textit{one-sided lower confidence bound} for the true $\pi_i(x)$ at uncertainty level $\delta_i\in(0,1)$, given by:
\begin{equation}
    \Pr\big(\widehat{\pi}_i(x)-m_i(\delta_i)\le\pi_i(x)\big)\ge1-\delta_i,
    \label{eq:lcb_i}
\end{equation}
where the margin term $m_i(\delta_i)>0$ absorbs calibration error, inter-judge correlation, and distributional drift.
\end{assumption}

Assumption~\ref{asm:judges} formalizes the reliability of LLM judges. Specifically, (i) asserts that their judgments are stable across comparisons, and (ii) limits the likelihood that low-quality formalizations are mistakenly evaluated favorably along soft dimensions, with such errors occurring with probability at most $\delta_i$. We provide a naive uncertainty estimation for LLM judges using Eq.~\ref{eq:lcb_i} in Appendix~\ref{app:unc}.

\begin{theorem}[Lower Confidence Bound of $J_\OA$]\label{thm:lcb}
Assuming that the evaluations of LP, MC and FQ are independent, there exists a lower confidence bound for the masked composite objective $J_\OA(x)$ given by:
\begin{equation}
    \LCB_\delta(x)=\frac{1}{3}\pi_\FV(x)\Big[\big(\widehat{\pi}_\LP(x)-m_\LP(\delta_\LP)\big) + \big(\widehat{\pi}_\MC(x)-m_\MC(\delta_\MC)\big) + \big(\widehat{\pi}_\FQ(x)-m_\FQ(\delta_\FQ)\big)\Big]
    \label{eq:lcb}
\end{equation}
with uncertainty level $\delta=1-(1-\delta_\LP)(1-\delta_\MC)(1-\delta_\FQ)$.
\end{theorem}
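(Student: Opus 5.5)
We want $\Pr\big(\LCB_\delta(x)\le J_\OA(x)\big)\ge (1-\delta_\LP)(1-\delta_\MC)(1-\delta_\FQ)=1-\delta$. The plan is to intersect the three one-sided events provided by Assumption~\ref{asm:judges}(ii), use independence to bound the probability of the intersection from below, and then show that on this intersection the inequality holds deterministically by monotonicity of the masked sum in Eq.~\eqref{eq:J}.

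First, for each soft dimension $i\in\{\LP,\MC,\FQ\}$ define the event $E_i=\{\widehat{\pi}_i(x)-m_i(\delta_i)\le \pi_i(x)\}$. By Eq.~\eqref{eq:lcb_i}, $\Pr(E_i)\ge 1-\delta_i$. The stated independence of the evaluations of LP, MC and FQ is what we need here. We read it as independence of the fused estimators $\widehat{\pi}_\LP(x),\widehat{\pi}_\MC(x),\widehat{\pi}_\FQ(x)$; the true $\pi_i(x)$ and the margins $m_i$ are fixed, and by Assumption~\ref{asm:judges}(i) any residual randomness lies in the estimators. Under this reading the events $E_\LP,E_\MC,E_\FQ$ are independent, so
\begin{equation*}
\Pr(E_\LP\cap E_\MC\cap E_\FQ)=\prod_i \Pr(E_i)\ge (1-\delta_\LP)(1-\delta_\MC)(1-\delta_\FQ)=1-\delta .
\end{equation*}

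Next we argue deterministically on $E=E_\LP\cap E_\MC\cap E_\FQ$. Summing the three inequalities gives $\sum_i(\widehat{\pi}_i(x)-m_i(\delta_i))\le \sum_i \pi_i(x)$. Since $\pi_\FV(x)\in\{0,1\}$ is exactly computed by the theorem prover and is nonnegative, multiplying by $\tfrac13\pi_\FV(x)$ preserves the inequality. This yields $\LCB_\delta(x)\le J_\OA(x)$ on $E$. If $\pi_\FV(x)=0$, both sides are $0$ and the bound holds trivially, so the case split is harmless. Hence $\Pr(\LCB_\delta(x)\le J_\OA(x))\ge\Pr(E)\ge 1-\delta$, which is the claim.

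The only delicate step is the use of independence to turn the three marginal guarantees into the product bound. Everything else is monotonicity. If independence were unavailable, I would fall back on a union bound, giving the weaker level $\delta_\LP+\delta_\MC+\delta_\FQ$. It is worth remarking that $1-\prod_i(1-\delta_i)\le\sum_i\delta_i$, so the independence assumption is precisely what buys the sharper level stated in the theorem.
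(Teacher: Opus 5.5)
Your proof is correct and takes essentially the same route as the paper. The paper also bounds $\Pr\big(\LCB_\delta(x)\le J_\OA(x)\big)$ from below by the probability of the intersection of the three one-sided events (using $\pi_\FV(x)\in\{0,1\}$), then factors that probability by independence to obtain $(1-\delta_\LP)(1-\delta_\MC)(1-\delta_\FQ)=1-\delta$, phrasing the deterministic step as a chain of event inclusions rather than arguing pointwise on $E$ (and your appeal to Assumption~\ref{asm:judges}(i) to locate the randomness is unnecessary, since that clause actually says the fused estimators are almost surely deterministic and independence is simply taken as a hypothesis).
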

\begin{proof}
    See Appendix~\ref{app:lcb}.
\end{proof}

Theorem~\ref{thm:lcb} establishes a lower confidence bound for the masked composite objective, enabling its use in an LCB-based optimization framework.

\subsection{The Monotonic Optimization Process}
We propose a \textit{monotonic} process $\{x_t\}$ that gradually generates candidate formalizations and selects the one most likely to achieve the highest quality.

\subsubsection{Candidate Generation}
Let $\mathcal{C}_t\subseteq\mathcal{X}$ denote the set of candidate formalizations obtained at step $t$. Maintaining diversity within $\mathcal{C}_t$ is crucial for covering the space of possible formalizations as comprehensively as possible. We define $\mathcal{G}=\{G_1,\dots,G_M\}$ as the set of generators to sample candidate formalizations from $\mathcal{X}$. We classify generators into three categories:

\paragraph{One-Off Generator (OOG).} The one-off generator $G$ assigns a fixed distribution over samples conditioned on $(s_\nl,p_\nl)$, generating outputs according to $x\sim p_G(x\mid s_\nl,p_\nl)$.

\paragraph{FV-Repairer (FVR).} Given that the objective places the highest priority on formal validity, the FV-Repairer $G$ is designed to correct instances in which $\pi_{\FV}(x)=0$, such that $\Pr\Big(\pi_{\FV}\big(G(x)\big)=1\Big)>0$.

\paragraph{Recurrent Generator (REG).} The recurrent generator $G$ assigns a fixed distribution over samples conditioned on $(s_\nl,p_\nl)$ and a previous candidate formalization $x_{t-1}$, generating outputs according to $x_t\sim p_G(x\mid s_\nl,p_\nl,x_{t-1})$.

In practice, the one-off generator can rely on few-shot prompting~\citep{wu2022autoformalization} or retrieval-augmented generation~\citep{zhang-etal-2025-autoformalization, liu2025rethinking}. The FV-repairer may be implemented using either LLM-based approaches~\citep{zhang-etal-2024-consistent} or rule-based systems~\citep{zhang-etal-2025-autoformalization}. The recurrent generator can be instantiated via refinement-based methods~\citep{zhang-etal-2025-masa}. When using prompt-based refinement methods either in FVR settings with feedback from theorem prover, or REG settings with feedback from LLM judges, we propose a method to model each generator’s dimension-wise response to the feedback.

\begin{definition}[Responsiveness Map]
For generator $G_m$ which leverages LLM with feedback $\mathrm{FB}$, we define the \textit{local responsiveness} for dimension $i\in\{\FV,\LP,\MC,\FQ\}$ at state $x$ as
\begin{equation}
    \rho_i(G_m\mid x)=\mathbb{E}_{x'\sim \mathrm{LLM}(x;\mathrm{FB}(x))}\big[\pi_i(x')-\pi_i(x)\big].
\end{equation}
\end{definition}

The responsiveness map captures the expected local gain for each dimension. For the FV dimension, this quantity is equivalent to the probability that the generator fixes formal validity when it is configured as the FV-Repairer. When $G_m$ proposes $B_m^{(t)}$ candidates $x_t^{(m,b)}$ at each time step $t$, assuming that the used LLM judges are unbiased towards the generator, the estimation of responsiveness of generator $G_m$ at step $t$ for $i\in\{\LP,\MC,\FQ\}$ could be achieved by aggregated differences: $\widehat{\rho}_i(G_m\mid x_t)=\frac{1}{B_m^{(t)}}\sum_{b=1}^{B_m^{(t)}}\big(\widehat{\pi}_i(x_t^{(m,b)})-\widehat{\pi}_i(x_t)\big)$ with some variance introduced by LLM judges, and for $\FV$, $\widehat{\rho}_\FV(G_m\mid x_t)=\frac{1}{B_m^{(t)}}\sum_{b=1}^{B_m^{(t)}}\big(\pi_\FV(x_t^{(m,b)})-\pi_\FV(x_t)\big)$. For sample efficiency, the proposal budgets should be set proportionally to generator $G_m$ with better responsiveness maps.

An example of sampling candidate formalizations from different generators is provided in Algorithm~\ref{alg:sample} in Appendix~\ref{app:alg}. We also provide a concrete process of how to determine the generator roles for a set of candidate LLMs in Appendix~\ref{app:role}.




\subsubsection{The Acceptance Policy}
We first define a plug-in estimator which indicates the overall performance as:
\begin{equation}\label{eq:est}
    \widehat{J}(x)=\frac{1}{3}\max\big\{ \pi_\FV(x),\epsilon\big\}\big(\widehat{\pi}_\LP(x)+\widehat{\pi}_\MC(x)+\widehat{\pi}_\FQ(x)\big)    
\end{equation}
where $\epsilon>0$ is a sufficiently small number such that $\epsilon<\min(\{m_i(\delta_i)\}\cup\{w_{i,k}\})$. Assumption~\ref{asm:judges} provides a theoretical guarantee that $\widehat{J}(x)$ is asymptotically stable and can therefore be used for reliable comparison. We propose the following acceptance policy:

\paragraph{Conservative Acceptance Policy.} Assume $\widehat{J}(x_0)<1$ almost surely. We define the best candidate as:
\begin{equation}
    x_{t+1}^\star=\arg\max_{x\in\mathcal{C}_t}\widehat{J}(x).
\end{equation}
The acceptance rule at step $t+1$ is designed as:
\begin{equation}\label{eq:acc}
    x_{t+1}=
    \begin{cases}
    x_{t+1}^\star, & \text{if almost surely } \widehat{J}(x_{t+1}^\star)>\widehat{J}(x_t)\\
    x_t,  & \text{else}
    \end{cases}
\end{equation}

The process applying this acceptance rule is illustrated in Algorithm~\ref{alg:mono} in Appendix~\ref{app:alg}. Such a process $\{x_t\}$ is certified to exhibit monotonicity by the following theorem:
\begin{theorem}[Certified Monotonicity]\label{thm:mono}
For all $t$, we have $J_\OA(x_{t+1})\ge J_\OA(x_t)$ or almost surely $\LCB_\delta(x_{t+1})>\LCB_\delta(x_t)$.
\end{theorem}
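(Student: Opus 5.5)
The argument is a case split on which branch of the acceptance rule \eqref{eq:acc} fires at step $t+1$. In the rejection branch $x_{t+1}=x_t$, so $J_\OA(x_{t+1})=J_\OA(x_t)$ and the first disjunct holds trivially. Everything therefore reduces to the acceptance branch. There $x_{t+1}=x^\star_{t+1}$ and, almost surely, $\widehat{J}(x_{t+1})>\widehat{J}(x_t)$. Assumption~\ref{asm:judges}(i) makes every $\widehat{\pi}_i(\cdot)$ deterministic almost surely, so this strict inequality holds between fixed numbers. The task is to translate it into the strict inequality $\LCB_\delta(x_{t+1})>\LCB_\delta(x_t)$ between the bounds of Theorem~\ref{thm:lcb}.

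To compare $\widehat{J}$ with $\LCB_\delta$, I write $S(x)=\widehat{\pi}_\LP(x)+\widehat{\pi}_\MC(x)+\widehat{\pi}_\FQ(x)$ and $M=m_\LP+m_\MC+m_\FQ$. Then
\[
\widehat{J}(x)=\tfrac13\max\{\pi_\FV(x),\epsilon\}\,S(x), \qquad \LCB_\delta(x)=\tfrac13\pi_\FV(x)\,\big(S(x)-M\big).
\]
I would split into subcases by the FV status of $x_t$ and $x_{t+1}$.

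\emph{Both valid} ($\pi_\FV=1$ for both). Here $\widehat{J}=S/3$, so $S(x_{t+1})>S(x_t)$. Subtracting the common margin $M$ gives $\LCB_\delta(x_{t+1})>\LCB_\delta(x_t)$.

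\emph{$x_t$ invalid, $x_{t+1}$ valid.} Now $\LCB_\delta(x_t)=0$, so I need $S(x_{t+1})>M$, which I would like to derive from $S(x_{t+1})>\epsilon S(x_t)$. This is the delicate step. The bare inequality only gives $S(x_{t+1})>\epsilon S(x_t)$, and the margin $M$ could exceed $S(x_{t+1})$.

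For this subcase I would first try to argue via the first disjunct instead. Here $J_\OA(x_t)=0$, since $J_\OA$ carries the mask $\pi_\FV(x_t)=0$. Also $J_\OA(x_{t+1})\ge 0$. Hence $J_\OA(x_{t+1})\ge J_\OA(x_t)$ holds outright, and the first disjunct covers this subcase without any bound on $S(x_{t+1})$.

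\emph{$x_{t+1}$ invalid.} If $x_t$ is also invalid, then $J_\OA(x_t)=J_\OA(x_{t+1})=0$ and the first disjunct holds. If $x_t$ is valid and $x_{t+1}$ is invalid, I must show this cannot be accepted. Acceptance would require $\epsilon S(x_{t+1})>S(x_t)$. The condition $\epsilon<\min\{w_{i,k}\}$ together with $S\le 3$ is what rules this out. I expect this step to be the main obstacle. It needs some lower bound on $S(x_t)$ for a valid $x_t$: for instance, that some judge gives a nonzero score, so that $S(x_t)\ge \min w_{i,k}\cdot(\text{smallest nonzero judge score})$. Alternatively it needs an inductive invariant that a valid incumbent is never displaced by an invalid candidate. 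I would try to make this precise using the choice of $\epsilon$; failing that, I would note the degenerate case $S(x_t)=0$ separately. With these subcases covered, the disjunction holds for every $t$.
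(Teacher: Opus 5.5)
Your proposal has a genuine gap in the subcase where $x_t$ is valid and the accepted $x_{t+1}=x^\star_{t+1}$ is invalid. You set out to show this transition cannot be accepted, but that is false, so neither the lower bound on $S(x_t)$ nor the invariant you suggest can be established. If $S(x_t)=0$, which is possible for instance at $t=0$ since $x_0$ is arbitrary, then any invalid candidate with $S(x^\star_{t+1})>0$ has strictly larger $\widehat{J}$ and is accepted. The same holds for any small positive $S(x_t)$ once $\epsilon S(x^\star_{t+1})>S(x_t)$. Even with binary OAP judgments and equal weights, $\epsilon<\min w_{i,k}$ does not prevent it: take $S(x_t)=\min w_{i,k}$ and $S(x^\star_{t+1})=3$, which is accepted whenever $\epsilon>\tfrac13\min w_{i,k}$. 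In this case the first disjunct can genuinely fail, because $J_\OA(x_{t+1})=0$ while the true $J_\OA(x_t)$ may be positive. Setting the degenerate case aside therefore leaves the theorem unproved exactly where it is least trivial.

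The missing idea is to use the second disjunct here, together with the other half of the condition on $\epsilon$, namely $\epsilon<\min_i m_i(\delta_i)$, which you never invoke. The mask gives $\LCB_\delta(x_{t+1})=0$. Acceptance gives $\tfrac13 S(x_t)=\widehat{J}(x_t)<\widehat{J}(x^\star_{t+1})=\tfrac13\epsilon S(x^\star_{t+1})\le\epsilon$. Hence $\LCB_\delta(x_t)=\tfrac13\big(S(x_t)-M\big)<\epsilon-\tfrac13 M<0$, since each $m_i(\delta_i)>\epsilon$ forces $\tfrac13 M>\epsilon$. So $\LCB_\delta(x_{t+1})>\LCB_\delta(x_t)$ almost surely: a valid incumbent can be displaced by an invalid candidate only when its own lower bound is already negative. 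This is how the paper closes that case. Your other subcases (rejection, $x_t$ invalid, both valid) match the paper's argument.
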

\begin{proof}
    See Appendix~\ref{app:mono}.
\end{proof}
Theorem~\ref{thm:mono} indicates that, at each step of the process, either the true objective does not decrease or its lower confidence bound increases. Hence, the process guarantees non-negative progress, thereby consistently promoting improvement in autoformalization.

\begin{theorem}\label{thm:con}
For the monotonic process $\{x_t\}$ defined above, we have:

(i) (Convergence) $\{\LCB_\delta(x_t)\}$ converges almost surely; 

(ii) (Process Terminability) If there exists $x\in\bigcup_t\mathcal{C}_t$ such that $\widehat{J}(x)=1$ almost surely, then almost surely there exists an optimal candidate $x^*\in\bigcup_t\mathcal{C}_t$ such that $\exists t_0, \forall t>t_0, x_t=x^*$ and $\Pr\Big(1-\frac{1}{3}\big(m_\LP(\delta_\LP)+m_\MC(\delta_\MC)+m_\FQ(\delta_\FQ)\big)\le J_\OA(x^*)\Big)\ge1-\delta$.
\end{theorem}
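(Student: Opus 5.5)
For part (i), I will show that $\{\LCB_\delta(x_t)\}$ is almost surely non-decreasing and bounded, and then apply the monotone convergence theorem for real sequences pathwise. Boundedness is immediate from Eq.~\ref{eq:lcb}: each $\widehat{\pi}_i\in[0,1]$ and $\pi_\FV\in\{0,1\}$, so $\LCB_\delta(x)\in[-\frac13\sum_i m_i(\delta_i),\,1]$.

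For monotonicity I will use the acceptance rule Eq.~\ref{eq:acc} together with Assumption~\ref{asm:judges}(i), which makes $\widehat{J}$ almost surely deterministic. The argument splits into two cases.
\begin{itemize}
\item If the rule rejects, then $x_{t+1}=x_t$ and the LCB is unchanged.
\item If it accepts, then almost surely $\widehat{J}(x_{t+1})>\widehat{J}(x_t)$, and this must be transferred to $\LCB_\delta$. I will split by FV status:
\begin{itemize}
\item If both candidates are FV-valid, then $\widehat{J}$ and $\LCB_\delta$ differ by the same additive constant $\frac13\sum_i m_i$, so the strict inequality carries over.
\item If $x_t$ is invalid, then $\LCB_\delta(x_t)=0$, while $x_{t+1}$ valid gives $\LCB_\delta(x_{t+1})\ge0$ only under an extra argument. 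I therefore expect this comparison to be the delicate step and would instead reuse the disjunction of Theorem~\ref{thm:mono}.
\item If $x_{t+1}$ is invalid but $x_t$ valid, then $\widehat{J}(x_{t+1})\le\epsilon<\widehat{J}(x_t)$ whenever $x_t$ has nonzero soft scores. The choice $\epsilon<\min\{w_{i,k}\}$ is presumably what rules out this case.
\end{itemize}
\end{itemize}
Concretely, I will take from Theorem~\ref{thm:mono} the fact that accepted moves do not decrease $\LCB_\delta$ almost surely. That gives a non-decreasing bounded sequence on a probability-one event, which is obtained as a countable intersection over $t$, and hence convergence.

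For part (ii), suppose some $\tilde x\in\bigcup_t\mathcal{C}_t$ has $\widehat{J}(\tilde x)=1$ almost surely, say $\tilde x\in\mathcal{C}_{t_1}$. At step $t_1+1$ the argmax $x^\star_{t_1+1}$ has $\widehat{J}\ge1$. Since $\widehat{J}\le1$ always, it equals $1$. Because $\widehat{J}(x_0)<1$ and $\widehat{J}$ along the chain is non-decreasing, either the chain already sits at a point with $\widehat{J}=1$ or it strictly increases to one. Either way some $x^*$ with $\widehat{J}(x^*)=1$ is accepted by time $t_1+1$. Afterwards no candidate can have $\widehat{J}>1$, so the strict-improvement condition in Eq.~\ref{eq:acc} never fires again and $x_t=x^*$ for all $t>t_0:=t_1+1$.

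It remains to show the quality bound on $x^*$. From $\widehat{J}(x^*)=1$ I need $\pi_\FV(x^*)=1$: if $\pi_\FV(x^*)=0$, then $\widehat{J}(x^*)\le\epsilon<1$, a contradiction. I also need $\widehat{\pi}_\LP=\widehat{\pi}_\MC=\widehat{\pi}_\FQ=1$, which holds since each is at most $1$ and their mean is $1$. Substituting into Eq.~\ref{eq:lcb} gives
\[
\LCB_\delta(x^*)=1-\tfrac13\bigl(m_\LP+m_\MC+m_\FQ\bigr).
\]
Theorem~\ref{thm:lcb} then yields $\Pr(\LCB_\delta(x^*)\le J_\OA(x^*))\ge1-\delta$, which is the claim.

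The main obstacle is the LCB monotonicity in (i) across an FV transition. I will rely on Theorem~\ref{thm:mono} and the choice of $\epsilon$ for this, and if needed weaken the statement to convergence of $\widehat{J}(x_t)$, which is clearly monotone, plus eventual FV-validity.
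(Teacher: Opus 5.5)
\textbf{Part (ii)} is sound and follows the paper's argument. The first candidate with $\widehat{J}=1$ is accepted, and no strict improvement is possible afterwards. Then $\widehat{J}(x^*)=1$ forces $\pi_\FV(x^*)=1$, so $\LCB_\delta(x^*)=1-\frac{1}{3}\sum_i m_i(\delta_i)$, and Theorem~\ref{thm:lcb} finishes.

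\textbf{Part (i) has a genuine gap.} Your step ``take from Theorem~\ref{thm:mono} the fact that accepted moves do not decrease $\LCB_\delta$'' misreads that theorem. It is a disjunction, and in the case $\pi_\FV(x_t)=0$ it only gives $J_\OA(x_{t+1})\ge 0=J_\OA(x_t)$, which says nothing about the LCB. The LCB really can drop at such a step. For invalid $x_t$ we have $\LCB_\delta(x_t)=0$, while an accepted valid $x_{t+1}$ with $\widehat{J}(x_{t+1})<\frac{1}{3}\sum_i m_i(\delta_i)$ has $\LCB_\delta(x_{t+1})<0$. This is the typical situation at the first valid acceptance, since Algorithm~\ref{alg:mono} starts from an invalid $x_0$. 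So $\{\LCB_\delta(x_t)\}$ is not non-decreasing from the start, and the ``extra argument'' you hope for in that bullet does not exist. You also misassign the role of $\epsilon$ in the other transition. Valid$\to$invalid is not ruled out; it can happen when $\widehat{J}(x_t)\le\epsilon$. It is harmless, though: then $\LCB_\delta(x_t)<\epsilon-\frac{1}{3}\sum_i m_i(\delta_i)<0=\LCB_\delta(x_{t+1})$, using $\epsilon<\min_i m_i(\delta_i)$, not $\epsilon<\min\{w_{i,k}\}$.

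What the paper proves instead is \emph{eventual} monotonicity, by showing that $\pi_\FV(x_t)$ is eventually constant. The key invariant is: if $\pi_\FV(x_t)=1$ and $\widehat{J}(x_t)>\epsilon$, then the same holds at $t+1$. This is because the acceptance rule keeps $\widehat{J}$ non-decreasing and every invalid candidate has $\widehat{J}\le\epsilon$. An accepted invalid$\to$valid move lands at $\widehat{J}(x_{t+1})>\widehat{J}(x_t)\ge 0$. The paper then uses the choice $\epsilon<\min\{w_{i,k}\}$ to conclude $\widehat{J}(x_{t+1})>\epsilon$; this relies implicitly on discrete judge verdicts, so that a positive fused score is bounded below in terms of the weights. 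Hence there are two tail cases:
\begin{itemize}
\item After the first invalid$\to$valid move, the chain stays valid forever, and $\LCB_\delta(x_t)=\widehat{J}(x_t)-\frac{1}{3}\sum_i m_i(\delta_i)$ is non-decreasing on the tail.
\item If the chain is invalid at some time and never afterwards moves to a valid candidate, $\LCB_\delta(x_t)$ is eventually $0$.
\end{itemize}
Monotone convergence applied to the tail then gives (i).

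Your fallback (monotone $\widehat{J}$ plus ``eventual FV-validity'') points in this direction. What is needed, however, is eventual \emph{constancy} of $\pi_\FV(x_t)$, not validity, since the chain may remain invalid forever. That lemma, which rules out infinitely many FV flips while $\widehat{J}\le\epsilon$, is exactly what your proposal lacks. Once supplied, it proves the statement as written rather than a weakened one.
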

\begin{proof}
    See Appendix~\ref{app:con}.
\end{proof}

Theorem~\ref{thm:con} implies that after a sufficiently large number of steps, the change in the lower confidence bound becomes incremental, and the bound tends to stabilize. Moreover, if the plug-in estimator prioritizes a candidate (e.g., an ideal formalization), the process selects a strong candidate whose true objective value exceeds a certain threshold with probability at least $1-\delta$.

\subsection{Summary of Choice Coherence}
Our methodological design of forming a process for formalizations with better quality pursues the following goals:

\textbf{Priority on Feasibility:} The objective must not favor unusable formalizations (i.e., failing FV);

\textbf{Consideration of Alignment and Quality:} The objective should highlight formalization which reflects the theorem's original meaning (LP) and mathematics (MC), and has better code quality to be included in formal libraries (FQ);

\textbf{Complementarity of LLMs and TPs:} The approach should amplify strengths from different types of LLMs and theorem provers;

\textbf{Sample Efficiency:} The usage of responsiveness map ensures that complementary generators are sampled in regions where they contribute most to the components of the composite objective;

\textbf{Monotonicity in the Process:} Each accepted step is non-decreasing under certified criteria;

\textbf{Robustness to Noise:} Since LLM judges can be biased and heteroscedastic, the final results should be guaranteed while hold under uncertainty.

Together, these choices yield a \textit{prover-anchored, judge-informed, responsiveness-aware} hill-climber whose steps are \textbf{safe} (preserving FV) and \textbf{measurable} (LCB-certified) with convergence to a Pareto stationary set under mild reachability conditions.

\begin{figure}[!t]
    \centering
    \begin{subfigure}{0.32\textwidth} 
      \centering
      \includegraphics[width=\textwidth]{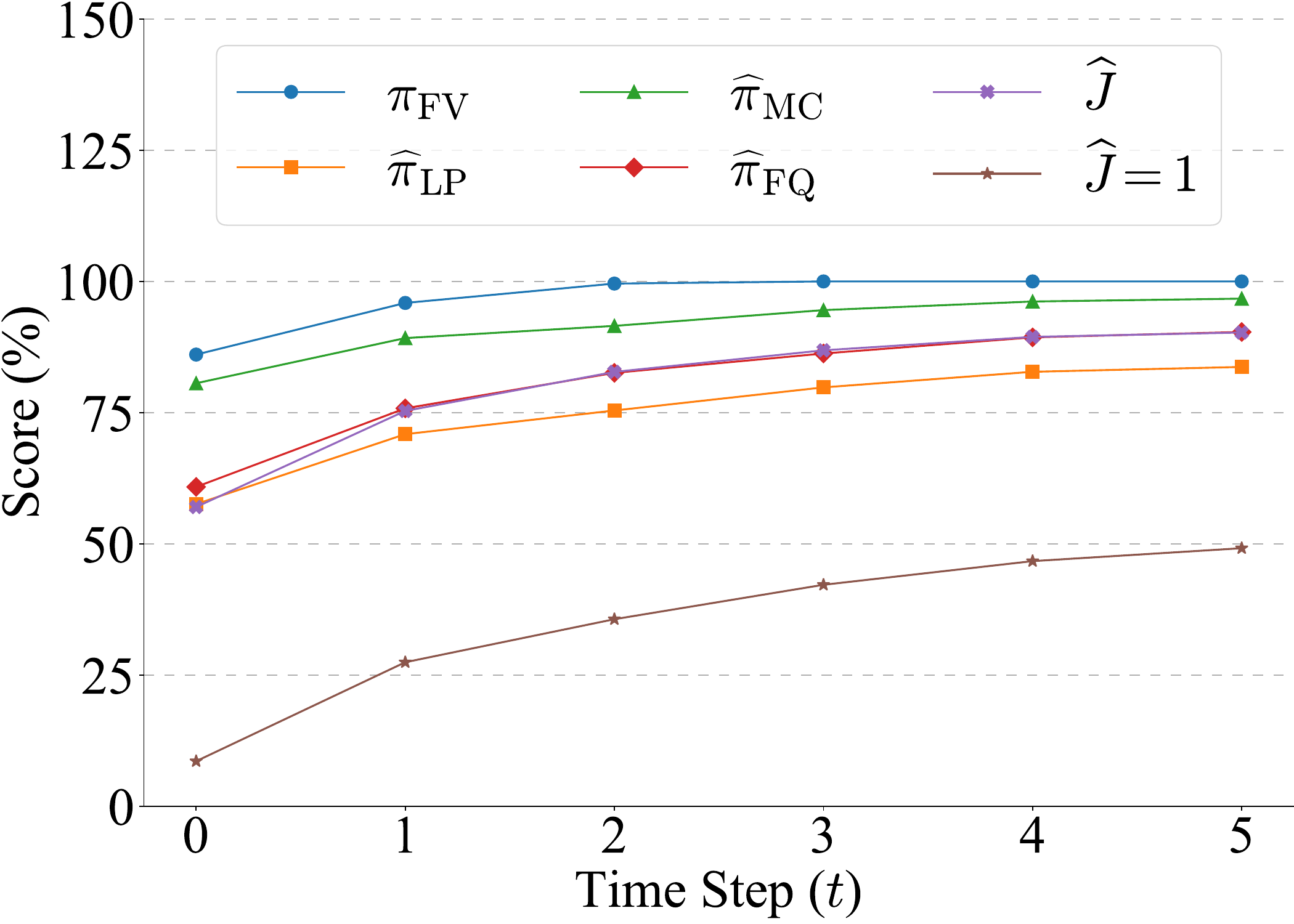}
      \caption{Mono-miniF2F (GPT-Eval)}
      \label{fig:mono_1}
    \end{subfigure}
    \begin{subfigure}{0.32\textwidth} 
      \centering
      \includegraphics[width=\textwidth]{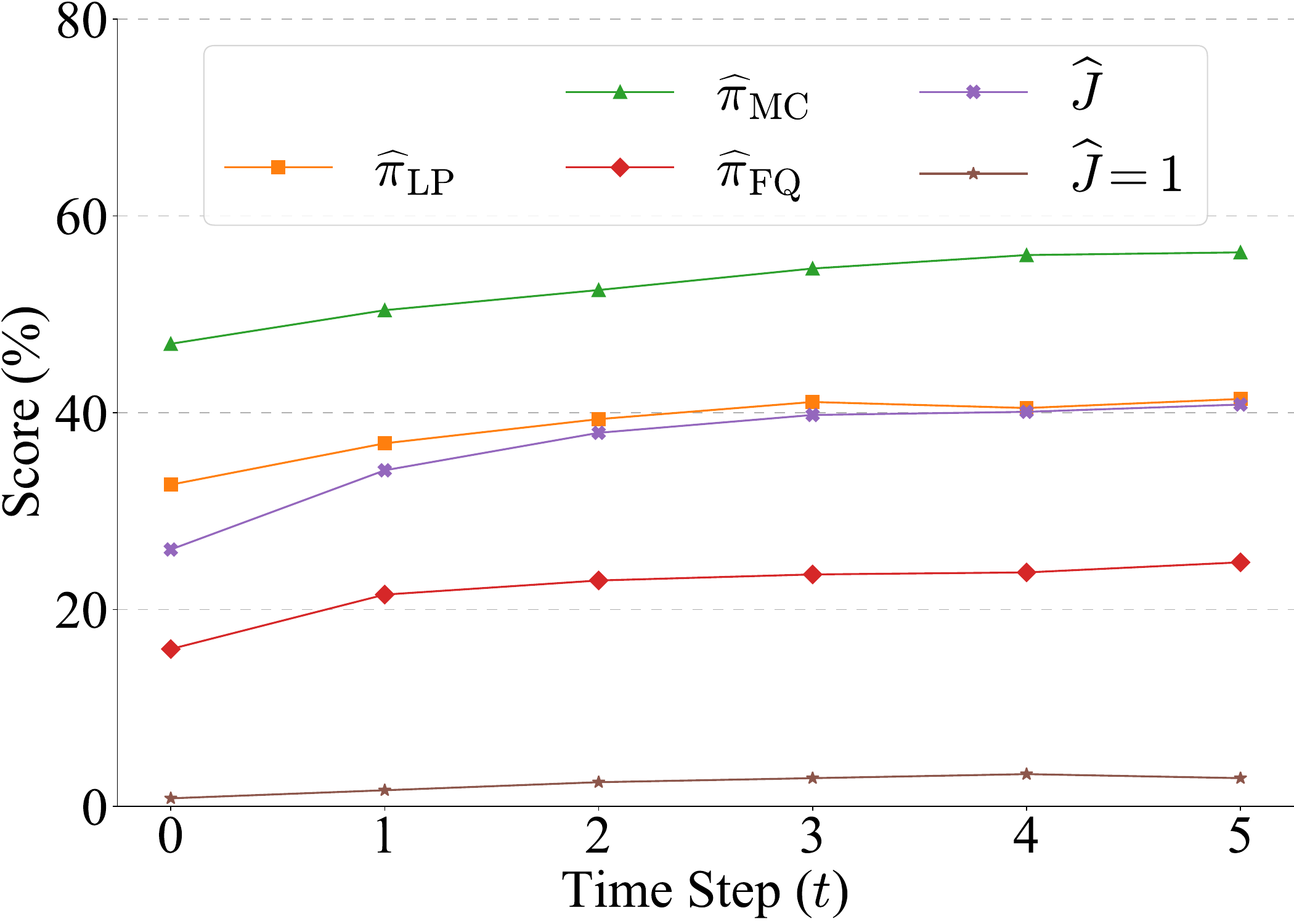}
      \caption{Mono-miniF2F (Qwen-Eval)}
      \label{fig:mono_1_qwen}
    \end{subfigure}
    \begin{subfigure}{0.32\textwidth} 
      \centering
      \includegraphics[width=\textwidth]{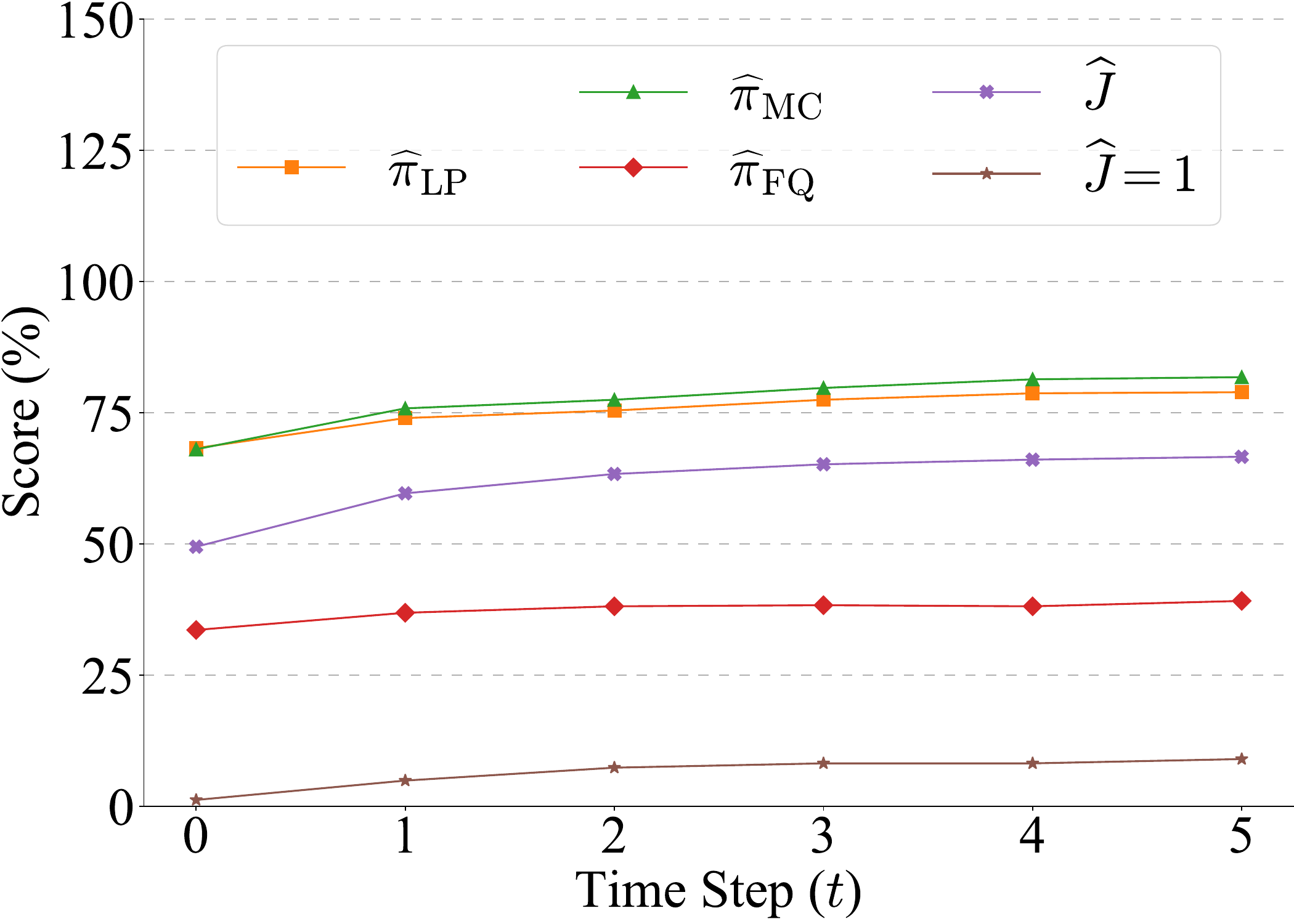}
      \caption{Mono-miniF2F (Human-Eval)}
      \label{fig:mono_1_human}
    \end{subfigure}
    \begin{subfigure}{0.32\textwidth} 
      \centering
      \includegraphics[width=\textwidth]{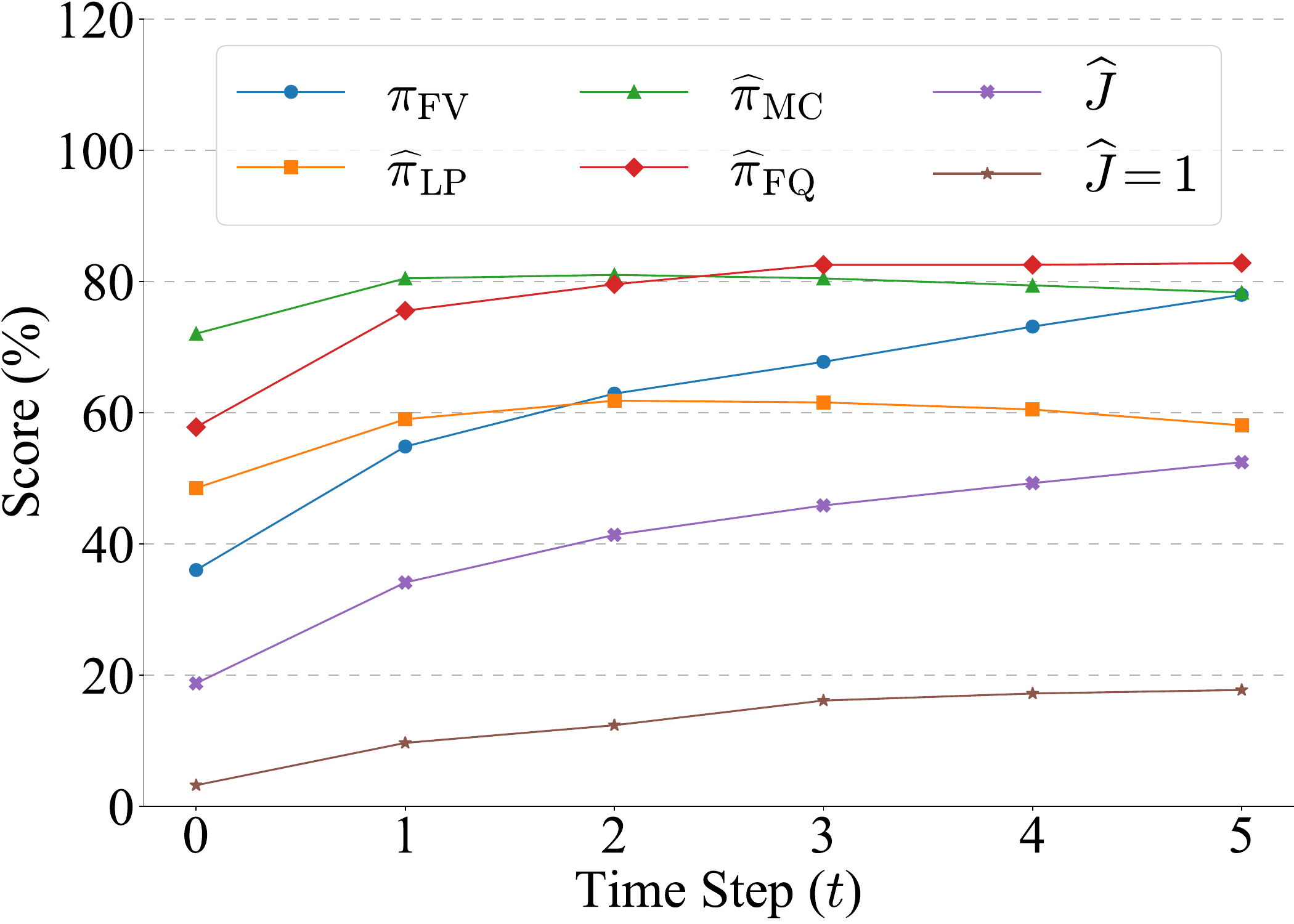}
      \caption{Mono-ProofNet (GPT-Eval)}
      \label{fig:mono_2}
    \end{subfigure}
    \begin{subfigure}{0.33\textwidth} 
      \centering
      \includegraphics[width=\textwidth]{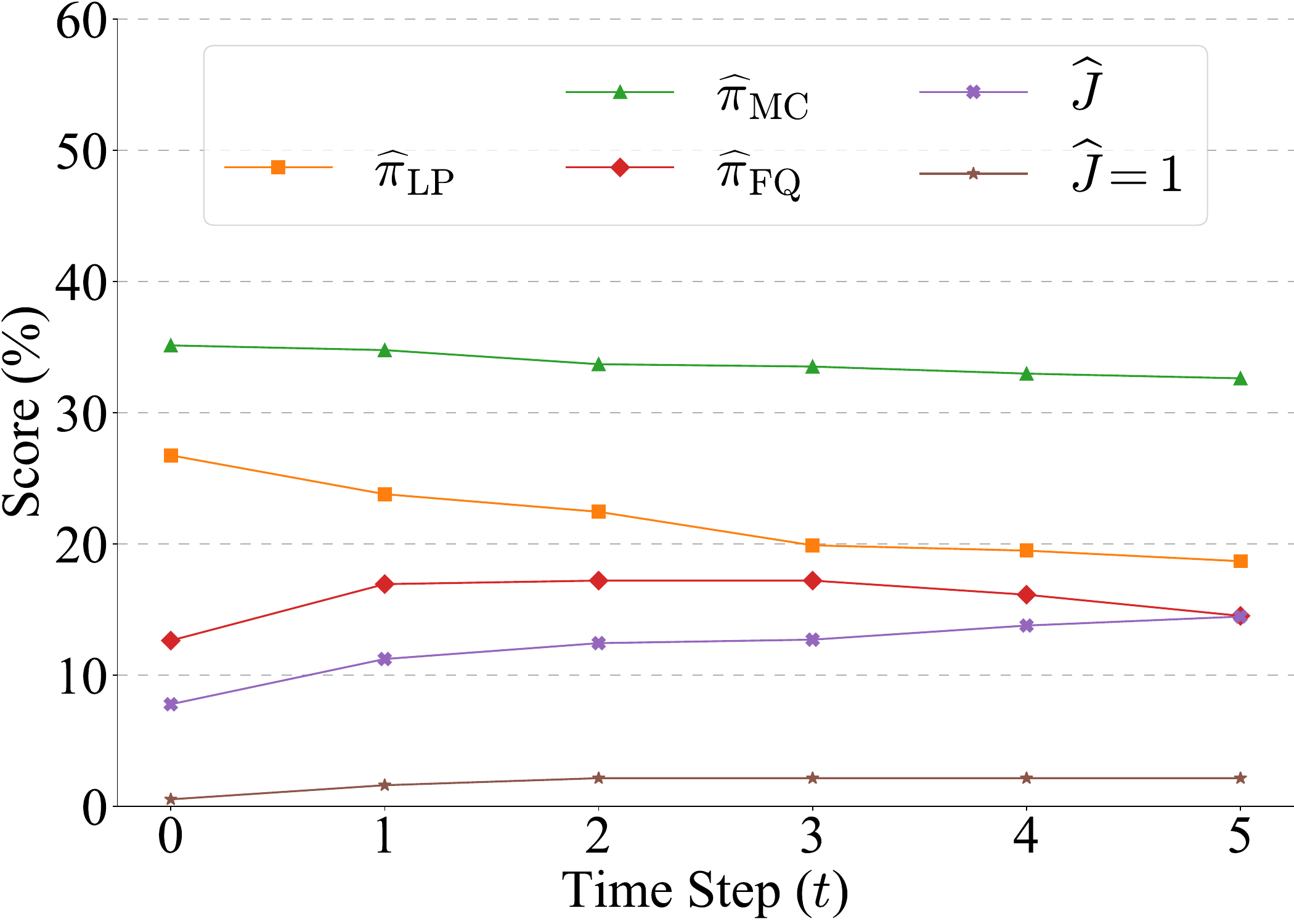}
      \caption{Mono-ProofNet (Qwen-Eval)}
      \label{fig:mono_2_qwen}
    \end{subfigure}
    \begin{subfigure}{0.33\textwidth} 
      \centering
      \includegraphics[width=\textwidth]{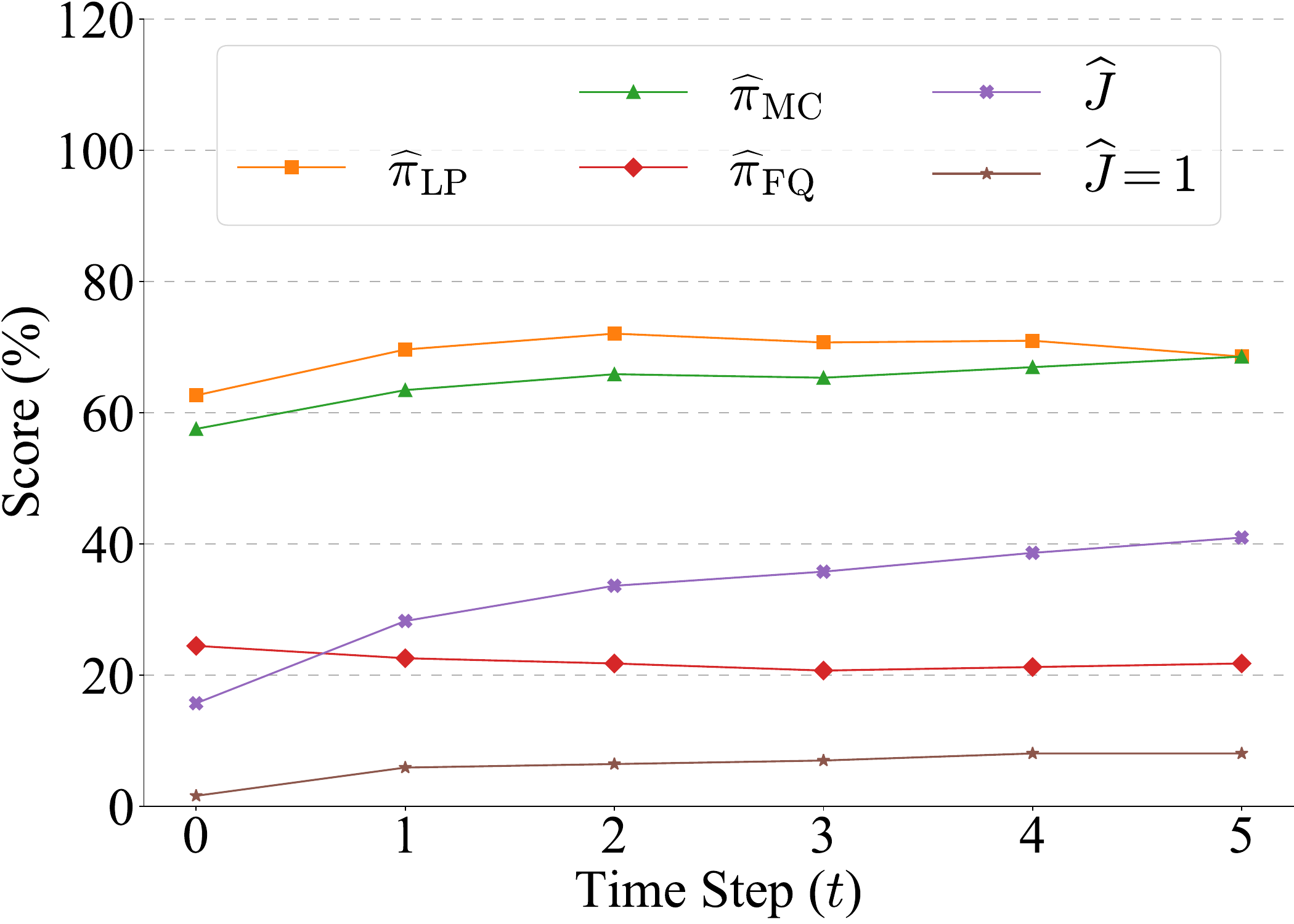}
      \caption{Mono-ProofNet (Human-Eval)}
      \label{fig:mono_2_human}
    \end{subfigure}
    \caption{Performance of the monotonic process using an ensemble of LLMs acting different generator roles. The evaluators of soft-dimensions includes: (\textbf{GPT-Eval}): GPT-4.1-mini judges; (\textbf{Qwen-Eval}): Qwen2.5-Coder-7B judges; (\textbf{Human-Eval}): one human expert.}
    \label{fig:mono}
\end{figure}

\section{The Monotonic Process}\label{sec:exp}
\paragraph{General Setup.} Lean4~\citep{lean} (version v4.19.0-rc3 with REPL) is chosen as the formal language for all experiments. We evaluate our approach on miniF2F-Test~\citep{zheng2022miniff,jiang2023draft} and ProofNet-Test~\citep{azerbayev2023proofnetautoformalizingformallyproving}, as both benchmarks provide natural-language proofs. DeepSeek-Prover-V2-7B~\citep{ren2025deepseekproverv2advancingformalmathematical}, Goedel-Prover-V2-8B~\citep{lin2025goedelproverv2scalingformaltheorem}, Seed-Coder-8B~\citep{seed2025seedcoderletcodemodel}, Qwen2.5-7B~\citep{qwen2025qwen25technicalreport}, and GPT series~\citep{openai2024gpt4,openai2025gpt5} are selected as candidate LLMs for our experiments. The description and rationale behind the selection of benchmarks and models are provided in Appendix~\ref{app:rat}. On the evaluation front, we use the formal validity score $\pi_\FV$ from the theorem prover, soft-dimension scores $\widehat{\pi}_\LP$, $\widehat{\pi}_\MC$, and $\widehat{\pi}_\FQ$ from LLM judges with the OAP settings~\citep{zhang2025goldstandardsepistemicensemble}, and their aggregation $\widehat{J}$ with $\epsilon=0.001$ as metrics. To complement the results with a stricter evaluation, we use $\widehat{J}=1$ to measure the percentage of perfect formalizations where all dimensions receive score 1. Unless otherwise specified, GPT-4.1-mini serves as the backend for the judges.

We employ Algorithm~\ref{alg:sample} in Appendix~\ref{app:alg} to construct candidate formalizations at each step of the monotonic optimization process. DeepSeek-Prover-7B and Goedel-Prover-8B are selected as the One-Off Generators and FV Repairers. For Recurrent Generators, we consider refinement using feedback on individual soft dimensions from LLM judges (REG-LP, REG-MC, and REG-FQ). Specifically, we use Seed-Coder-8B with REG-LP, Qwen2.5-7B with REG-LP, and DeepSeek-Prover-7B with REG-FQ in the monotonic process. Further discussion of this setup is provided in Appendix~\ref{app:role}.

We run the monotonic optimization process for six iterations on both miniF2F and ProofNet. To better support the reliability of the evaluation, we also use Qwen2.5-Coder-7B as alternative LLM judges and conduct human evaluations. Performance at each iteration is illustrated in Figure~\ref{fig:mono} with detailed numbers provided in Table~\ref{tab:mono} in Appendix~\ref{app:add}.

\subsection{Main Results}
\textbf{The monotonic process guarantees non-decreasing progress and converges within a few steps.} In Figure~\ref{fig:mono}, across both datasets, the overall evaluation scores consistently improve as the process proceeds. On miniF2F, the gains become relatively incremental after the fourth time step, indicating that the process tends to converge. At the final time step, the monotonic process achieves formal validity of 100.00\%, an overall score of 90.27\%, and 49.18\% perfect formalizations on miniF2F (Figure~\ref{fig:mono_1}). On the much harder benchmark ProofNet, the monotonic process leads to 77.96\% formal validity, 52.45\% overall score, and 17.74\% perfect formalizations (Figure~\ref{fig:mono_2}).

\textbf{Different evaluation dimensions exhibit distinct trends throughout the monotonic process.} On ProofNet, while formal validity improves across all steps, some soft-dimensions tend to stabilize or fluctuate after the first three iterations on both datasets. However, this discrepancy still leads to a gradual increase in the overall objective, as validated by human evaluation, suggesting that the process produces a set of auto-formalizations with progressively higher quality. Such improvements indicate the potential of this approach to support mathematical discovery and to generate high-quality formalization data for further training of existing models.

\textbf{Monotonic improvement is not solely an artifact of using the same LLM judges for both acceptance and evaluation.} The monotonic process continues to exhibit a near non-decreasing trend and convergence when evaluated with LLM judges from a different model family (Figure~\ref{fig:mono_1_qwen}, \ref{fig:mono_2_qwen}).  While bias may arise from LLM-based evaluation, this observation suggests that the improvements are not purely driven by the preferences of a particular judge, but instead reflect a more consistent improvement behavior of the proposed monotonic process. Human evaluation provides further evidence supporting the monotonic improvement of the proposed process (Figure~\ref{fig:mono_1_human}, \ref{fig:mono_2_human}).

\begin{figure}[!t]
    \centering
    \begin{subfigure}{0.24\textwidth} 
      \centering
      \includegraphics[width=\textwidth]{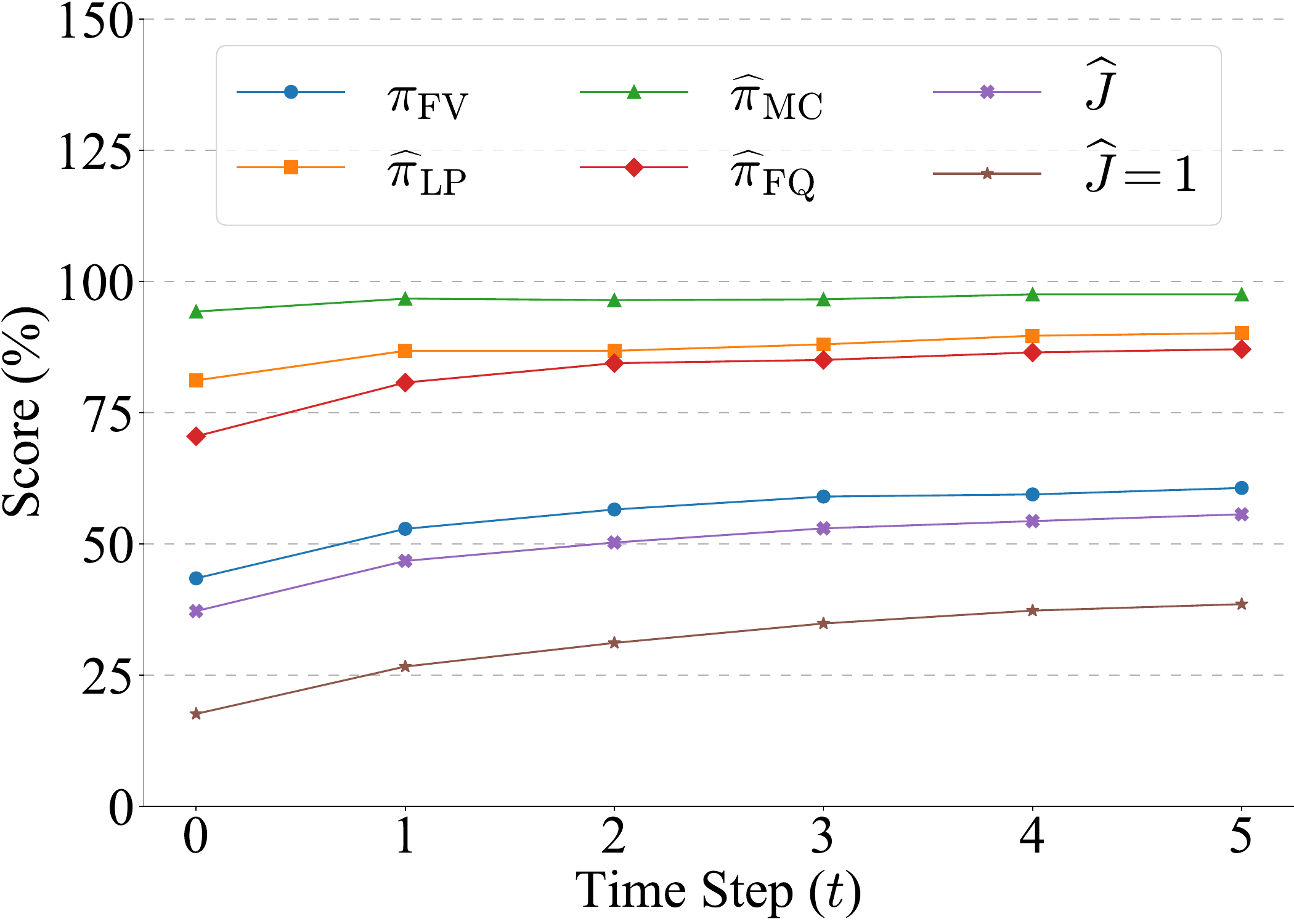}
      \caption{miniF2F (GPT-Eval)}
      \label{fig:mono_gpt_1}
    \end{subfigure}
    \begin{subfigure}{0.24\textwidth} 
      \centering
      \includegraphics[width=\textwidth]{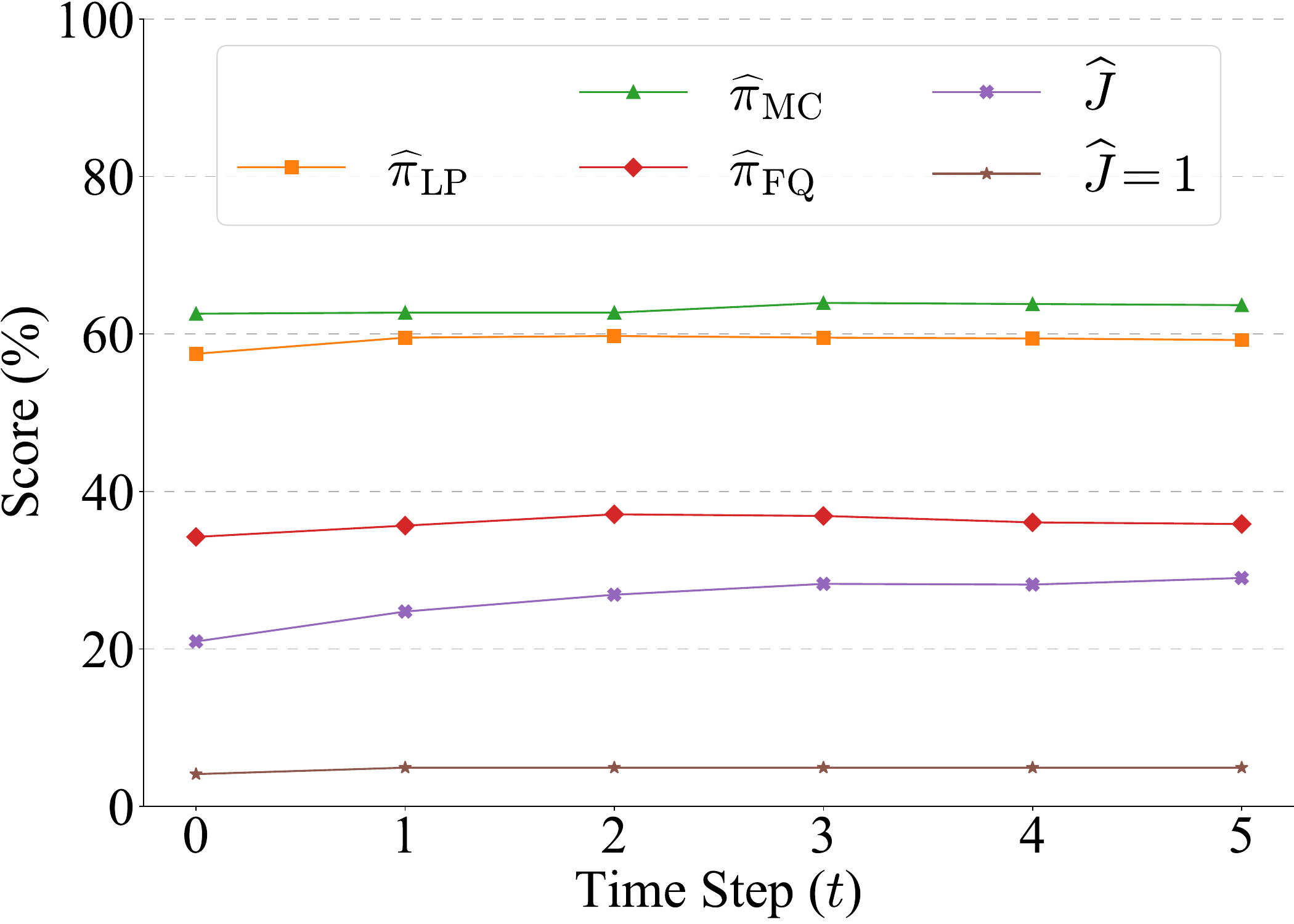}
      \caption{miniF2F (Qwen-Eval)}
      \label{fig:mono_gpt_1_qwen}
    \end{subfigure}
    \begin{subfigure}{0.24\textwidth} 
      \centering
      \includegraphics[width=\textwidth]{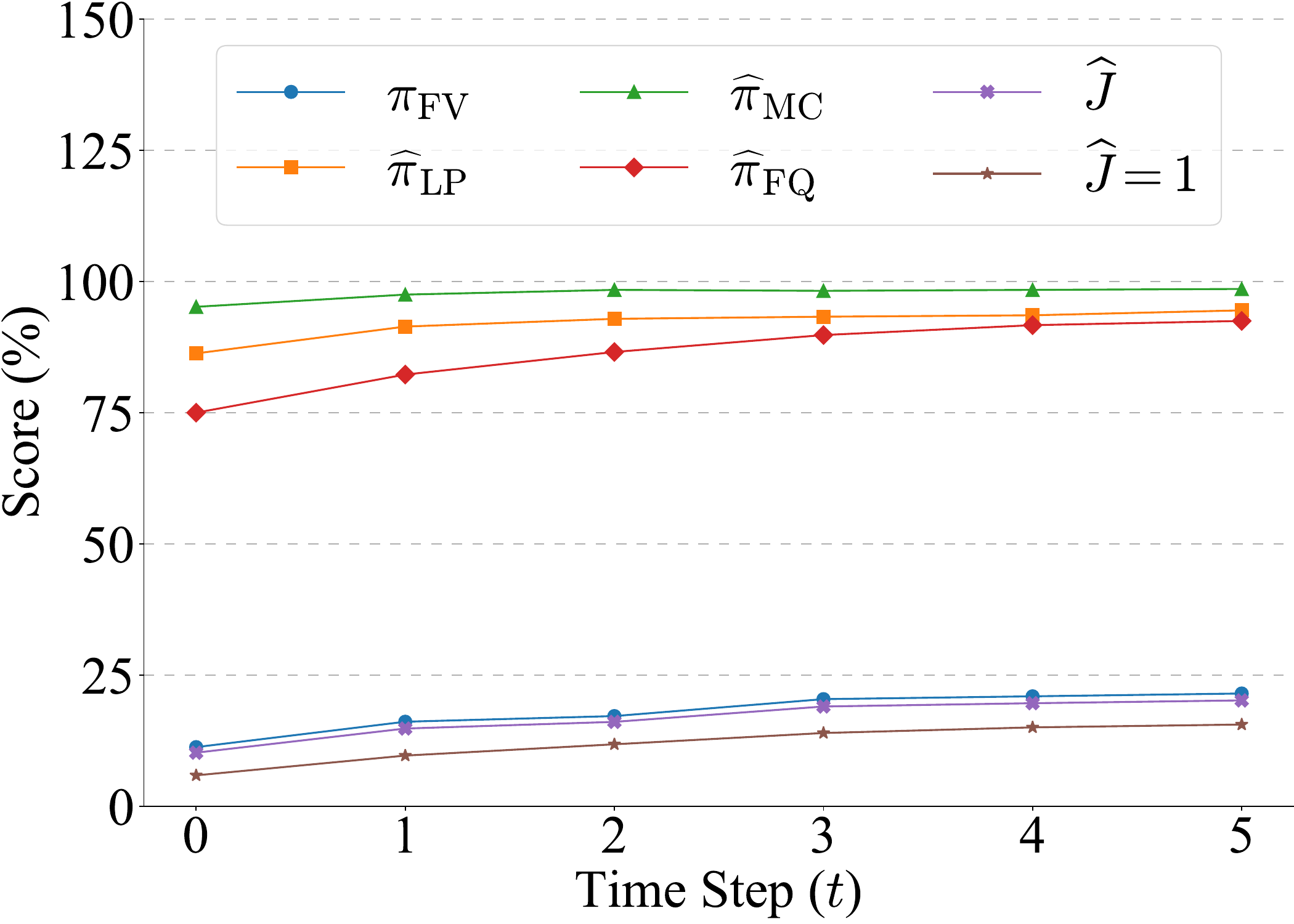}
      \caption{ProofNet (GPT-Eval)}
      \label{fig:mono_gpt_2}
    \end{subfigure}
    \begin{subfigure}{0.24\textwidth} 
      \centering
      \includegraphics[width=\textwidth]{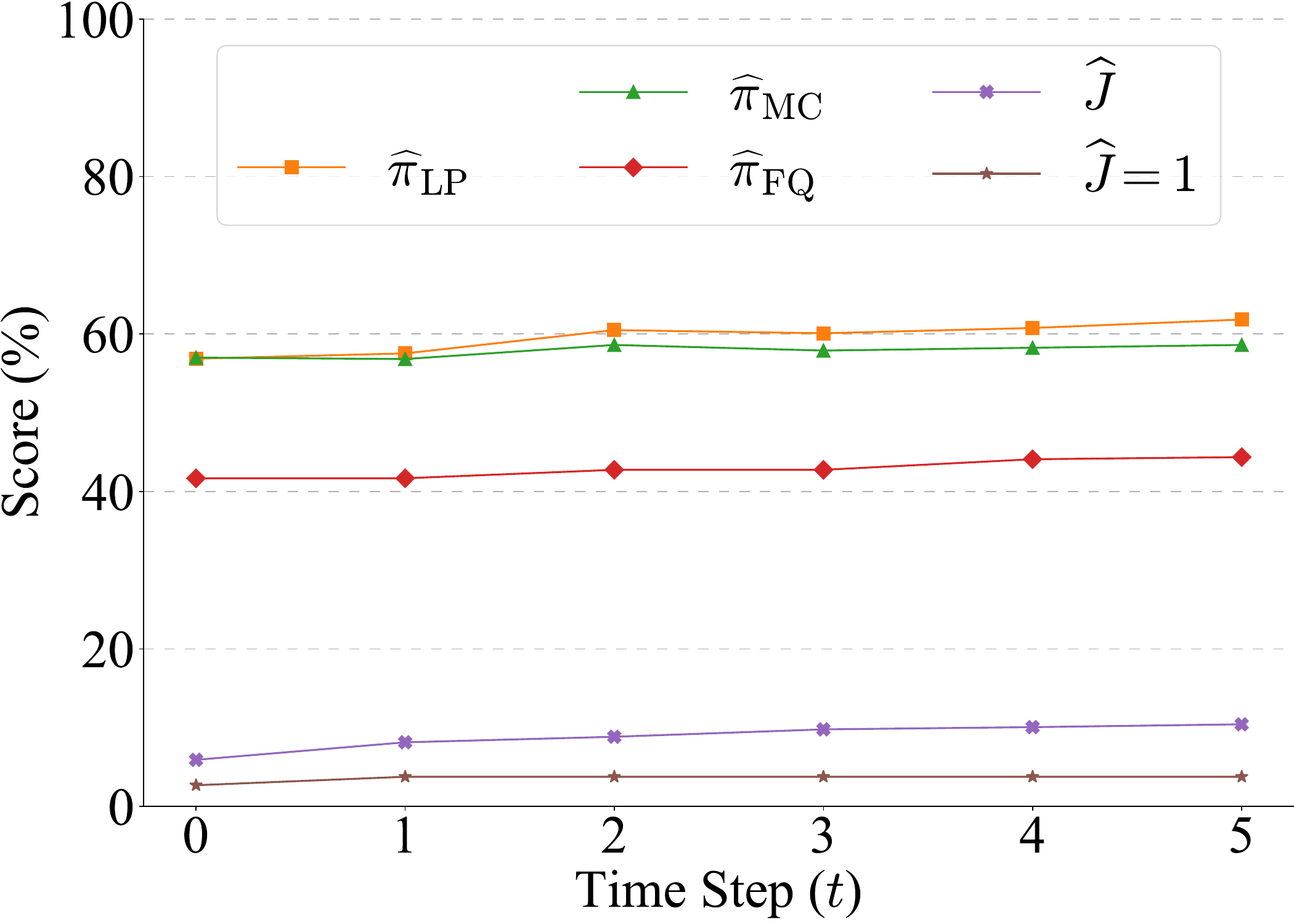}
      \caption{ProofNet (Qwen-Eval)}
      \label{fig:mono_gpt_2_qwen}
    \end{subfigure}
    \caption{Performance of the monotonic process with only GPT-5.4 as OOG and FVR.}
    \label{fig:mono_gpt}
\end{figure}

\subsection{Ablation Studies}
We design two ablation studies: (i) using only a single strong LLM (GPT-5.4) as both the OOG and the FVR in the monotonic process; and (ii) an iterative self-refinement (ISR) process on three independent LLMs (DeepSeek-Prover-7B, Goedel-Prover-8B, GPT-5.4), with error feedback from the theorem prover and no acceptance policy. The results of (i) are reported in Figure~\ref{fig:mono_gpt}, while the results of (ii) are provided in Table~\ref{tab:isr} in Appendix~\ref{app:add}.

\textbf{The gain from the monotonic process is not inherently dependent on an ensemble of multiple LLMs, however, an ensemble of smaller LLMs can outperform a single larger model.} As shown in Figures~\ref{fig:mono_gpt_1}, \ref{fig:mono_gpt_2}, applying the monotonic process with GPT-5.4 yields consistent improvements across all evaluated dimensions. These improvements are further validated by external judges (Figures~\ref{fig:mono_gpt_1_qwen}, \ref{fig:mono_gpt_2_qwen}). These results collectively demonstrate that even when using a single LLM, the proposed monotonic process can produce consistent iterative improvements, highlighting its efficiency and generalizability. However, both formal validity and overall performance are notably lower when relying on a single large LLM compared to using multiple LLMs assigned to distinct roles. This finding further supports our design choice of role specialization within the monotonic process. 

\textbf{Monotonic improvement does not arise from simply running models with feedback from a theorem prover multiple times.} In the ISR process with DeepSeek-Prover-7B and Goedel-Prover-8B, the scores fluctuate across iterations, and no clear trend of consistent improvement is observed. While the ISR process with GPT-5.4 exhibits some degree of monotonicity in formal validity and overall scores, this improvement remains weaker than that of the monotonic process with GPT-5.4, as shown in Figure~\ref{fig:mono_gpt}.

\subsection{Comparison with Other Methods}
\begin{table}[!t]
  \caption{Performance of different methods. Best-of-ISR selects the best formalization based on $\widehat{J}$ across six iterations for each sample. Best-of-$k$ ($k=8$) selects the best formalization based on $\widehat{J}$ from $k$ outputs of a single model. All numbers are reported as percentages. \textbf{Bold} and \underline{Underline} mark the highest and second-highest scores.}
  \tiny
  \centering
  \begin{tabular}{l c c c c c c @{\hspace{2em}} c c c c c c}
    \toprule
    & \multicolumn{6}{c}{miniF2F-Test} & \multicolumn{6}{c}{ProofNet-Test}\\
    \mycdashline{2-13}
    $t$ & $\pi_\FV$ & $\widehat{\pi}_\LP$ & $\widehat{\pi}_\MC$  & $\widehat{\pi}_\FQ$ & $\widehat{J}$ & $\widehat{J}=1$ & $\pi_\FV$ & $\widehat{\pi}_\LP$ & $\widehat{\pi}_\MC$  & $\widehat{\pi}_\FQ$ & $\widehat{J}$ & $\widehat{J}=1$\\
    \midrule
    \multicolumn{3}{l}{\textit{DeepSeek-Prover-V2-7B}}\\
    \mycdashline{1-13}
    Best-of-ISR & 60.66 & 58.40 & 86.07 & 63.73 & 44.74 & 11.89 & 18.28 & 50.94 & 78.85 & 55.11 & 12.24 & 3.76\\
    Best-of-$k$ & 88.11 & 72.34 & 90.85 & 69.88 & \underline{68.78} & 21.72 & 43.01 & 57.26 & 83.69 & 66.94 & 25.86 & 6.99\\
    \midrule
    \multicolumn{3}{l}{\textit{Goedel-Prover-V2-8B}}\\
    \mycdashline{1-13}
    Best-of-ISR & 82.38 & 51.33 & 80.19 & 68.44 & 57.49 & 6.15 & 36.56 & 37.23 & 63.98 & 64.25 & 19.94 & 3.23\\
    Best-of-$k$ & \underline{90.57} & 62.50 & 86.34 & 68.44 & 66.16 & 9.43 & \underline{54.84} & 47.04 & 71.15 & 61.02 & \underline{27.27} & 3.76\\
    \midrule
    \multicolumn{3}{l}{\textit{GPT-5.4}}\\
    \mycdashline{1-13}
    Best-of-ISR & 56.97 & 87.60 & 97.27 & 85.45 & 51.88 & 32.79 & 20.43 & \underline{95.43} & \textbf{98.92} & \underline{91.94} & 19.54 & 14.52\\
    Best-of-$k$ & 49.59 & \textbf{90.37} & \underline{97.40} & \underline{88.93} & 45.24 & 32.79 & 15.59 & \textbf{96.91} & 98.57 & 91.40 & 14.46 & 10.22\\
    \midrule
    \multicolumn{3}{l}{\textit{Monotonic Process (Ours)}}\\
    \mycdashline{1-13}
    Single (GPT-5.4) & 60.66 & \underline{90.16} & \textbf{97.54} & 87.09 & 55.64 & \underline{38.52} & 21.51 & 94.49 & \underline{98.57} & \textbf{92.47} & 20.19 & \underline{15.59}\\
    Ensemble & \textbf{100.00} & 83.71 & 96.72 & \textbf{90.37} & \textbf{90.27} & \textbf{49.18} & \textbf{77.96} & 58.06 & 78.32 & 82.80 & \textbf{52.45} & \textbf{17.74}\\
    \bottomrule
  \end{tabular}
  \label{tab:com}
\end{table}

We present a comparative evaluation of different full-theorem generation methods, using a policy that selects, for each sample, the formalization with the highest $\widehat{J}$. We consider two generation baselines with DeepSeek-Prover-7B, Goedel-Prover-8B, and GPT-5.4: iterative self-refinement (ISR) and sampling $k$ independent outputs from a single model (analogous to Pass@$k$). The results are reported in Table~\ref{tab:com}.

\textbf{Best-of-$k$ is more effective for models with less capability, whereas Best-of-ISR is more effective for those with large capability.} For smaller or weaker base models such as DeepSeek-Prover-7B and Goedel-Prover-8B, Best-of-$k$ significantly outperforms Best-of-ISR in terms of the final score. In contrast, for stronger models such as GPT-5.4, which have shown explicit capability for refinement, Best-of-ISR performs better. This suggests that sampling diversity combined with post-hoc selection is more effective than ISR when the underlying model has limited reasoning capability, whereas ISR benefits more capable models.

\textbf{The monotonic process demonstrates superiority for full-theorem autoformalization.} Even in the single-model configuration, our proposed monotonic process yields robust and consistent improvements across multiple metrics compared to both Best-of-ISR and Best-of-$k$. Most importantly, the monotonic process with an ensemble of smaller LLMs achieves the strongest or comparable performance across nearly all metrics. It attains the highest $\pi_{\FV}$ and substantially improves $\widehat{J}$, outperforming both Best-of-$k$ and Best-of-ISR across all settings by a large margin. It also achieves the best $\widehat{J}=1$ scores, indicating improved rates of perfect autoformalization.

\textbf{Computational cost vs performance gain.} We report the computational costs of different settings, measured by the number of LLM calls, in Table~\ref{tab:cost} (Appendix~\ref{app:add}). In the single-model setting, although the monotonic process requires more generator calls, it uses fewer judge calls and achieves higher performance, especially in producing high-quality formalizations (as indicated by $\widehat{J}=1$). The monotonic process with an ensemble of LLMs incurs substantially higher computational cost, however, it relies on smaller LLMs, and the cost decreases progressively over iterations. Given its superior performance, we consider this a reasonable trade-off between computational cost and performance.

\section{Related Work}

\paragraph{Autoformalization} Autoformalization refers to the automatic translation of natural-language mathematics into formal mathematics~\citep{wu2022autoformalization,yang2025position,mensfelt2025commonframeworkautoformalization}. Most research in this area has focused on statement formalization, including theorem autoformalization~\citep{zhang-etal-2024-consistent,lu2024processdrivenautoformalizationlean4,li2024autoformalize,gao2025herald} and definition autoformalization~\citep{zhang-etal-2025-autoformalization}. On the proof autoformalization side, some works develop pipelines for generating formal proofs~\citep{ying2024lean,gao2025herald}, however, the quality of the generated proofs is often unclear, and such outputs are mainly used for fine-tuning. Other approaches leverage natural-language proof reasoning steps within automated theorem-proving pipelines to improve formal reasoning~\citep{jiang2023draft,tarrach2024more,liu2025bootstrapping}. While integrating natural-language reasoning can increase proof success rates, it also presents an opportunity for full-theorem autoformalization, directly translating theorems and proofs from natural language into fully formalized statements. This paper focuses specifically on this scope.

\paragraph{Retrieval-Augmented Autoformalization.} Retrieval-Augmented Generation (RAG)~\citep{lewis2020rag} has shown improvements in code generation~\citep{lu-etal-2022-reacc, zhang-etal-2023-refsql}. In formal mathematics, \citet{yang2023leandojo} trained a retrieval-augmented language model for premise selection and theorem proving in Lean. In autoformalization, \citet{zhang-etal-2024-consistent} use a BM25 retriever to find similar examples from existing libraries in Isabelle and inject them into prompts. Similarly, \citet{liu2025rethinking,wang2025improvingautoformalizationusingdirect} retrieve potentially dependent objects from formal libraries to augment statement autoformalization in Lean, and \citet{zhang2025driftdecomposeretrieveillustrate} decomposes informal statements into smaller components and retrieves relevant premises from Mathlib. All of these methods rely on content from formal mathematical libraries as references. In contrast, our proposed monotonic process operates in a reference-free manner, requiring no retrieval from existing formal libraries to guide generation.

\paragraph{Test-Time Optimization.} Chain-of-Thoughts~\citep{wei2022chain} highlights the importance of studying optimization at test time for LLMs. Building on this idea, token-level methods such as Tree-of-Thoughts~\citep{yao2023tree} focus on optimizing intermediate tokens to produce higher-quality completions, whereas response-level methods such as Self-Consistency~\citep{wang2023selfconsistency} aggregate multiple LLM outputs to improve overall performance. In the domain of autoformalization, \citet{li2024autoformalize} propose a scoring framework that selects the best candidate from multiple autoformalizations using complementary self-consistency measures. Multi-agent frameworks~\citep{zhang-etal-2025-masa} similarly organize outputs from multiple LLMs to collectively optimize results. Our framework extends these ideas by employing both theorem provers and LLM-based judges to iteratively select high-quality autoformalization candidates, resulting in a monotonic improvement process at test time.

\section{Conclusion}
We present an iterative monotonic framework for full-theorem autoformalization that combines a masked multi-dimensional objective, role-specialized LLMs guided by a responsiveness map and a principled acceptance policy. The framework provides certified monotonic improvement under realistic judge noise and is supported by theoretical guarantees. Experiments on miniF2F and ProofNet demonstrate that our approach effectively integrates the complementary strengths of general-purpose and specialized LLMs, achieving high formal validity while improving overall formalization quality. This work offers a practical and theoretically grounded step toward more reliable autoformalization, with future directions including the design of more effective test-time LLM allocation policies.

\bibliography{custom}
\bibliographystyle{unsrtnat}


\appendix
\section{Proof of Theorem~\ref{thm:lcb}}\label{app:lcb}
\begin{proof}
Recall:
\begin{equation*}
    J_\OA(x)=\frac{1}{3}\pi_\FV(x)\big(\pi_\LP(x)+\pi_\MC(x)+\pi_\FQ(x)\big)\Big)
\end{equation*}
\begin{equation*}
    \LCB_\delta(x)=\frac{1}{3}\pi_{\FV}(x)\Big[\big(\widehat{\pi}_\LP(x)-m_\LP(\delta_\LP)\big) + \big(\widehat{\pi}_\MC(x)-m_\MC(\delta_\MC)\big) + \big(\widehat{\pi}_\FQ(x)-m_\FQ(\delta_\FQ)\big)\Big]
\end{equation*}
\begin{equation*}
    \delta=1-(1-\delta_\LP)(1-\delta_\MC)(1-\delta_\FQ)    
\end{equation*}

We have:
\begin{equation}
    \begin{split}
    &\Pr\Big(\LCB_\delta(x)\le J_\OA(x)\Big) \\
    &\quad\ge \Pr\Big(\big(\widehat{\pi}_\LP(x)-m_\LP(\delta_\LP)\big) + \big(\widehat{\pi}_\MC(x)-m_\MC(\delta_\MC)\big) + \big(\widehat{\pi}_\FQ(x)-m_\FQ(\delta_\FQ)\big)\\
    &\qquad\qquad\le \pi_\LP(x)+\pi_\MC(x)+\pi_\FQ(x)\Big)\\
    &\quad\ge \Pr\Big(\widehat{\pi}_\LP(x)-m_\LP(\delta_\LP) \le\pi_\LP(x),\\
    &\qquad\qquad\big(\widehat{\pi}_\MC(x)-m_\MC(\delta_\MC)\big) + \big(\widehat{\pi}_\FQ(x)-m_\FQ(\delta_\FQ)\big)\le \pi_\MC(x)+\pi_\FQ(x)\Big)\\
    &\quad\ge\Pr\Big(\widehat{\pi}_\LP(x)-m_\LP(\delta_\LP) \le \pi_\LP(x), \widehat{\pi}_\MC(x)-m_\MC(\delta_\MC)\le\pi_\MC(x),\\
    &\qquad\qquad\ \widehat{\pi}_\FQ(x)-m_\FQ(\delta_\FQ)\le \pi_\FQ(x)\Big)
    \end{split}
    \label{eq:lcb_1}
\end{equation}

Using the assumption of independence and Eq.~\ref{eq:lcb_i}, we have:
\begin{equation}
    \begin{split}
    &\Pr\Big(\widehat{\pi}_\LP(x)-m_\LP(\delta_\LP) \le \pi_\LP(x),\widehat{\pi}_\MC(x)-m_\MC(\delta_\MC)\le \pi_\MC(x),\\
    &\qquad\widehat{\pi}_\FQ(x)-m_\FQ(\delta_\FQ) \le \pi_\FQ(x)\Big)\\
    &=\Pr\Big(\widehat{\pi}_\LP(x)-m_\LP(\delta_\LP) \le \pi_\LP(x)\Big)*\Pr\Big(\widehat{\pi}_\MC(x)-m_\MC(\delta_\MC)\le \pi_\MC(x)\Big)*\\
    &\qquad\Pr\Big(\widehat{\pi}_\FQ(x)-m_\FQ(\delta_\FQ) \le \pi_\FQ(x)\Big)\\
    &\ge(1-\delta_\LP)(1-\delta_\MC)(1-\delta_\FQ)=1-\delta
    \end{split}
    \label{eq:lcb_2}
\end{equation}

Combining Eq.~\ref{eq:lcb_1} and \ref{eq:lcb_2}, we have: $\Pr\Big(\LCB_\delta(x)\le J_\OA(x)\Big)\ge1-\delta$, meaning $\LCB_\delta(x)$ is a lower confidence bound for $J_\OA(x)$ with confidence level $1-\delta$.
\end{proof}

\section{Proof of Theorem~\ref{thm:mono}}\label{app:mono}
\begin{proof}
Recall:
\begin{equation*}
    \widehat{J}(x)=\frac{1}{3}\max\big\{ \pi_\FV(x),\epsilon\big\}\big(\widehat{\pi}_\LP(x)+\widehat{\pi}_\MC(x)+\widehat{\pi}_\FQ(x)\big),
\end{equation*}
\begin{equation*}
    x_{t+1}^\star=\arg\max_{x\in\mathcal{C}_t}\widehat{J}(x).
\end{equation*}
\begin{equation*}
    x_{t+1}=
    \begin{cases}
    x_{t+1}^\star, & \text{if almost surely } \widehat{J}(x_{t+1}^\star)>\widehat{J}(x_t)\\
    x_t,  & \text{else}
    \end{cases}
\end{equation*}

When $x_{t+1}=x_t$, we have $J_\OA(x_{t+1})=J_\OA(x_t)\ge J_\OA(x_t)$. When $x_{t+1}=x_{t+1}^\star$, we use proof by cases: 

a. If $\pi_\FV(x_t)=0$, we have $J_\OA(x_{t+1})\ge0=J_\OA(x_t)$.

b. If $\pi_\FV(x_t)=1,\pi_\FV(x_{t+1}^\star)=0$, we have almost surely $\LCB_\delta(x_t)=\widehat{J}(x_t)-\frac{1}{3}\big(m_\LP(\delta_\LP)+m_\MC(\delta_\MC)+m_\FQ(\delta_\FQ)\big) <\widehat{J}(x_{t+1}^\star) -\frac{1}{3}\big(m_\LP(\delta_\LP)+m_\MC(\delta_\MC)+m_\FQ(\delta_\FQ)\big)\leq\epsilon-\frac{1}{3}\big(m_\LP(\delta_\LP)+m_\MC(\delta_\MC)+m_\FQ(\delta_\FQ)\big)$. Since $\epsilon$ is a sufficiently small number such that $\epsilon<\min\{m_i(\delta_i)\}$, it follows that $\epsilon-\frac{1}{3}\big(m_\LP(\delta_\LP)+m_\MC(\delta_\MC)+m_\FQ(\delta_\FQ)\big)<0=\LCB_\delta(x_{t+1})$.

c. If $\pi_\FV(x_t)=1,\pi_\FV(x_{t+1}^\star)=1$, we have $\LCB_\delta(x_{t+1})-\LCB_\delta(x_t)=\widehat{J}(x_{t+1}^\star)-\widehat{J}(x_t)>0$ almost surely.
\end{proof}

\section{Proof of Theorem~\ref{thm:con}}\label{app:con}
\begin{proof}
(i) We first prove that $\exists t_0,\forall t>t_0,\LCB_\delta(x_{t+1})\geq\LCB_\delta(x_t)$ almost surely by cases:

a. If $\exists t_0$ such that $\forall t>t_0,\pi_\FV(x_{t+1})=\pi_\FV(x_t)=0$, thereby $\forall t>t_0, \LCB_\delta(x_{t+1})=\LCB_\delta(x_t)=0$ almost surely.

b. If $\forall t,\pi_\FV(x_{t+1})=\pi_\FV(x_t)=1$, then almost surely either $x_{t+1}=x_t$, in which case $\LCB_\delta(x_{t+1})=\LCB_\delta(x_t)$, or $x_{t+1}=x_{t+1}^\star$, in which case $\LCB_\delta(x_{t+1})-\LCB_\delta(x_t)=\widehat{J}(x_{t+1}^\star)-\widehat{J}(x_t)>0$. In both cases, $\LCB_\delta(x_{t+1})\geq\LCB_\delta(x_t)$ almost surely, therefore, $\forall t,\LCB_\delta(x_{t+1})\geq\LCB_\delta(x_t)$ almost surely.

c. If $\exists t_1,\pi_\FV(x_{t_1})=1,\pi_\FV(x_{t_1+1})=0$ and $\exists t_2>t_1,\pi_\FV(x_{t_2})=0,\pi_\FV(x_{t_2+1})=1$, we have almost surely $\widehat{J}(x_{t_2+1})>\widehat{J}(x_{t_2})\geq0$, thereby almost surely $\widehat{J}(x_{t_2+1})\geq\min\{w_{i,k}\}>\epsilon$. For every $t>t_2$, we next prove that if $\pi_\FV(x_t)=1,\widehat{J}(x_t)>\epsilon$, then $\pi_\FV(x_{t+1})=1,\widehat{J}(x_{t+1})>\epsilon$. Assume that $\pi_\FV(x_t)=1$ and $\widehat{J}(x_t)>\epsilon$. Then either $x_{t+1}=x_t$, in which case $\widehat{J}(x_{t+1})=\widehat{J}(x_t)>\epsilon$, or $x_{t+1}=x^\star_{t+1}$, in which case almost surely $\widehat{J}(x_{t+1})=\widehat{J}(x^\star_{t+1})>\widehat{J}(x_t)>\epsilon$. Moreover, $\pi_\FV(x_{t+1})=0$ is impossible, otherwise, we would have $\widehat{J}(x_{t+1})\leq\epsilon$ which contradicts $\widehat{J}(x_{t+1})>\epsilon$. Applying mathematical induction, we have $\forall t>t_2, \pi_\FV(x_t)=1$. Using the same strategy as in case b, we have $\forall t>t_2,\LCB_\delta(x_{t+1})\geq\LCB_\delta(x_t)$ almost surely.

We have proved that $\{\LCB_\delta(x_t)\}_{t>t_0}$ is almost surely a non-decreasing sequence. Since $\{\LCB_\delta(x_t)\}$ are bounded by 1, the monotone convergence theorem implies that $\{\LCB_\delta(x_t)\}$ converges almost surely.

(ii) Since there exists $x\in\bigcup_t\mathcal{C}_t$ such that $\widehat{J}(x)=1$ almost surely, we have a non-empty set $\{t\mid\exists x\in\mathcal{C}_t,\widehat{J}(x)\overset{\text{a.s.}}{=}1\}$. Let $t_0=\min\{t\mid\exists x\in\mathcal{C}_t,\widehat{J}(x)\overset{\text{a.s.}}{=}1\}$ and $\displaystyle x^*=x^\star_{t_0+1}=\arg\max_{x\in\mathcal{C}_{t_0}}\widehat{J}(x)$. We have $\widehat{J}(x_{t_0})<1=\widehat{J}(x^\star_{t_0+1})$ almost surely, thereby $x_{t_0+1}=x^\star_{t_0+1}=x^*$ and $\widehat{J}(x_{t_0+1})=1$ almost surely. We next prove that if $\widehat{J}(x_t)=1$ almost surely, then $x_{t+1}=x_t$ and $\widehat{J}(x_{t+1})=1$ almost surely. Assuming $\widehat{J}(x_t)=1$ almost surely, we have $\displaystyle \max_{x\in\mathcal{C}_t}\widehat{J}(x)\le 1=\widehat{J}(x_t)$ almost surely. Hence, $\widehat{J}(x^\star_{t+1})\le 1=\widehat{J}(x_t)$ almost surely. Applying acceptance rule Eq.~\ref{eq:acc}, we have $x_{t+1}=x_t$ and thereby $\widehat{J}(x_{t+1})=\widehat{J}(x_t)=1$ almost surely. Applying mathematical induction, we have $\forall t>t_0, x_t=x^*$. Since $\widehat{J}(x^*)=1$ almost surely, we have $\pi_\FV(x^*)=1$. Applying Eq.~\ref{eq:lcb} and \ref{eq:est}, we have $\LCB_\delta(x^*)=\frac{1}{3}\big(3-m_\LP(\delta_\LP)-m_\MC(\delta_\MC)-m_\FQ(\delta_\FQ)\big)$ almost surely. In other words, $\Pr\Big(1-\frac{1}{3}\big(m_\LP(\delta_\LP)+m_\MC(\delta_\MC)+m_\FQ(\delta_\FQ)\big)\le J_\OA(x^*)\Big)\ge1-\delta$.
\end{proof}

\section{Algorithms}\label{app:alg}
We provide the algorithm we used in practice to construct candidate formalizations at each time step in Algorithm~\ref{alg:sample}, and the algorithm of the monotonic process in Algorithm~\ref{alg:mono}.

\begin{algorithm}[!t]
    \caption{Candidates Construction at Step $t$}
    \small
    \label{alg:sample}
    \begin{algorithmic}[1]
        \STATE{\textbf{Inputs:} Natural language statement and proof $(s_\nl,p_\nl)$, Previous formalization $x_t$, Set of One-Off Generators $\mathcal{G}_i$, Set of FV-Repairers $\mathcal{G}_j$, Set of Recurrent Generators $\mathcal{G}_k$, Theorem Prover $\mathcal{TP}$.}
        \STATE{$\mathcal{C}_t \gets \emptyset$}
        \FOR{$G\in\mathcal{G}_i$}
            \STATE{Sample one $x\sim p_G(x\mid s_\nl,p_\nl)$}
            \IF{$\pi_\FV(x)==0$ from $\mathcal{TP}$}
                \FOR{$G'\in\mathcal{G}_j$}
                    \STATE{Sample one $x'\sim p_{G'}(x'\mid s_\nl,p_\nl,x)$}
                    \STATE{$\mathcal{C}_t \gets \mathcal{C}_t\cup\{x'\}$}
                \ENDFOR
            \ELSE
                \STATE{$\mathcal{C}_t \gets \mathcal{C}_t\cup\{x\}$}
            \ENDIF
        \ENDFOR
        \IF{$t>0$}
            \FOR{$G\in\mathcal{G}_k$}
                \STATE{Sample one $x\sim p_G(x\mid s_\nl,p_\nl,x_t)$}
                \IF{$\pi_\FV(x)==0$ from $\mathcal{TP}$}
                    \FOR{$G'\in\mathcal{G}_j$}
                        \STATE{Sample one $x'\sim p_{G'}(x'\mid s_\nl,p_\nl,x)$}
                        \STATE{$\mathcal{C}_t \gets \mathcal{C}_t\cup\{x'\}$}
                    \ENDFOR
                \ELSE
                    \STATE{$\mathcal{C}_t \gets \mathcal{C}_t\cup\{x\}$}
                \ENDIF
            \ENDFOR
        \ENDIF
    \end{algorithmic}
\end{algorithm}

\begin{algorithm}[!t]
    \caption{The Monotonic Process}
    \small
    \label{alg:mono}
    \begin{algorithmic}[1]
        \STATE{\textbf{Inputs:} Natural language statements and proofs $\mathcal{D}=\{(s_\nl,p_\nl)\}$, Sets of LLM Judges $\mathcal{J}_\LP$, $\mathcal{J}_\MC$, $\mathcal{J}_\FQ$, Theorem Prover $\mathcal{TP}$, Maximum number of iterations $T$.}
        \FOR{$(s_\nl,p_\nl) \in \mathcal{D}$}
            \STATE{$x_0 \gets ""$, $J_0 \gets -1$}
            \FOR{$t=0,\dots,T-1$}            
                \STATE{Construct $\mathcal{C}_t$ (e.g., Algorithm~\ref{alg:sample})}
                \STATE{$J \gets J_t$}
                \FOR{$x \in \mathcal{C}_t$}
                    \STATE{Obtain $\pi_\FV(x)$ from theorem prover $\mathcal{TP}$}
                    \IF{$J\leq\epsilon$ or $\pi_\FV(x)==1$}
                        \FOR{$i \in \{\LP,\MC,\FQ\}$}
                            \STATE{Obtain estimation $\widehat{\pi}_i(x)$ from $\mathcal{J}_i$ using Eq.~\ref{eq:judge}}
                        \ENDFOR
                        \STATE{Calculate plug-in estimator $\widehat{J}(x)$ using Eq.~\ref{eq:est}}
                        \IF{$\widehat{J}(x)\geq J$}
                            \STATE{$x_{t+1}^\star \gets x$, $J \gets \widehat{J}(x)$}
                        \ENDIF
                    \ENDIF
                \ENDFOR
                \IF{$J>J_t$}
                    \STATE{$x_{t+1} \gets x_{t+1}^\star$, $J_{t+1} \gets J$}
                \ELSE
                    \STATE{$x_{t+1} \gets x_t$, $J_{t+1} \gets J_t$}
                \ENDIF
                \IF{$J_{t+1}==1$}
                    \STATE{\textbf{break}}
                \ENDIF
            \ENDFOR
        \ENDFOR
    \end{algorithmic}
\end{algorithm}

\section{Rationale for Empirical Setups}\label{app:rat}
We describe the existing formal benchmarks and the reasons for selecting miniF2F and ProofNet as follows:
\begin{itemize}
    \item miniF2F~\citep{zheng2022miniff} is a dataset of formal Olympiad-level mathematics problems with both Isabelle~\citep{paulson2000isabelle} and Lean formalizations. The original dataset lacks natural language proofs, but an improved version~\citep{jiang2023draft} provides them. It contains 244 validation instances and 244 test instances. Due to the availability of natural language proofs and its relatively small size, miniF2F is considered as a representative benchmark for validating the methods in this work.
    \item ProofNet~\citep{azerbayev2023proofnetautoformalizingformallyproving} is a benchmark for autoformalization and formal proving of undergraduate-level mathematics, consisting of 185 validation samples and 186 test samples. Compared to miniF2F, it has higher complexity. Its combination of natural language proofs and manageable scale makes it suitable for evaluating our monotonic process.
    \item Lean Workbook~\citep{ying2024lean} contains 57,231 formalized math problems (5k with formal solutions) generated via a pipeline of synthetic data generation and filtering. However, it lacks natural language proofs, making it unsuitable for the full-theorem autoformalization task in this work.
    \item FormalMATH~\citep{yu2025formalmathbenchmarkingformalmathematical} is a large-scale Lean4 benchmark with 5,560 formally verified problems ranging from high-school Olympiad challenges to undergraduate-level theorems. While it includes natural language proofs, its large scale makes it less practical as a representative benchmark for testing our methods in this paper.
    \item Herald~\citep{gao2025herald} contains 579,883 samples generated from a structural-information-aware pipeline, of which 44,553 include natural language proofs. Similar to FormalMATH, although it provides natural language proofs, its large scale limits its suitability as a representative benchmark for this work.
\end{itemize}

We elicit the LLMs used in this work and the reasons for their selection as follows:
\begin{itemize}
    \item DeepSeek-Prover-V2~\citep{ren2025deepseekproverv2advancingformalmathematical} is an open-source LLM designed for formal theorem proving in Lean4, available in 7B and 671B sizes. This model family has demonstrated state-of-the-art performance in generating formal proofs~\citep{chen2025seedproverdeepbroadreasoning}. Since this work focuses on full-theorem autoformalization, which requires complete formalization including proofs, its superiority in generating Lean4 proofs makes it a suitable candidate. The 7B version is selected here due to computational resource constraints.
    \item Goedel-Prover-V2~\citep{lin2025goedelproverv2scalingformaltheorem} is another series of open-source LLMs for automated theorem proving in Lean4, available in 8B and 32B sizes. Unlike DeepSeek-Prover-V2, this model can iteratively revise its proofs by leveraging feedback from the Lean theorem prover, which is particularly useful in the FV-Repair setting of this work. We select the 8B version due to computational resource constraints.
    \item Qwen2.5~\citep{qwen2025qwen25technicalreport} is a comprehensive series of open-source models designed to meet diverse needs. Qwen2.5-7B-Instruct is a representative small-scale general-purpose model within our computational budget and is therefore included in this study.
    \item Seed-Coder~\citep{seed2025seedcoderletcodemodel} is a series of open-source LLMs specialized in code generation. While not fully general-purpose, formal languages can be considered a type of code, and thus this model serves as a state-of-the-art coding model suitable for our purposes. The 8B version is chosen due to computational resource limitations.
    \item GPT series~\citep{openai2024gpt4,openai2025gpt5} are the latest and largest state-of-the-art general-purpose closed-source LLM at the time of this work. Full versions of GPT-4.1 and GPT-5.4 are used for generating formalizations, while the smaller GPT-4.1-mini balances performance and cost, making it suitable as LLM judges in our experiments.
\end{itemize}

\section{Complementary Results}\label{app:com}
All smaller LLMs used in this work are run on two 24GB NVIDIA Quadro RTX 6000 GPUs, while GPT models are accessed via their API.

\subsection{Determination of Generator Roles}\label{app:role}

\begin{table}[!t]
  \caption{Performance of LLMs as One-Off Generators (\textbf{OOG}) and FV-Repairers (\textbf{FVR}) for full theorem autoformalization on miniF2F and ProofNet benchmark using Lean4. FVRs are applied to formalizations generated by OOGs (i.e., zero-shot autoformalization). LLMs include: \textbf{Qwen} (Qwen2.5-7B-Instruct), \textbf{Seed} (Seed-Coder-8B-Instruct), \textbf{DSP} (DeepSeek-Prover-V2-7B), \textbf{GDP} (Goedel-Prover-V2-8B), and \textbf{GPT} (GPT-4.1).}
  \small
  \centering
  \begin{tabular}{l l c c c c c}
    \toprule
    Type & LLM & $\pi_\FV$ & $\widehat{\pi}_\LP$ & $\widehat{\pi}_\MC$ & $\widehat{\pi}_\FQ$ & $\widehat{J}$\\
    \midrule
    \multicolumn{3}{l}{\textit{miniF2F-Test}}\\
    \midrule
    OOG & Qwen2.5-7B-Instruct & 0.00 & 28.89 & 73.50 & 15.78 & 0.04\\
    \mycdashline{1-7}
    FVR & Qwen2.5-7B-Instruct & 0.00 & \dec0.61 & \inc1.23 & \inc0.41 & 0.00\\
    FVR & Seed-Coder-8B-Instruct & \inc0.82 & \inc0.92 & \dec1.78 & \inc2.05 & \inc0.57\\
    FVR & DeepSeek-Prover-V2-7B & \inc22.95 & \inc18.24 & \inc4.64 & \inc23.36 & \inc15.07\\
    FVR & Goedel-Prover-V2-8B & \inc30.33 & \inc12.60 & \dec6.01 & \inc28.89 & \inc18.71\\
    FVR & GPT & \inc12.30 & \inc59.84 & \inc23.91 & \inc53.89 & \inc11.31\\
    \mycdashline{1-7}
     OOG & Seed-Coder-8B-Instruct & 7.38 & 28.38 & 63.39 & 21.11 & 5.35\\
    \mycdashline{1-7}
    FVR & Seed-Coder-8B-Instruct & \inc1.64 & \dec1.43 & \inc3.01 & \inc2.05 & \inc1.21\\
    FVR & DeepSeek-Prover-V2-7B & \inc21.31 & \inc13.73 & \inc10.52 & \inc16.39 & \inc14.45\\
    FVR & Goedel-Prover-V2-8B & \inc20.49 & \inc8.61 & \inc3.28 & \inc18.65 & \inc11.97\\
    FVR & GPT & \inc13.11 & \inc54.00 & \inc32.79 & \inc46.93 & \inc11.54\\
    \midrule
    OOG & DeepSeek-Prover-V2-7B & 50.41 & 55.02 & 80.87 & 51.02 & 35.15\\
    \mycdashline{1-7}
    FVR & Qwen2.5-7B-Instruct & 0.00 & \dec1.74 & \inc1.37 & \dec5.33 & 0.00\\
    FVR & Seed-Coder-8B-Instruct & \inc1.64 & \dec4.92 & \dec1.50 & \dec6.97 & \inc1.16\\
    FVR & DeepSeek-Prover-V2-7B & \inc1.64 & \inc0.92 & \inc0.96 & \dec2.05 & \inc0.90\\
    FVR & Goedel-Prover-V2-8B & \inc6.56 & \dec3.38 & \dec2.05 & \inc1.02 & \inc4.02\\
    FVR & GPT & \inc3.69 & \inc18.24 & \inc9.97 & \inc11.48 & \inc3.38\\
    \midrule
    OOG & Goedel-Prover-V2-8B & 58.61 & 47.34 & 73.63 & 49.80 & 35.61\\
    \mycdashline{1-7}
    FVR & DeepSeek-Prover-V2-7B & \inc1.64 & 0.00 & \dec0.68 & \dec1.02 & \inc1.15\\
    FVR & Goedel-Prover-V2-8B & \inc7.79 & \dec1.23 & \dec4.51 & \inc1.43 & \inc5.50\\
    FVR & GPT & \inc2.05 & \inc16.91 & \inc9.15 & \inc10.25 & \inc1.92\\
    \midrule
    OOG & GPT & 9.02 & 90.16 & 98.09 & 58.81 & 8.33\\
    \mycdashline{1-7}
    FVR & DeepSeek-Prover-V2-7B & \inc20.08 & \dec7.99 & \dec3.96 & \dec3.69 & \inc16.36\\
    FVR & Goedel-Prover-V2-8B & \inc27.87 & \dec20.39 & \dec9.15 & \dec2.05 & \inc21.65\\
    FVR & GPT & \inc6.56 & \inc2.46 & \inc1.23 & \inc9.22 & \inc6.08\\
    \midrule
    \multicolumn{3}{l}{\textit{ProofNet-Test}}\\
    \midrule
    OOG & Seed-Coder-8B-Instruct & 0.00 & 13.71 & 51.25 & 6.18 & 0.02\\
    \midrule
    OOG & DeepSeek-Prover-V2-7B & 11.83 & 41.80 & 67.38 & 35.75 & 7.14\\
    \mycdashline{1-7}
    FVR & DeepSeek-Prover-V2-7B & \inc0.54 & \inc1.21 & \inc2.15 & \inc3.49 & \inc0.35\\
    FVR & Goedel-Prover-V2-8B & \inc4.30 & \dec2.96 & \dec0.54 & \inc3.76 & \inc2.88\\
    FVR & GPT & \inc1.08 & \inc46.77 & \inc27.42 & \inc36.56 & \inc1.11\\
    \midrule
    OOG & Goedel-Prover-V2-8B & 17.74 & 31.59 & 63.44 & 29.84 & 6.89\\
    \mycdashline{1-7}
    FVR & DeepSeek-Prover-V2-7B & 0.00 & \inc0.13 & \dec1.43 & \inc0.81 & 0.00\\
    FVR & Goedel-Prover-V2-8B & \inc2.69 & \dec7.53 & \dec5.38 & \inc4.30 & \inc1.61\\
    FVR & GPT & \inc1.08 & \inc48.52 & \inc25.63 & \inc38.44 & \inc1.11\\
    \midrule
    OOG & GPT & 1.08 & 91.67 & 99.46 & 66.67 & 1.12\\
    \mycdashline{1-7}
    FVR & DeepSeek-Prover-V2-7B & \inc1.61 & \dec10.22 & \dec5.02 & \dec7.26 & \inc1.51\\
    FVR & Goedel-Prover-V2-8B & \inc6.99 & \dec31.85 & \dec16.31 & \dec6.99 & \inc4.76\\
    FVR & GPT & 0.00 & \inc2.96 & \dec0.18 & \inc8.33 & 0.00\\
    \bottomrule
  \end{tabular}
  \label{tab:ref_mini}
\end{table}

For the following experiments, we use greedy decoding for open-source models to improve reproducibility. We sample one instance per generator to reduce computational cost, while empirical evidence suggests that the results remain indicative.

\subsubsection{One-Off Generators and FV-Repairers}
We assess LLMs' capabilities as (i) One-Off Generators under zero-shot autoformalization, and (ii) FV-Repairers under refinement guided by rigorous error feedback from theorem provers. We report scores in Table~\ref{tab:ref_mini}.

\textbf{Models fine-tuned on the specialized domain perform best in the One-Off Generator setting.} Both DeepSeek-Prover-7B and Goedel-Prover-8B achieve higher FV scores than GPT-4.1 under this setting. In particular, Goedel-Prover-8B produces syntactically correct formal theorems and proofs for 58.61\% of miniF2F examples and 17.74\% of ProofNet examples. Under zero-shot autoformalization, specialized LLMs generate formalizations with higher FV and overall scores, making them well suited to the One-Off Generator setting. However, formalizations produced by these specialized LLMs still receive relatively lower scores on soft semantic dimensions such as LP and FQ, indicating persistent challenges in alignment faithfulness.

\textbf{Specialized LLMs and general-purpose larger LLMs can both serve as effective FV-Repairers for high-validity formalizations.} On miniF2F, when acting as FV-Repairers, the general-purpose smaller LLMs Qwen2.5-7B and Seed-Coder-8B show no and limited improvement in FV, respectively. Across both miniF2F and ProofNet, GPT-4.1 achieves larger gains in FV than DeepSeek-Prover-7B when refining formalizations that already have relatively high initial validity, although its improvements still fall short of Goedel-Prover-8B's performance. Goedel-Prover-8B delivers the highest overall gains in FV, but these improvements come at the cost of lower scores on soft-semantic dimensions such as LP and MC. This trade-off indicates that FV alone is insufficient for evaluating overall performance. Taken together, these results highlight the strengths of specialized LLMs and GPT-4.1 as FVRs. Under the FVR setting with Goedel-Prover-8B, we obtain 66.39\% syntactically correct formal theorems with proofs on miniF2F and 20.43\% on ProofNet.

\textbf{Specialized LLMs acting as FV-Repairers on low-validity formalizations do not necessarily outperform their own zero-shot autoformalization results.} On miniF2F, when DeepSeek-Prover-7B and Goedel-Prover-8B refine zero-shot outputs generated by Qwen2.5-7B, Seed-Coder-8B, or GPT-4.1, the percentage of recovered syntactically correct full theorems is substantially lower than their respective zero-shot performances. GPT-4.1 does achieve more correct full theorems when refining the zero-shot outputs of Qwen2.5-7B and Seed-Coder-8B than in its own zero-shot setting, but the magnitude of its gains remains smaller than those achieved by the specialized LLMs.

\subsubsection{Responsiveness Maps of Generators}

\begin{figure}[!t]
    \centering
    \begin{subfigure}{0.24\textwidth} 
      \centering
      \includegraphics[width=\textwidth]{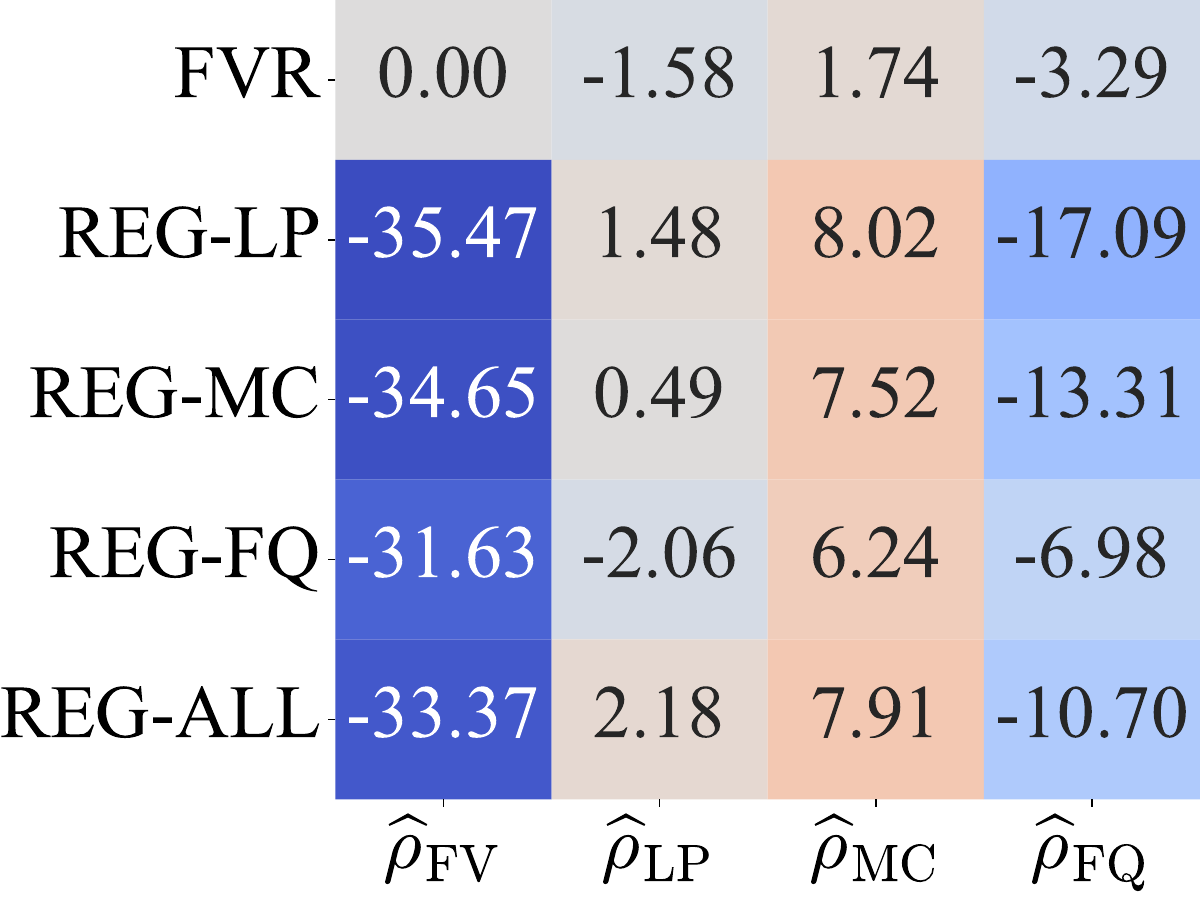}
      \caption{Qwen2.5-7B}
      \label{fig:qwen}
    \end{subfigure}
    \begin{subfigure}{0.24\textwidth} 
      \centering
      \includegraphics[width=\textwidth]{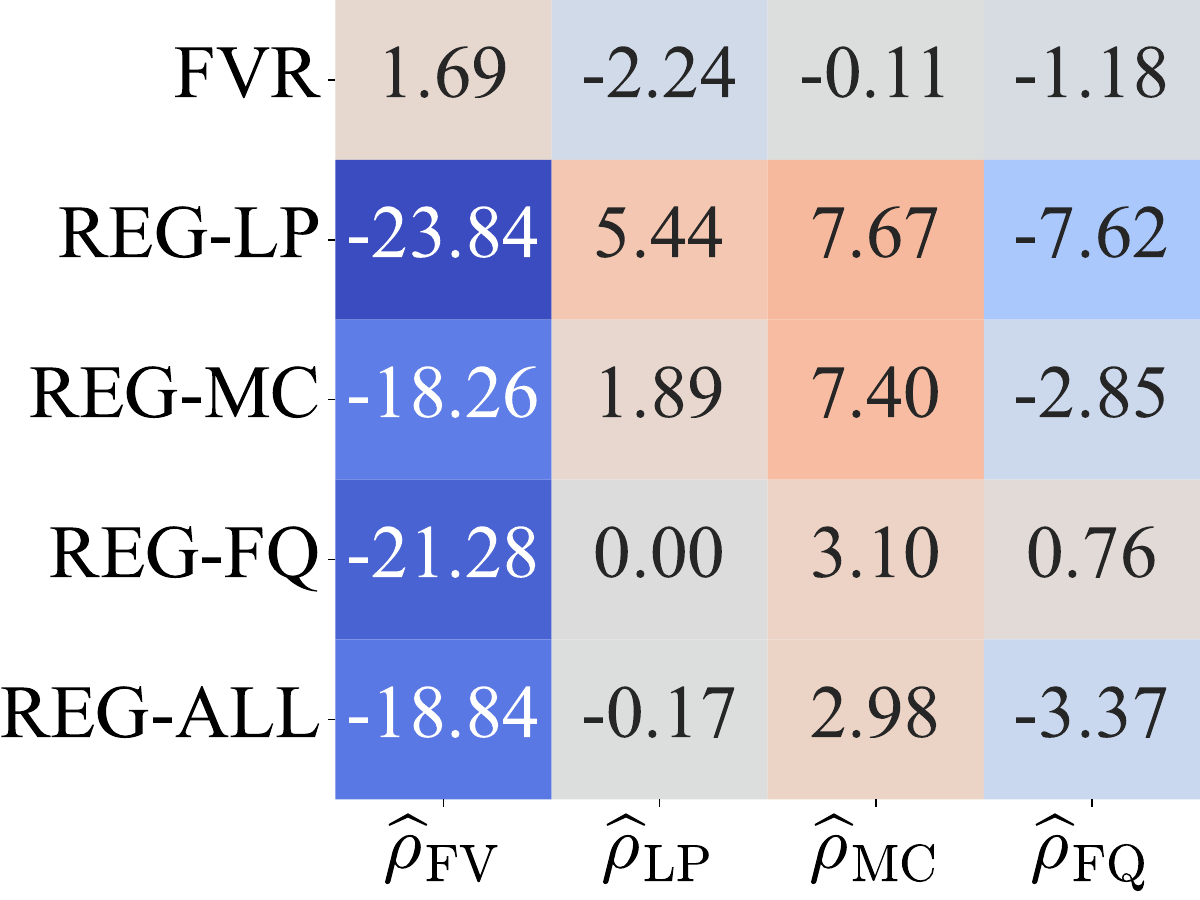}
      \caption{Seed-Coder-8B}
      \label{fig:seed}
    \end{subfigure}
    \begin{subfigure}{0.24\textwidth} 
      \centering
      \includegraphics[width=\textwidth]{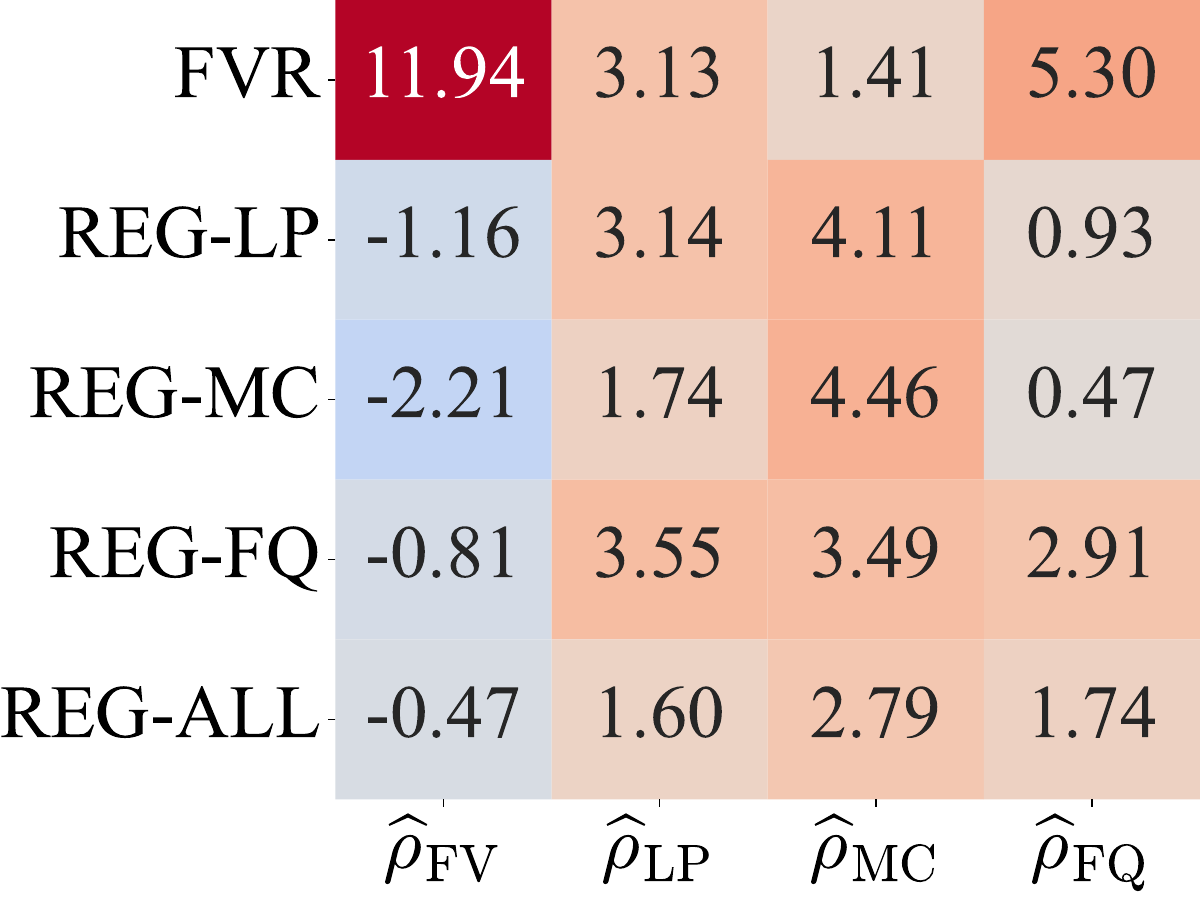}
      \caption{DeepSeek-Prover-7B}
      \label{fig:dsp}
    \end{subfigure}
    \begin{subfigure}{0.24\textwidth} 
      \centering
      \includegraphics[width=\textwidth]{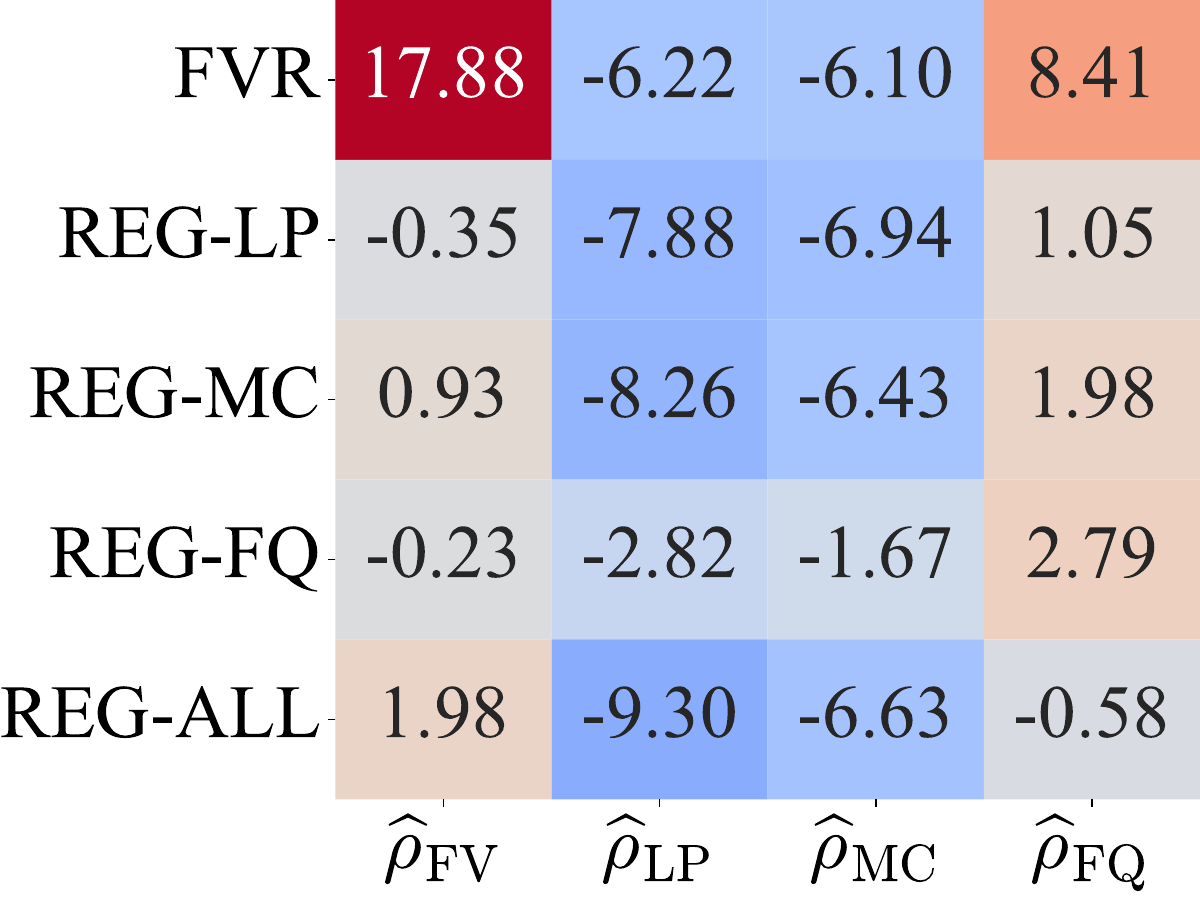}
      \caption{Goedel-Prover-8B}
      \label{fig:gdp}
    \end{subfigure}
    \caption{Average of local responsiveness (\%)for LLMs as different generators.}
    \label{fig:res}
\end{figure}

We evaluate the ability of LLMs to act as Recurrent Generators under iterative refinement with feedback from LLM judges. In addition to REG-LP, REG-MC, and REG-FQ, we also consider REG-ALL, which uses feedback aggregated across all evaluation dimensions. We consider Qwen2.5-7B, Seed-Coder-8B, DeepSeek-Prover-7B, and Goedel-Prover-8B as generators, but exclude GPT-4.1 to avoid potential bias from GPT-4.1-mini judgments. Refinement is applied to formalizations with high formal validity produced by the specialized LLMs DeepSeek-Prover-7B and Goedel-Prover-8B. For all generators that use LLM-based feedback (FVR and REG), we compute the responsiveness map for each individual instance and report the mean across all samples for the four LLMs in Figure~\ref{fig:res}.

\textbf{Responsiveness maps can illustratively prioritize settings that exhibit the greatest expected gain across the soft dimensions.} Among all REG settings, Seed-Coder-8B with REG-LP achieves the highest average expected gain with respect to LP (5.44\%), Qwen2.5-7B with REG-MC attains the highest gain for MC (8.02\%), and DeepSeek-Prover-7B with REG-FQ yields the largest gain for FQ (2.91\%).

\textbf{Specialized LLMs are more responsive to formal validity, whereas general-purpose LLMs tend to focus on soft semantic properties.} Across all settings, Qwen2.5-7B exhibits no positive local responsiveness (Figure~\ref{fig:qwen}). Seed-Coder-8B likewise shows negative responsiveness toward FV in REG settings and only limited responsiveness when configured as FVR (Figure~\ref{fig:seed}). In contrast, both DeepSeek-Prover-7B and Goedel-Prover-8B demonstrate substantial responsiveness ($>$10\%) when configured as FVR (Figures~\ref{fig:dsp}, \ref{fig:gdp}). These results indicate that specialized LLMs are more responsive to FV. Specialized models also exhibit stronger responsiveness in formal quality compared to general-purpose LLMs. However, for dimensions less directly tied to formal structure (LP and MC), specialized LLMs show limited refinement capability, particularly Goedel-Prover-8B. This contrast suggests that general-purpose and specialized LLMs emphasize different aspects of refinement, reflecting a trade-off between formal and informal dimensions.

\subsection{Dataset-Level Performance of Recurrent Generators}
We report the average improvement and Cohen's d on the miniF2F and ProofNet datasets in Figure~\ref{fig:cohen}.

\begin{figure}[!t]
    \centering
    \begin{subfigure}{0.24\textwidth} 
      \centering
      \includegraphics[width=\textwidth]{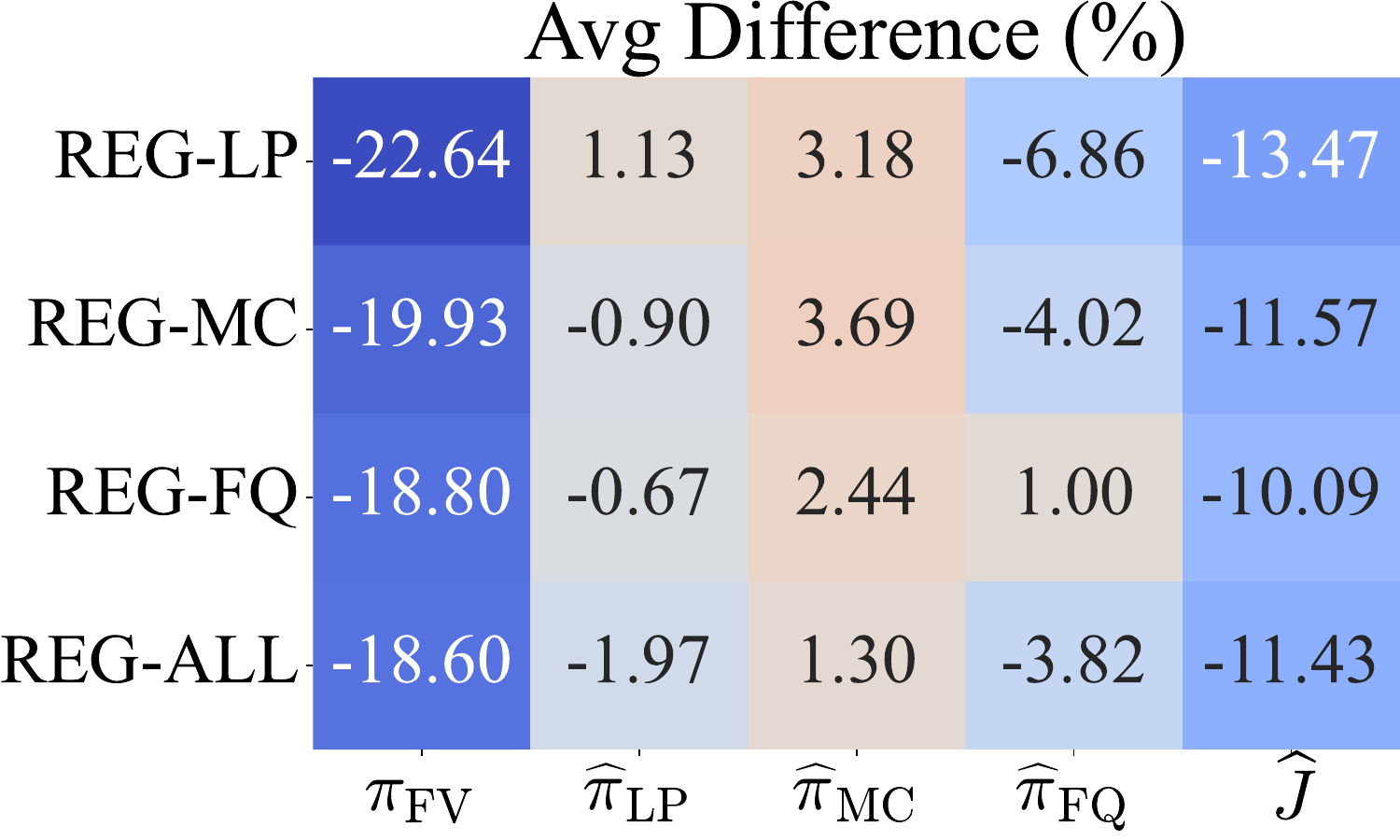}
      \caption{miniF2F}
      \label{fig:ma}
    \end{subfigure}
    \begin{subfigure}{0.24\textwidth} 
      \centering
      \includegraphics[width=\textwidth]{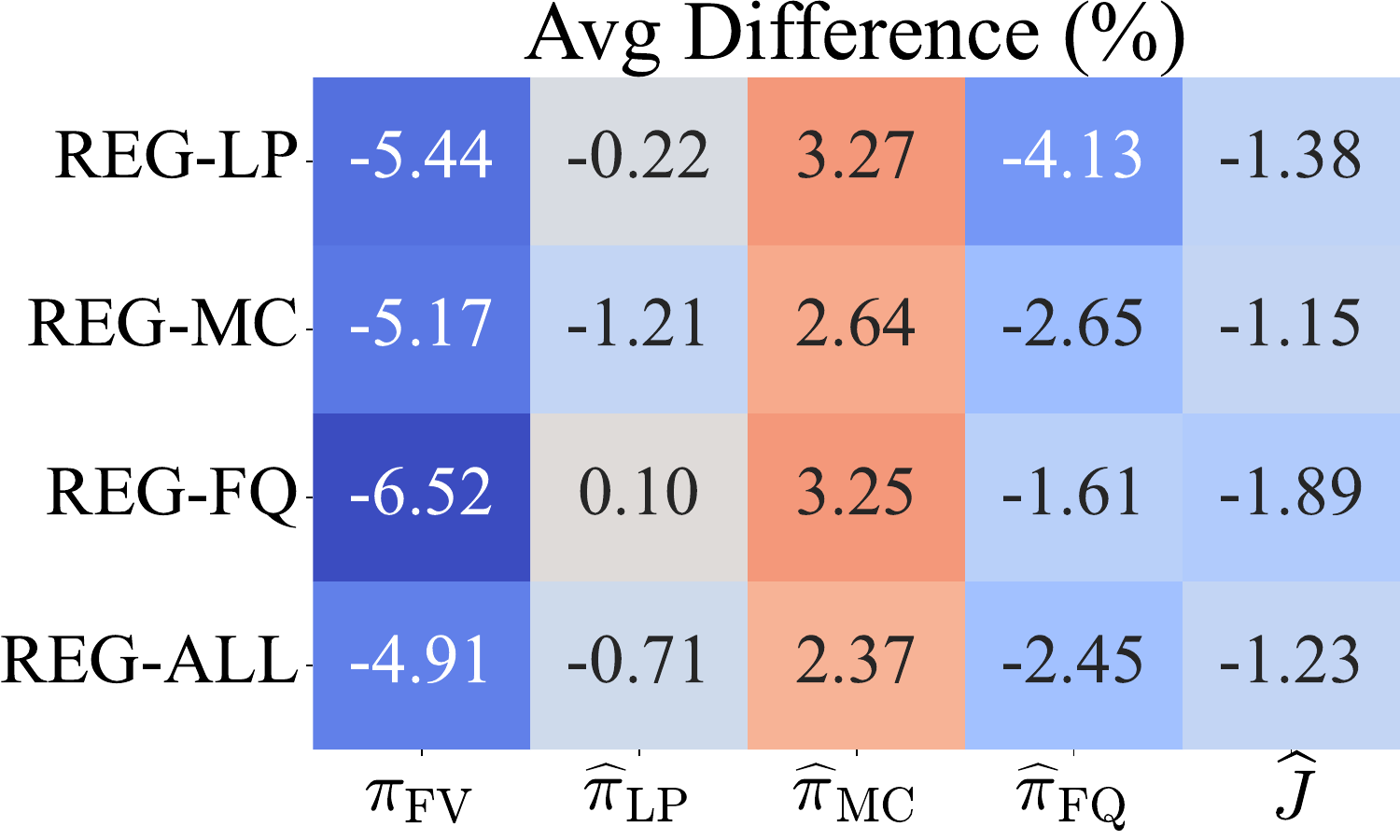}
      \caption{ProofNet}
      \label{fig:pa}
    \end{subfigure}
    \begin{subfigure}{0.24\textwidth} 
      \centering
      \includegraphics[width=\textwidth]{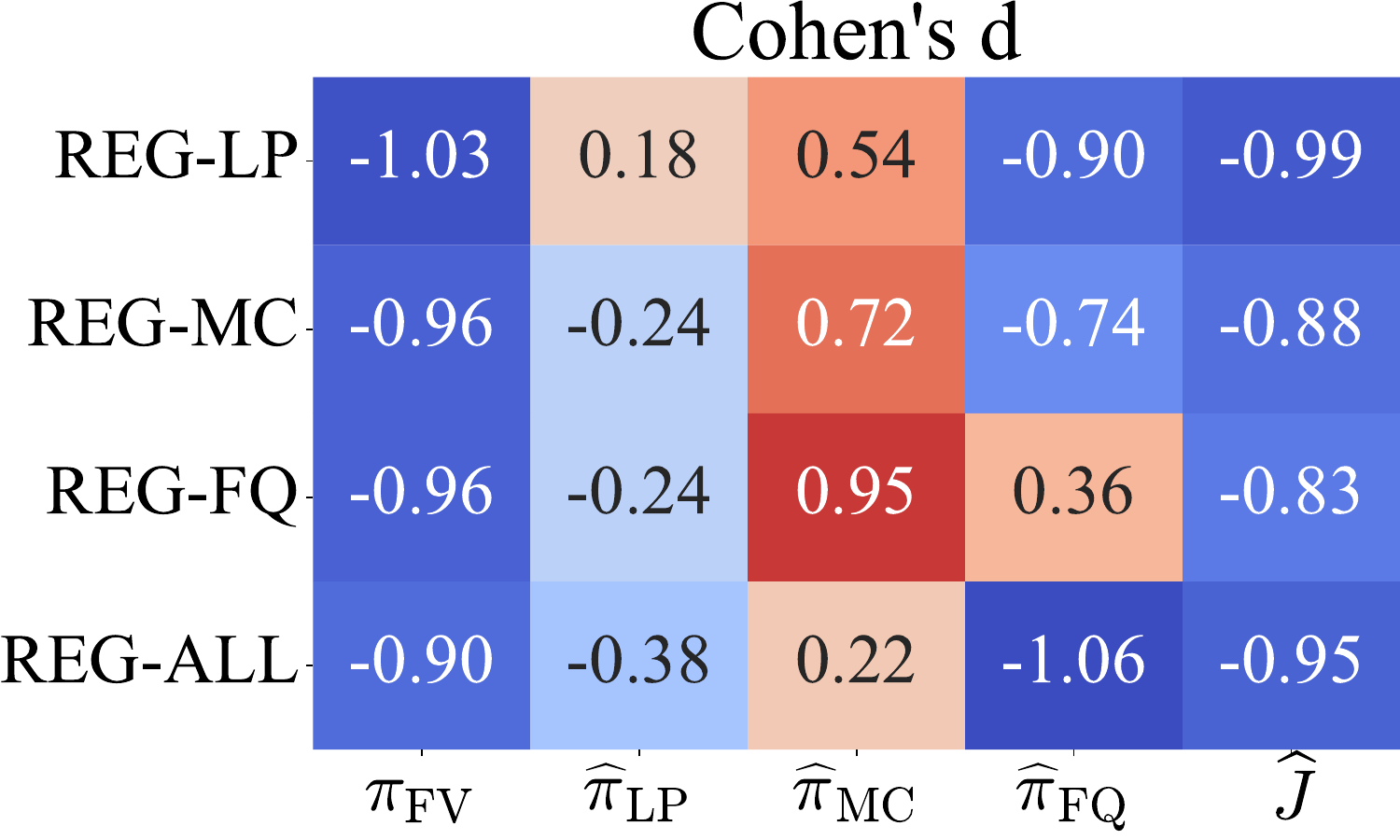}
      \caption{miniF2F}
      \label{fig:mc}
    \end{subfigure}
    \begin{subfigure}{0.24\textwidth} 
      \centering
      \includegraphics[width=\textwidth]{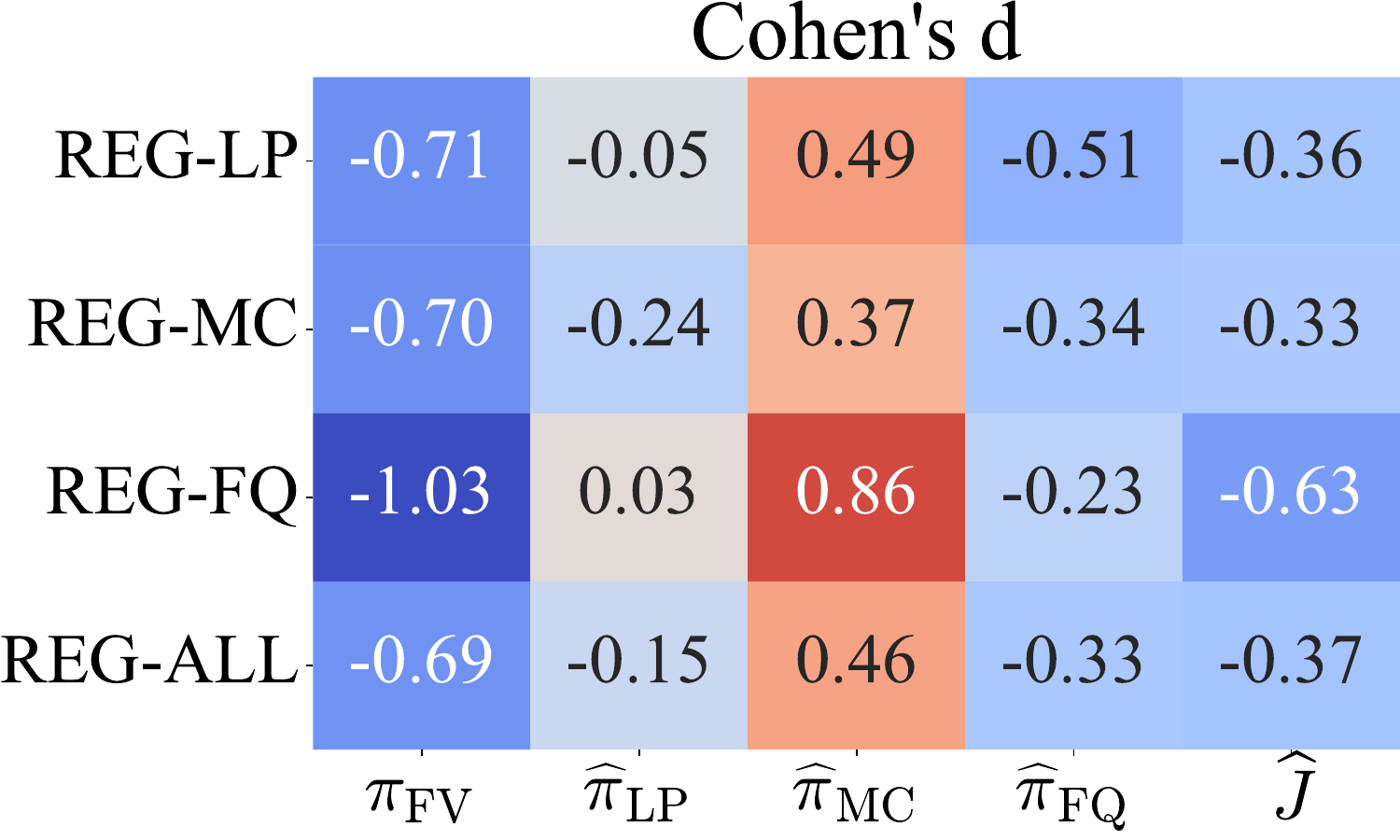}
      \caption{ProofNet}
      \label{fig:pc}
    \end{subfigure}
    \caption{Average and Cohen's d of improvements using different feedback on each dataset.}
    \label{fig:cohen}
\end{figure}

\textbf{Providing feedback on a single dimension yields targeted improvements on that dimension.} In Figure~\ref{fig:ma}, on miniF2F, REG-LP, REG-MC, REG-FQ consistently lead to average improvements in $\widehat{\pi}_\LP$, $\widehat{\pi}_\MC$, $\widehat{\pi}_\FQ$, respectively. In Figure~\ref{fig:pa} and \ref{fig:pc}, on ProofNet, REG-MC likewise exhibits a positive effect on MC judgments. Taken together, these results suggest that providing LLMs with feedback focused on a specific dimension is an effective strategy for improving performance along that dimension.

\textbf{However, direct feedback on a given dimension is not necessarily the most effective way to improve that dimension.} Across both datasets, all REG settings lead to improvements in MC. Notably, the method achieving the largest Cohen's d for $\widehat{\pi}_\MC$ is REG-FQ rather than REG-MC (Figure~\ref{fig:mc}, \ref{fig:pc}). This result suggests that refining formalizations using feedback on formal quality can have a statistically larger impact on mathematical consistency than providing MC-specific feedback alone.

\textbf{Refining one dimension does not guarantee the preservation of other dimensions.} All REG settings lead to decreases in formal validity, even when error details are provided alongside LLM judgments. Moreover, in some cases, non-targeted dimensions are negatively affected by giving feedback focused on a specific target dimension. This pattern highlights the difficulty of jointly optimizing multiple dimensions of formalizations and further underscores the necessity of our proposed iterative monotonic process for more reliable autoformalization.

\subsection{Uncertainty Estimation}\label{app:unc}

\begin{figure}[!t]
    \centering
    \begin{subfigure}{0.16\textwidth} 
      \centering
      \includegraphics[width=\textwidth]{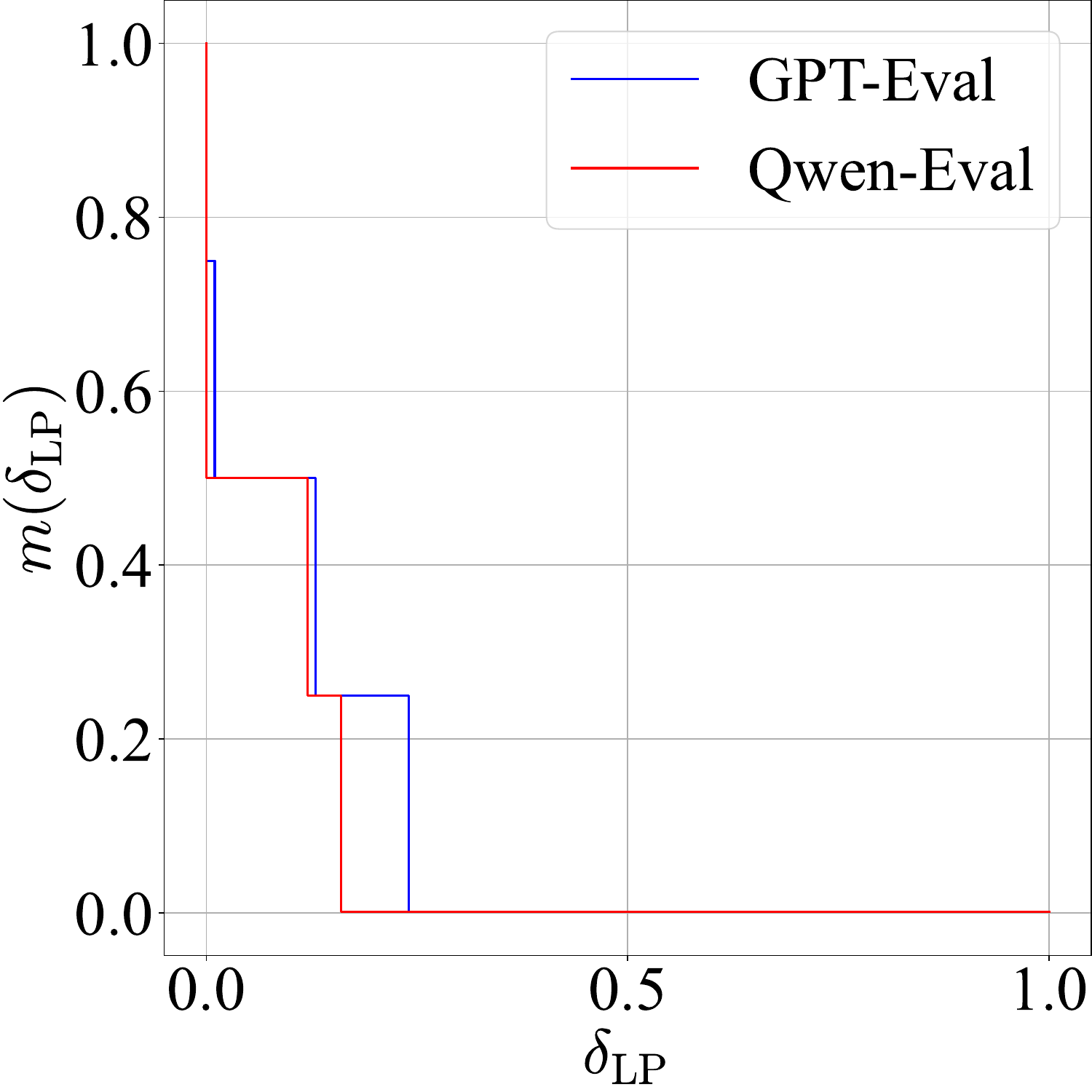}
      \caption{miniF2F $\widehat{\pi}_\LP$}
      \label{fig:lp_1}
    \end{subfigure}
    \begin{subfigure}{0.16\textwidth} 
      \centering
      \includegraphics[width=\textwidth]{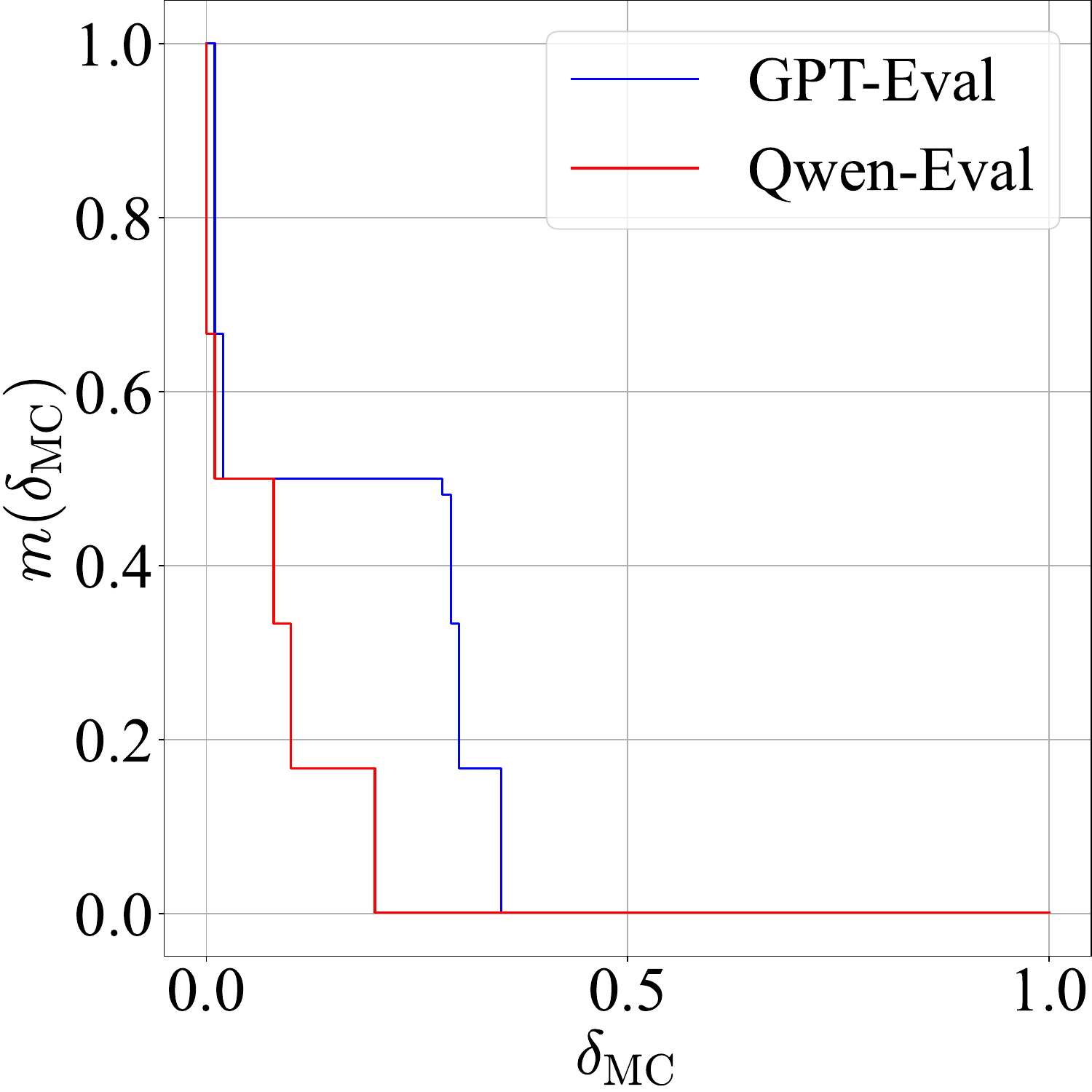}
      \caption{miniF2F $\widehat{\pi}_\MC$}
      \label{fig:mc_1}
    \end{subfigure}
    \begin{subfigure}{0.16\textwidth} 
      \centering
      \includegraphics[width=\textwidth]{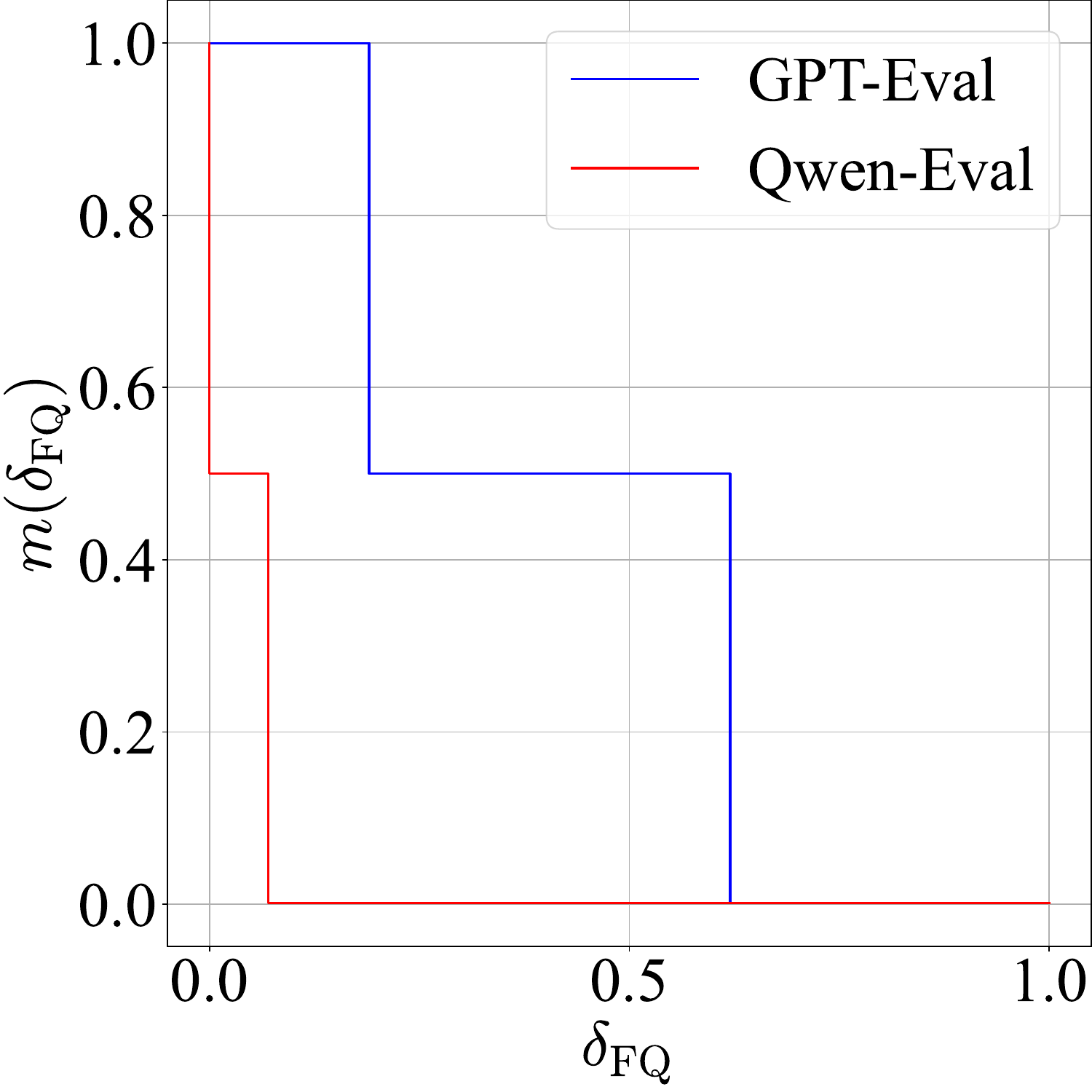}
      \caption{miniF2F $\widehat{\pi}_\FQ$}
      \label{fig:fq_1}
    \end{subfigure}
    \begin{subfigure}{0.16\textwidth} 
      \centering
      \includegraphics[width=\textwidth]{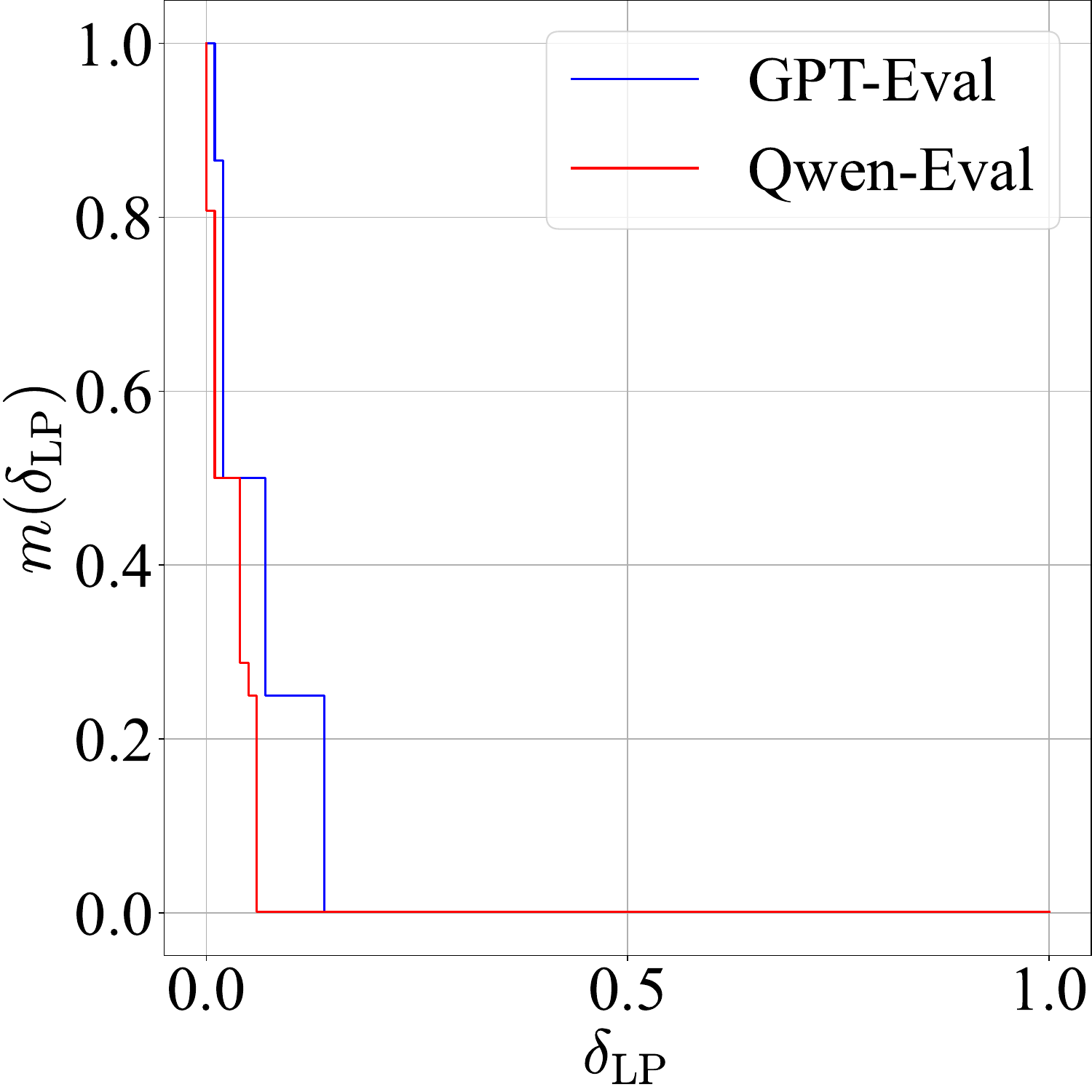}
      \caption{Proofnet $\widehat{\pi}_\LP$}
      \label{fig:lp_2}
    \end{subfigure}
    \begin{subfigure}{0.16\textwidth} 
      \centering
      \includegraphics[width=\textwidth]{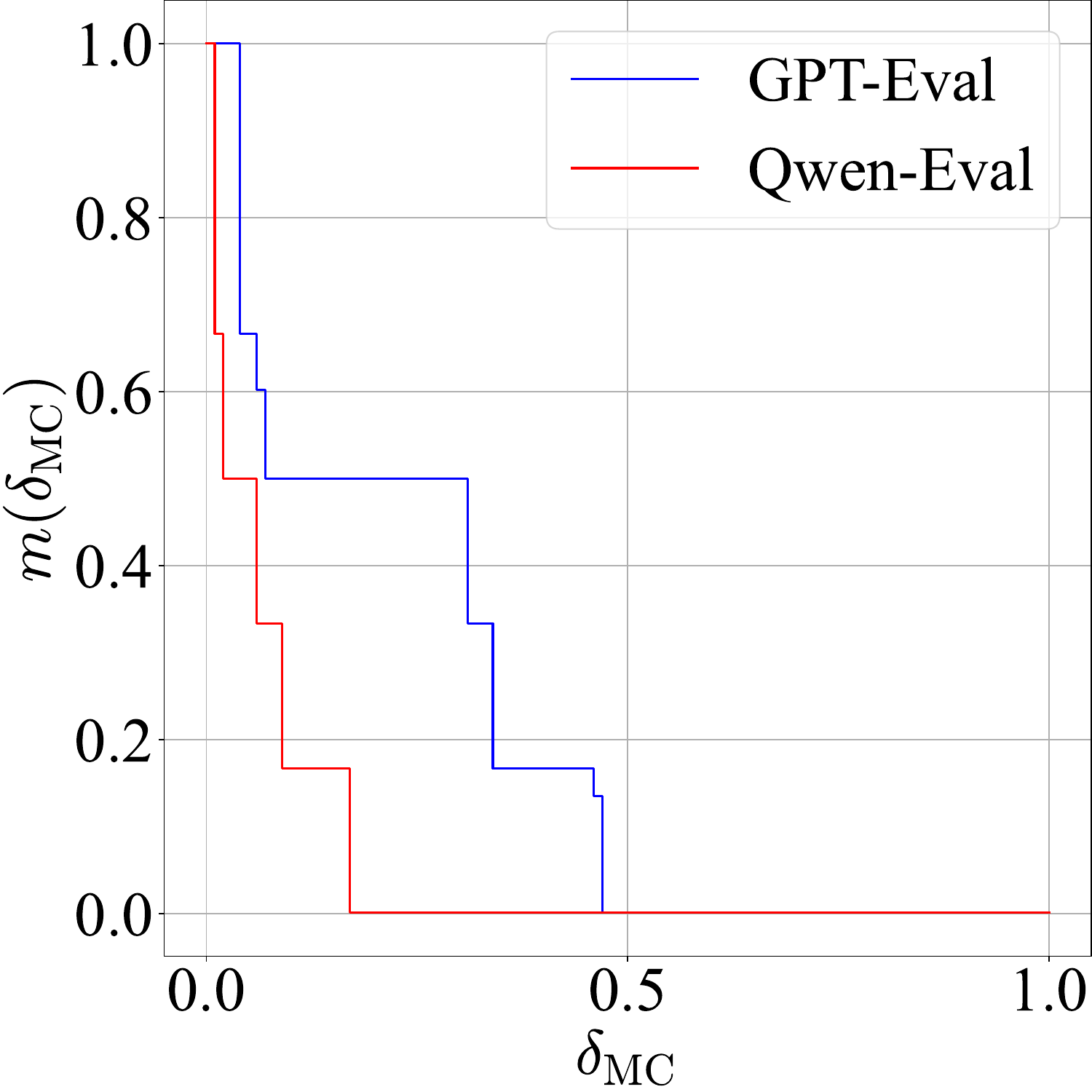}
      \caption{ProofNet $\widehat{\pi}_\MC$}
      \label{fig:mc_2}
    \end{subfigure}
    \begin{subfigure}{0.16\textwidth} 
      \centering
      \includegraphics[width=\textwidth]{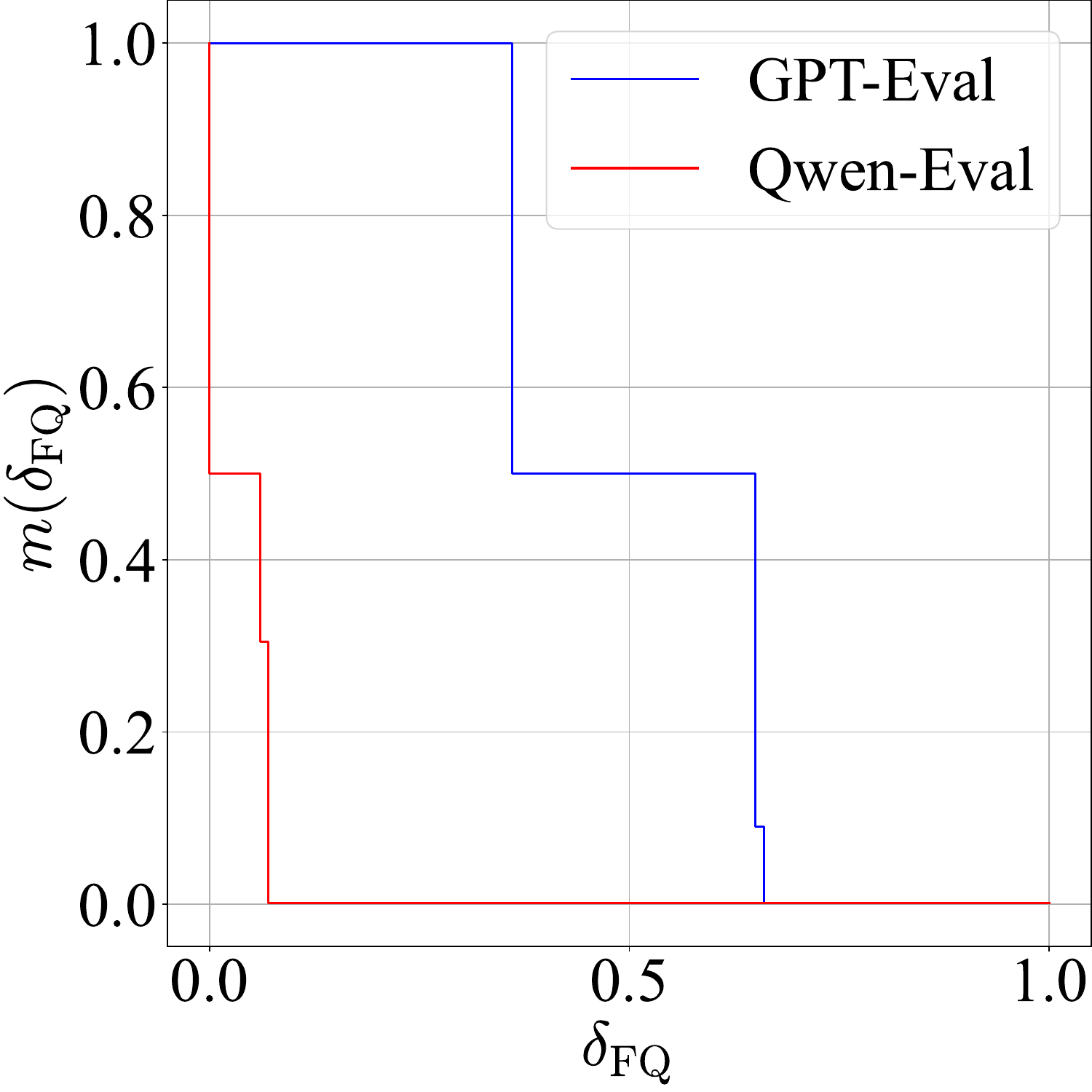}
      \caption{ProofNet $\widehat{\pi}_\FQ$}
      \label{fig:fq_2}
    \end{subfigure}
    \caption{Uncertainty estimation of LLM judges.}
    \label{fig:uncertainty}
\end{figure}

Using the results of the human evaluation in the monotonic process experiments as the ground-truth evaluation $\pi_i(x)$ in Eq.~\ref{eq:lcb_i}, we compute the curves of $\delta_i$ and $m(\delta_i)$, which correspond to the maximum admissible margin $m_i(\delta_i)$ for the LLM judges used in our experiments (GPT-4.1-mini and Qwen2.5-Coder-7B). Figure~\ref{fig:uncertainty} illustrates the behavior of LLM judges under uncertainty.

\textbf{There is a trade-off between uncertainty level and the effort required to calibrate LLM judges.} Across all settings, we observe that the required margin $m(\delta_i)$ increases as the uncertainty level $\delta_i$ decreases. This suggests that achieving lower uncertainty requires stronger calibration of LLM judgments. Conversely, using less-calibrated LLM judges leads to higher uncertainty levels.

\textbf{LLM judges exhibit different levels of uncertainty across different aspects.} On both datasets, both LLM judges are more confident in logical preservation (LP) than in mathematical consistency (MC), and least confident in formal quality (FQ). This behavior indicates that MC and FQ are harder to align with human evaluations for LLM judges.

\textbf{While Qwen2.5-Coder-7B requires less calibration at the same uncertainty level, it is a more conservative judge than GPT-4.1-mini for autoformalization evaluation and is less informative for selection.} The curves corresponding to GPT-4.1-mini consistently lie above those of Qwen2.5-Coder-7B, indicating that larger margins are needed to guarantee the same confidence level. This suggests that GPT-4.1-mini exhibits higher variability and requires stronger correction to obtain reliable positive judgments. However, Qwen2.5-Coder-7B is a conservative evaluator that tends to assign lower scores. Although it has lower uncertainty in the lower-bound sense, it also does not provide sufficient discrimination for selecting better formalizations.

\section{Details of Prompts}

\subsection{Prompts for Generators}
\begin{tcolorbox}[title={One-Off Generator}, breakable]
\textbf{System Prompt:}

You are an expert in formal language Lean4.

You will be given a mathematical statement and its proof written in natural language and LaTeX symbols.

Your task is to provide the formal code of the given natural language mathematical statement and its proof in Lean4 with the following instructions:

1. You should give the formal code directly without any additional comments or explanations.

2. In case that you need to import any necessary preambles, you should not import any fake (non-exist) preambles.

3. You should wrap the formal code in a way illustrated as the following:

\%\%\%\%\%\%\%\%\%\%

Your Formal Code

\%\%\%\%\%\%\%\%\%\%

Strictly follow the instructions that have been claimed.

\vspace{1em}

\textbf{User Prompt:}

Natural language statement: \{Natural Language Statement\}

Natural language proof: \{{Natural Language Proof}\}

Give me the Lean4 formal code of them:
\end{tcolorbox}

\begin{tcolorbox}[title={FV-Repairer}, breakable]
\textbf{System Prompt:}

You are an expert in formal language Lean4.

You will be given a mathematical statement and its proof written in natural language and LaTeX symbols.

You will also be given a formal code which attempted to describe the given mathematical statement and its proof in Lean4.

Your task is to refine the given formal code to make it correct while maintaining the alignment with the given natural language mathematical statement and proof.

Here are some instructions for your task:

1. You should give the formal code directly without any additional comments or explanations.

2. In case that you need to import any necessary preambles, you should not import any fake (non-exist) preambles.

3. You should wrap the formal code in a way illustrated as the following:

\%\%\%\%\%\%\%\%\%\%

Your Formal Code

\%\%\%\%\%\%\%\%\%\%

Strictly follow the instructions that have been claimed.

\vspace{1em}

\textbf{User Prompt:}

Natural language statement: \{Natural Language Statement\}

Natural language proof: \{{Natural Language Proof}\}

There are some Lean4 formal codes describing the given mathematical statement and its proof: \{Formal Code\}

You should refine the formal code for your task to make it correct.

Here are some feedbacks about the formal code which can be used to help your task: \{According to the theorem prover, the error details of the provided formal code are:

Error Details

\}

\tcblower

\textbf{Example of Error Details:}

Error on line 5, start column 0, end column 1: unexpected token '\#'; expected command

Error on line 138, start column 0, end column None: unexpected end of input; expected '\{' or indented tactic sequence

\end{tcolorbox}

\begin{tcolorbox}[title={Recurrent Generator}, breakable]
\textbf{System Prompt:}

You are an expert in formal language Lean4.

You will be given a mathematical statement and its proof written in natural language and LaTeX symbols.

You will also be given a formal code which attempted to describe the given mathematical statement and its proof in Lean4.

Your task is to refine the given formal code to make it correct while maintaining the alignment with the given natural language mathematical statement and proof.

Here are some instructions for your task:

1. You should give the formal code directly without any additional comments or explanations.

2. In case that you need to import any necessary preambles, you should not import any fake (non-exist) preambles.

3. You should wrap the formal code in a way illustrated as the following:

\%\%\%\%\%\%\%\%\%\%

Your Formal Code

\%\%\%\%\%\%\%\%\%\%

Strictly follow the instructions that have been claimed.

\vspace{1em}

\textbf{User Prompt:}

Natural language statement: \{Natural Language Statement\}

Natural language proof: \{{Natural Language Proof}\}

There are some Lean4 formal codes describing the given mathematical statement and its proof: \{Formal Code\}

You should refine the formal code for your task to make it correct.

Here are some feedbacks about the formal code which can be used to help your task: \{Feedback from LLM Judges\}

\tcblower

\textbf{Example of Feedback from LLM Judges:}

Aspect: Is the formalized code expressed in a minimal, non-redundant form, avoiding unnecessary repetition or complexity?

Evaluation: Explanation: The formalized code is not expressed in a minimal, non-redundant form. It contains multiple intermediate steps (h4, h5, h6, h7, h8) that are stated but left as ``sorry," indicating incomplete proofs or placeholders. While these steps reflect the logical progression of the proof, they are somewhat verbose and could be streamlined by directly manipulating the logarithmic equalities or by combining steps to avoid repetition. For example, the relationships between the logarithms of x, y, and z could be derived more compactly without separately stating each intermediate equality. Additionally, the final step h9 is also a ``sorry," so the proof is incomplete and does not demonstrate minimality or efficiency in its current form. Overall, the code structure suggests redundancy and unnecessary complexity rather than a minimal, concise formalization. 

Judgement: False\newline

Aspect: Is the formalized code \emph{internally coherent and contains no contradictions} under the logical rules of the relevant formal system?

Evaluation: Explanation: The formalized code is internally coherent and contains no contradictions under the logical rules of Lean4 and the real number theory it uses. The theorem statement is well-formed, assuming the standard definitions of logarithms (Real.logb) and the domain conditions ($x, y, z > 1, w > 0$). The proof structure is consistent: it introduces variables and hypotheses, then states intermediate lemmas (h4 to h9) that relate the logarithms of w and the bases $x, y, z$. Although the proof steps are marked as ``sorry" (placeholders for omitted proofs), this does not introduce contradictions; it simply means the proof is incomplete but not inconsistent. The chain of equalities and substitutions aligns with the natural language proof logic, and the final conclusion matches the problem statement. Therefore, the code is internally coherent and free of contradictions.

Judgement: True
\end{tcolorbox}

\subsection{Prompts for LLM Judges}
We use the Operable Atomic Properties (OAPs) in ~\citep{zhang2025goldstandardsepistemicensemble} to estimate scores for soft-dimensions. OAPs within one soft-dimension are considered equally (i.e., for $i\in\{\LP,\MC,\FQ\}$, $w_{i,k}$ is the same for every $k$ in Eq.~\ref{eq:judge}). The descriptions of OAPs used in the prompts are listed as follows:
\begin{description}
    \item[For $\widehat{\pi}_\LP$] 1. \textit{Pre-arg Structure}: `Does the formalized code reflect the \emph{inherent predicate-argument structure} of the natural language statement?'; 2. \textit{Quantification}: `Does the formalized code accurately formalize all \emph{quantifiers}, such as universal and existential, present in the natural language statement?'; 3. \textit{Formula}: `Are all \emph{mathematical formulas and expressions} in the natural language statement, such as equations and inequalities, correctly and completely represented in the formalized code?'; 4. \textit{Relation}: `Are the \emph{logical and mathematical relationships between propositions} in the natural language statement preserved in the formalized code?'.
    \item[For $\widehat{\pi}_\MC$] 1. \textit{Concept}: `Are all \emph{mathematical concepts} mentioned in the natural language statement, such as integers, fractions, real numbers, complex numbers, derivatives, integrals, vectors, matrices, probabilities, expectations, and variances, are correctly formalized in the formalized code?'; 2. \textit{Constant}: `Are all \emph{mathematical constants} mentioned in the natural language statement, such as 1, $\frac{2}{3}$, $\pi$, $e$, are properly included in the formalized code?'; 3. \textit{Operator}: `Are all \emph{mathematical operators} used in the natural language statement, such as addition, subtraction, multiplication, division, summation, exponentiation, and product, are correctly represented in the formalized code?'.
    \item[For $\widehat{\pi}_\FQ$] 1. \textit{Conciseness}: `Is the formalized code expressed in a minimal, non-redundant form, avoiding unnecessary repetition or complexity?'; 2. \textit{Logical Consistency}: `Is the formalized code \emph{internally coherent and contains no contradictions} under the logical rules of the relevant formal system?'.
\end{description}

\begin{tcolorbox}[title={LLM Judges}, breakable]
\textbf{System Prompt:}

You are an expert in formal language Lean4.

You will be given a mathematical statement and its proof written in natural language and LaTeX symbols.

You will also be given a formal code which attempted to describe the given mathematical statement and its proof in Lean4.

Your task is to evaluate a specific aspect of the formal code.

The description of the aspect is: \{OAP Description\}

Your need to give two things about your evaluation:

1. the judgement of whether the formalization meets this aspect. This should be a binary value in ``True" or ``False".

2. the detailed explanation of your judgement.

You should wrap your final results in a way illustrated as the following:

\%\%\%\%\%\%\%\%\%\%

Explanation: Your Detailed Explanation

Judgement: Your Binary Judgement

\%\%\%\%\%\%\%\%\%\%

Strictly follow the instructions that have been claimed.

\vspace{1em}

\textbf{User Prompt:}

Natural language statement: \{Natural Language Statement\}

Natural language proof: \{{Natural Language Proof}\}

There are some Lean4 formal codes describing the given mathematical statement and its proof: \{Formal Code\}
\end{tcolorbox}

\section{Human Evaluation Criteria}\label{app:human}
We provide the detailed human evaluation criteria of formalizations in Table~\ref{tab:anno}. Logical Preservation (LP), Mathematical Consistency (MC), and Formal Quality (FQ) are rated on a 3-point scale. Since mathematics is a rigorous discipline and formal languages are designed to reflect this rigor, any human with sufficient knowledge of mathematics and formal language should be able to apply these criteria consistently and reach similar judgments.
\begin{table}[!t]
  \caption{Annotation criteria}
  \centering
  \begin{tabular}{p{0.95\textwidth}}
    \toprule
    \textbf{Logical Preservation (LP)}\newline
    From your point of view, does the formal code capture the logical structure and content of the natural language statement?\newline
    \textbf{1}: Fully captures the logical structure and content.\newline
    \textbf{0.5}: Partially captures the logical structure and content.\newline
    \textbf{0}: Does not capture the logical structure or content.\\
    \midrule
    \textbf{Mathematical Consistency (MC)}\newline
    From your point of view, does the formal code accurately represent mathematical objects and operations present in the natural language statement?\newline
    \textbf{1}: Accurately represents all mathematical objects and operations.\newline
    \textbf{0.5}: Partially represents the involved mathematical objects and operations.\newline
    \textbf{0}: Does not represent the involved mathematical objects or operations.\\
    \midrule
    \textbf{Formal Quality (FQ)}\newline
    From your point of view, does the formalized code alone demonstrate high quality in terms of structural clarity, readability, and usefulness?\newline
    \textbf{1}: Demonstrates high structural clarity, readability, and usefulness.\newline
    \textbf{0.5}: Demonstrates moderate structural clarity, readability, or usefulness.\newline
    \textbf{0}: Does not demonstrate adequate structural clarity, readability, or usefulness.\\
    \bottomrule
  \end{tabular}
  \label{tab:anno}
\end{table}

\section{Discussion}\label{app:dis}
Full-theorem autoformalization is essential for formal verification of claims, proof checking, and knowledge discovery. In this context, minimizing false positives is crucial: incorrect claims should not be validated, and flawed proofs should be repaired. Achieving this requires prioritizing alignment between natural-language claims and formal statements, beyond relying solely on theorem-prover pass rates. \textbf{The masked composite objective captures this requirement, and our lower confidence bound assumptions provide a theoretical framework for modeling the reliability of LLM judges in evaluating alignment dimensions under uncertainty.}

\textbf{Our iterative monotonic framework shows that combining complementary LLMs with formal verifier feedback produces formalizations with provably non-decreasing quality under realistic evaluation noise.} By explicitly modeling multiple dimensions of formalization quality and adaptively allocating refinement effort, the framework provides a principled methodology for AI-assisted autoformalization and the integration of machine-generated proofs. This makes it possible to reduce human oversight in large-scale formal verification pipelines, potentially accelerating the formalization of complex mathematical results.

\textbf{Furthermore, by operating in a reference-free manner, our approach is broadly applicable even in domains lacking extensive formal libraries, opening opportunities for formalizing underexplored areas of mathematics or science.} The theoretical guarantees of monotonic improvement also provide a foundation for future research in safe, reliable AI-assisted formal reasoning, where uncertainty in judgment or verification can be explicitly quantified and controlled.

\subsection{Limitations}\label{app:lim}

\textbf{One-sided uncertainty modeling of LLM judges.} The uncertainty modeling of LLM judges in this work focuses on a one-sided lower confidence bound (LCB), rather than full two-sided uncertainty intervals. This choice is motivated by the design of our monotonic optimization framework, where decision-making is based on conservative acceptance under lower-bound estimates. As a result, our notion of reliability is also defined in terms of lower-bound guarantees. A more comprehensive treatment of bidirectional uncertainty estimation is an interesting direction but is beyond the scope of this study.

\textbf{Potential bias from using LLM judges in both optimization and evaluation.} LLM-based judges are used both in the test-time optimization loop and in reporting soft-dimension evaluation metrics, which may introduce systematic bias due to shared model preferences. To mitigate this concern, we decouple formal correctness from LLM judgments by using the Lean4 theorem prover as an external verifier for formal validity. In addition, human and cross-model evaluation are conducted to validate the observed trends of the monotonic process. In our framework, LLM judges are explicitly modeled as noisy and imperfect estimators rather than ground-truth oracles. The monotonic acceptance rule based on lower confidence bounds is designed to reduce sensitivity to judge noise and promote conservative updates. While these design choices reduce potential bias, we acknowledge that some residual coupling between optimization and evaluation may remain, particularly in soft metrics.

\textbf{Dependence on single-annotator human evaluation for uncertainty estimation.} The empirical estimation of uncertainty parameters in the LCB formulation relies on human evaluation conducted by a single annotator. While this provides a consistent labeling signal, it may not fully capture inter-annotator variability, and thus may introduce noise in the absolute calibration of uncertainty estimates. Nevertheless, the evaluation is primarily used to characterize relative trends rather than to establish absolute ground-truth values. Future work could explore multi-annotator protocols or probabilistic annotation models to refine uncertainty estimation. Importantly, the proposed monotonic process itself does not depend on perfect calibration of these parameters, and its qualitative behavior remains consistent under reasonable perturbations of the estimates.

\textbf{Computational cost of ensemble-based monotonic optimization.} The proposed monotonic process involves multiple LLM generations and evaluations, and thus incurs higher computational cost compared to single-pass inference methods. This overhead is a direct consequence of its ensemble-style decision mechanism and iterative refinement procedure. However, this cost is offset by improved performance and stability across tasks. In addition, the framework naturally allows performance to scale with increased computation budget (e.g., more iterations or stronger judges), suggesting a form of test-time scaling behavior. Exploring more efficient variants of the monotonic process is a promising direction for future work.

\section{Additional Results, Tables, and Figures}\label{app:add}
The explicit results of the monotonic process, the monotonic process with GPT-5.4 only, and the iterative self-refinement process are provided in Table~\ref{tab:mono}, \ref{tab:mono_gpt}, and \ref{tab:isr}, respectively. The costs of different settings, in terms of the number of LLM calls, are reported in Table~\ref{tab:cost}. We provide a running example of full-theorem autoformalization with our monotonic process for ``Dummit-Foote\_exercise\_1\_1\_20'' from ProoNet in Figure~\ref{fig:example}. The explicit results of Recurrent Generators (REGs) are reported in Table~\ref{tab:reg}.

\begin{table}[!t]
  \caption{Performance of the monotonic process. All numbers are reported in percentage.}
  \scriptsize
  \centering
  \begin{tabular}{c c c c c c c @{\hspace{2em}} c c c c c c}
    \toprule
    & \multicolumn{6}{c}{miniF2F-Test} & \multicolumn{6}{c}{ProofNet-Test}\\
    \mycdashline{2-13}
    $t$ & $\pi_\FV$ & $\widehat{\pi}_\LP$ & $\widehat{\pi}_\MC$  & $\widehat{\pi}_\FQ$ & $\widehat{J}$ & $\widehat{J}=1$ & $\pi_\FV$ & $\widehat{\pi}_\LP$ & $\widehat{\pi}_\MC$  & $\widehat{\pi}_\FQ$ & $\widehat{J}$ & $\widehat{J}=1$\\
    \midrule
    \multicolumn{6}{l}{\textit{GPT-4.1-mini Judges}}\\
    \midrule
    0 & 86.07 & 57.58 & 80.60 & 60.86 & 57.03 & 8.61 & 36.02 & 48.52 & 72.04 & 57.80 & 18.75 & 3.23\\
    1 & 95.90 & 70.90 & 89.21 & 75.82 & 75.33 & 27.46 & 54.84 & 59.01 & 80.47 & 75.54 & 34.12 & 9.68\\
    2 & 99.59 & 75.41 & 91.53 & 82.58 & 82.76 & 35.66 & 62.90 & 61.83 & 81.00 & 79.57 & 41.37 & 12.37\\
    3 & 100.00 & 79.82 & 94.54 & 86.27 & 86.87 & 42.21 & 67.74 & 61.56 & 80.47 & 82.53 & 45.86 & 16.13\\
    4 & 100.00 & 82.79 & 96.17 & 89.34 & 89.44 & 46.72 & 73.12 & 60.48 & 79.39 & 82.53 & 49.26 & 17.20\\
    5 & 100.00 & 83.71 & 96.72 & 90.37 & 90.27 & 49.18 & 77.96 & 58.06 & 78.32 & 82.80 & 52.45 & 17.74\\
    \midrule
    \multicolumn{6}{l}{\textit{Qwen2.5-Coder-7B Judges}}\\
    \midrule
    0 & - & 32.68 & 46.99 & 15.98 & 26.10 & 0.82 & - & 26.75 & 35.13 & 12.63 & 7.78 & 0.54\\
    1 & - & 36.89 & 50.41 & 21.52 & 34.14 & 1.64 & - & 23.79 & 34.77 & 16.94 & 11.23 & 1.61\\
    2 & - & 39.34 & 52.46 & 22.95 & 37.96 & 2.46 & - & 22.45 & 33.69 & 17.20 & 12.44 & 2.15\\
    3 & - & 41.09 & 54.64 & 23.57 & 39.77 & 2.87 & - & 19.89 & 33.51 & 17.20 & 12.70 & 2.15\\
    4 & - & 40.47 & 56.01 & 23.77 & 40.08 & 3.28 & - & 19.49 & 32.97 & 16.13 & 13.78 & 2.15\\
    5 & - & 41.39 & 56.28 & 24.80 & 40.82 & 2.87 & - & 18.68 & 32.62 & 14.52 & 14.46 & 2.15\\
    \midrule
    \multicolumn{6}{l}{\textit{Human Evaluation}}\\
    \midrule
    0 & - & 68.24 & 68.03 & 33.61 & 49.46 & 1.23& - & 62.63 & 57.53 & 24.46 & 15.71 & 1.61\\
    1 & - & 73.98 & 75.82 & 36.89 & 59.63 & 4.92 & - & 69.62 & 63.44 & 22.58 & 28.25 & 5.91\\
    2 & - & 75.41 & 77.46 & 38.11 & 63.32 & 7.38 & - & 72.04 & 65.86 & 21.78 & 33.62 & 6.45\\
    3 & - & 77.46 & 79.71 & 38.32 & 65.16 & 8.20 & - & 70.70 & 65.32 & 20.70 & 35.77 & 6.99\\
    4 & - & 78.69 & 81.35 & 38.11 & 66.05 & 8.20 & - & 70.97 & 66.94 & 21.24 & 38.63 & 8.06\\
    5 & - & 78.89 & 81.76 & 39.14 & 66.60 & 9.02 & - & 68.55 & 68.55 & 21.77 & 40.96 & 8.06\\
    \bottomrule
  \end{tabular}
  \label{tab:mono}
\end{table}

\begin{table}[!t]
  \caption{Performance of the monotonic process with a single LLM (GPT-5.4).}
  \scriptsize
  \centering
  \begin{tabular}{c c c c c c c @{\hspace{2em}} c c c c c c}
    \toprule
    & \multicolumn{6}{c}{miniF2F-Test} & \multicolumn{6}{c}{ProofNet-Test}\\
    \mycdashline{2-13}
    $t$ & $\pi_\FV$ & $\widehat{\pi}_\LP$ & $\widehat{\pi}_\MC$  & $\widehat{\pi}_\FQ$ & $\widehat{J}$ & $\widehat{J}=1$ & $\pi_\FV$ & $\widehat{\pi}_\LP$ & $\widehat{\pi}_\MC$  & $\widehat{\pi}_\FQ$ & $\widehat{J}$ & $\widehat{J}=1$\\
    \midrule
    \multicolumn{6}{l}{\textit{GPT-4.1-mini Judges}}\\
    \midrule
    0 & 43.44 & 81.15 & 94.26 & 70.49 & 37.20 & 17.62 & 11.29 & 86.29 & 95.16 & 75.00 & 10.23 & 5.91\\
    1 & 52.87 & 86.78 & 96.72 & 80.74 & 46.76 & 26.64 & 16.13 & 91.40 & 97.49 & 82.26 & 14.83 & 9.68\\
    2 & 56.56 & 86.78 & 96.45 & 84.43 & 50.27 & 31.15 & 17.20 & 92.88 & 98.39 & 86.56 & 16.10 & 11.83\\
    3 & 59.02 & 88.01 & 96.58 & 85.04 & 52.96 & 34.84 & 20.43 & 93.28 & 98.21 & 89.78 & 19.03 & 13.98\\
    4 & 59.43 & 89.65 & 97.54 & 86.48 & 54.34 & 37.30 & 20.97 & 93.55 & 98.39 & 91.67 & 19.65 & 15.05\\
    5 & 60.66 & 90.16 & 97.54 & 87.09 & 55.64 & 38.52 & 21.51 & 94.49 & 98.57 & 92.47 & 20.19 & 15.59\\
    \midrule
    \multicolumn{6}{l}{\textit{Qwen2.5-Coder-7B Judges}}\\
    \midrule
    0 & - & 57.48 & 62.57 & 34.22 & 20.94 & 4.10 & - & 56.85 & 56.99 & 41.67 & 5.92 & 2.69\\
    1 & - & 59.53 & 62.70 & 35.66 & 24.74 & 4.92 & - & 57.53 & 56.81 & 41.67 & 8.14 & 3.76\\
    2 & - & 59.73 & 62.70 & 37.09 & 26.88 & 4.92 & - & 60.48 & 58.60 & 42.74 & 8.84 & 3.76\\
    3 & - & 59.53 & 63.93 & 36.89 & 28.26 & 4.92 & - & 60.08 & 57.89 & 42.74 & 9.78 & 3.76\\
    4 & - & 59.43 & 63.80 & 36.07 & 28.17 & 4.92 & - & 60.75 & 58.24 & 44.09 & 10.07 & 3.76\\
    5 & - & 59.22 & 63.66 & 35.86 & 29.01 & 4.92 & - & 61.83 & 58.60 & 44.35 & 10.42 & 3.76\\
    \bottomrule
  \end{tabular}
  \label{tab:mono_gpt}
\end{table}

\begin{table}[!t]
  \caption{Performance of iterative self-refinement (ISR) with error feedback from the theorem prover and no acceptance policy.}
  \scriptsize
  \centering
  \begin{tabular}{c c c c c c c @{\hspace{2em}} c c c c c c}
    \toprule
    & \multicolumn{6}{c}{miniF2F-Test} & \multicolumn{6}{c}{ProofNet-Test}\\
    \mycdashline{2-13}
    $t$ & $\pi_\FV$ & $\widehat{\pi}_\LP$ & $\widehat{\pi}_\MC$  & $\widehat{\pi}_\FQ$ & $\widehat{J}$ & $\widehat{J}=1$ & $\pi_\FV$ & $\widehat{\pi}_\LP$ & $\widehat{\pi}_\MC$  & $\widehat{\pi}_\FQ$ & $\widehat{J}$ & $\widehat{J}=1$\\
    \midrule
    \multicolumn{6}{l}{\textit{ISR-DeepSeek-Prover-V2-7B}}\\
    \midrule
    0 & 50.41 & 47.75 & 80.05 & 45.08 & 32.23 & 6.97 & 14.52 & 41.94 & 69.71 & 38.17 & 7.75 & 1.08 \\
    1 & 47.13 & 48.16 & 78.28 & 43.03 & 29.47 & 6.15 & 8.06 & 40.59 & 68.64 & 32.53 & 5.46 & 2.15\\
    2 & 48.77 & 49.90 & 81.42 & 40.57 & 30.87 & 5.74 & 9.14 & 39.38 & 69.18 & 29.57 & 6.91 & 2.15\\
    3 & 49.59 & 51.33 & 80.05 & 39.34 & 31.74 & 5.74 & 8.06 & 40.19 & 69.53 & 34.95 & 5.76 & 2.69\\
    4 & 48.77 & 50.20 & 78.83 & 37.70 & 30.56 & 4.92 & 7.53 & 42.20 & 72.58 & 30.38 & 5.48 & 2.69\\
    5 & 50.41 & 52.46 & 81.01 & 40.78 & 33.06 & 6.15 & 6.99 & 42.20 & 71.68 & 31.72 & 5.26 & 2.15\\
    \midrule
    \multicolumn{6}{l}{\textit{ISR-Goedel-Prover-V2-8B}}\\
    \midrule
    0 & 65.16 & 46.62 & 75.82 & 47.54 & 39.12 & 2.46 & 22.04 & 33.47 & 58.60 & 36.02 & 9.93 & 1.61\\
    1 & 60.66 & 40.06 & 69.67 & 48.98 & 36.04 & 2.05 & 17.74 & 25.40 & 53.05 & 35.48 & 8.91 & 2.69\\
    2 & 60.66 & 38.52 & 68.44 & 46.31 & 35.08 & 1.64 & 18.28 & 26.08 & 54.84 & 33.33 & 7.60 & 2.15\\
    3 & 59.43 & 38.32 & 65.98 & 44.88 & 34.37 & 2.46 & 21.51 & 27.55 & 53.76 & 36.02 & 9.20 & 1.61\\
    4 & 57.38 & 37.60 & 65.44 & 48.98 & 32.70 & 2.05 & 20.97 & 25.13 & 52.51 & 36.56 & 8.38 & 1.08\\
    5 & 55.74 & 37.30 & 65.98 & 46.72 & 30.72 & 1.64 & 23.66 & 23.52 & 52.33 & 35.22 & 10.21 & 1.08\\
    \midrule
    \multicolumn{6}{l}{\textit{ISR-GPT-5.4}}\\
    \midrule
    0 & 30.74 & 80.94 & 94.54 & 72.54 & 26.84 & 14.75 & 9.68 & 90.32 & 97.31 & 73.92 & 9.04 & 3.76\\
    1 & 44.67 & 80.43 & 95.22 & 78.48 & 39.32 & 21.31 & 10.75 & 86.69 & 94.09 & 75.81 & 9.99 & 6.45\\
    2 & 49.18 & 81.66 & 93.99 & 75.20 & 42.93 & 22.13 & 15.05 & 86.56 & 96.06 & 77.42 & 14.02 & 8.60\\
    3 & 53.28 & 80.84 & 94.26 & 76.64 & 46.28 & 22.54 & 17.74 & 84.68 & 96.06 & 71.51 & 16.60 & 11.83\\
    4 & 52.46 & 79.82 & 95.49 & 74.18 & 45.89 & 23.77 & 18.82 & 88.71 & 95.70 & 77.42 & 17.54 & 11.83\\
    5 & 54.92 & 81.25 & 95.08 & 76.23 & 48.01 & 23.36 & 18.82 & 86.56 & 95.70 & 75.00 & 17.63 & 12.37\\
    \bottomrule
  \end{tabular}
  \label{tab:isr}
\end{table}

\begin{table}[!t]
  \caption{Costs of different settings measured in LLM calls. Mean and standard deviation are computed on a per-sample basis. "Total" denotes the aggregate over all samples in the datasets.}
  \centering
  \begin{tabular}{l c c c @{\hspace{2em}} c c c}
    \toprule
    & \multicolumn{3}{c}{Generator Calls} & \multicolumn{3}{c}{Judge Calls}\\
    \mycdashline{2-7}
    & Mean & Std & Total & Mean & Std & Total\\
    \midrule
    \multicolumn{6}{l}{\textit{Best-of-$k$, $k=8$}}\\
    \midrule
    DeepSeek-Prover-V2-7B & 8.00 & - & 3,440 & 55.95 & 17.99 & 24,057\\
    Goedel-Prover-V2-8B & 8.00 & - & 3,440 & 57.20 & 17.21 & 24,597\\
    GPT-5.4 & 8.00 & - & 3,440 & 66.06 & 12.99 & 28,404\\
    \midrule
    \multicolumn{6}{l}{\textit{Iterative Self-Refinement (6 Iterations)}}\\
    \midrule
    Per Iteration & 1.00 & - & 430 & 9.00 & - & 3,870\\
    Overall & 6.00 & - & 2,580 & 54.00 & - & 23,220\\
    \midrule
    \multicolumn{6}{l}{\textit{The Monotonic Process -- Single (GPT-5.4)}}\\
    \midrule
    Iteration 0 & 1.78 & 0.41 & 766 & 9.00 & 0.00 & 3,870\\
    Iteration 1 & 1.63 & 0.70 & 701 & 7.51 & 3.34 & 3,231\\
    Iteration 2 & 1.52 & 0.80 & 655 & 6.70 & 3.93 & 2,880\\
    Iteration 3 & 1.46 & 0.84 & 627 & 6.43 & 4.07 & 2,763\\
    Iteration 4 & 1.41 & 0.87 & 608 & 6.01 & 4.24 & 2,583\\
    Iteration 5 & 1.39 & 0.89 & 599 & 5.90 & 4.28 & 2,538\\
    Overall & 9.20 & 4.11 & 3,956 & 41.55 & 17.48 & 17,865\\
    \midrule
    \multicolumn{6}{l}{\textit{The Monotonic Process -- Ensemble}}\\
    \midrule
    Iteration 0 & 4.42 & 1.54 & 1,900 & 28.88 & 6.93 & 12,420\\
    Iteration 1 & 11.78 & 3.92 & 5,067 & 49.37 & 32.54 & 21,231\\
    Iteration 2 & 10.17 & 5.47 & 4,371 & 34.93 & 32.86 & 15,021\\
    Iteration 3 & 9.56 & 5.90 & 4,112 & 29.74 & 31.56 & 12,789\\
    Iteration 4 & 8.89 & 6.25 & 3,823 & 25.81 & 30.79 & 11,097\\
    Iteration 5 & 8.53 & 6.38 & 3,666 & 22.81 & 29.14 & 9,810\\
    Overall & 53.35 & 26.48 & 22,939 & 191.55 & 135.43 & 82,368\\
    \bottomrule
  \end{tabular}
  \label{tab:cost}
\end{table}

\begin{figure}[!t]
    \centering
    \includegraphics[width=\textwidth]{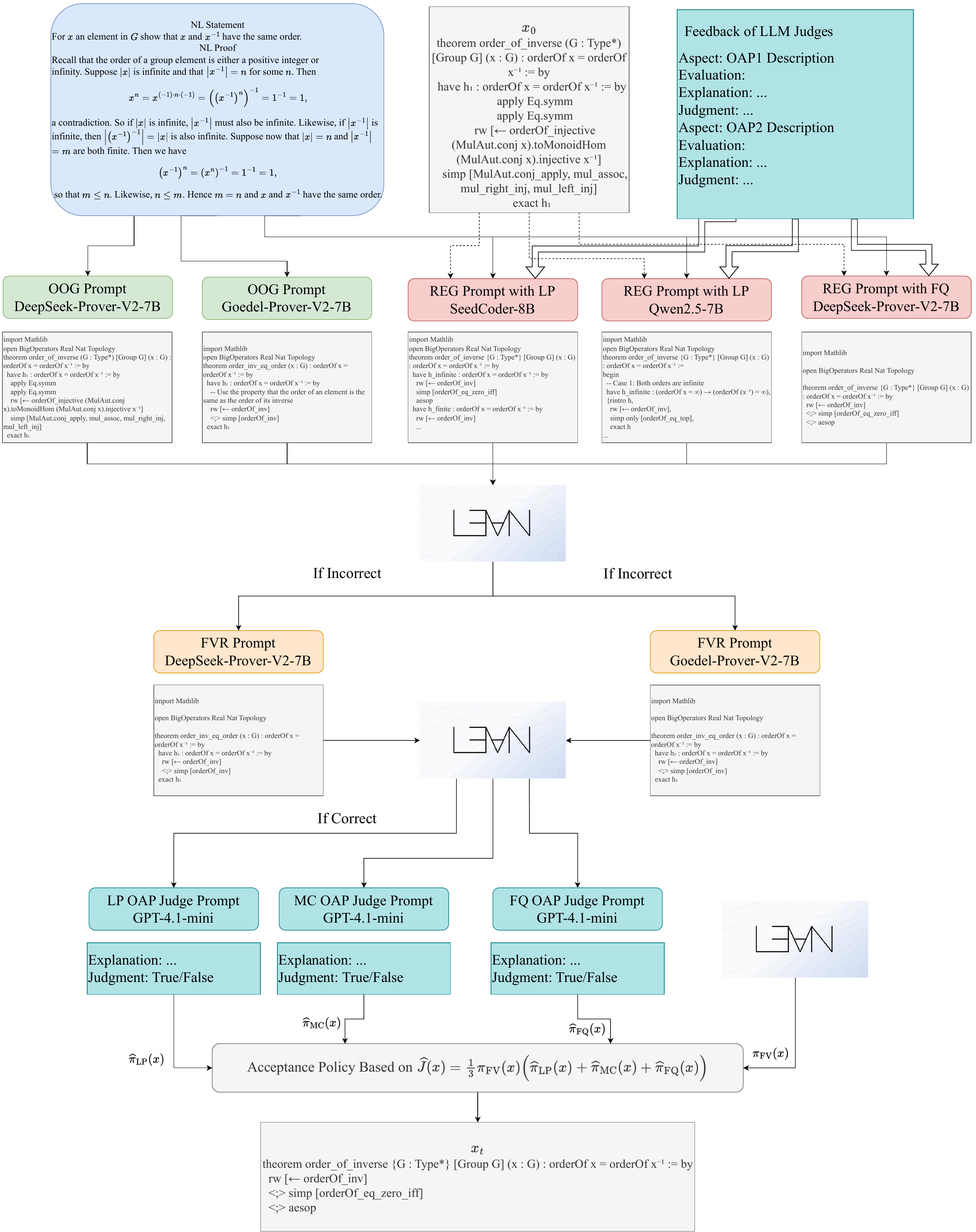}
    \caption{Illustrative example of a single step in the monotonic process. The monotonic process enables the construction of high-quality formalizations that are both formally valid, as verified by the theorem prover, and semantically aligned, as evaluated by LLM judges.}
    \label{fig:example}
\end{figure}

\begin{table}[!t]
  \caption{Performance of refinement with feedback. All results are based on zero-shot autoformalization.}
  \tiny
  \centering
  \begin{tabular}{l l l l c c c c c}
    \toprule
    & Generator Type & LLM & Feedback & $\pi_\FV$ & $\widehat{\pi}_\LP$ & $\widehat{\pi}_\MC$  & $\widehat{\pi}_\FQ$ & $\widehat{J}$\\
    \midrule
    \multicolumn{3}{l}{\textit{miniF2F-Test}}\\
    \midrule
    Baseline & One-Off Generator & DeepSeek-Prover-V2-7B & - & 50.41 & 55.02 & 80.87 & 51.02 & 35.15\\
    \mycdashline{1-9}
     & Recurrent Generator & Qwen2.5-7B-Instruct & LP & 4.10 & 56.05 & 86.20 & 34.84 & 3.53\\
     & Recurrent Generator & Qwen2.5-7B-Instruct & MC & 4.92 & 55.43 & 85.52 & 39.34 & 4.17\\
     & Recurrent Generator & Qwen2.5-7B-Instruct & FQ & 7.79 & 50.72 & 84.84 & 49.59 & 6.89\\
     & Recurrent Generator & Qwen2.5-7B-Instruct & ALL & 4.10 & 53.89 & 87.30 & 43.44 & 3.72\\
     & Recurrent Generator & Seed-Coder-8B-Instruct & LP & 22.54 & 64.86 & 89.34 & 45.90 & 19.51\\
     & Recurrent Generator & Seed-Coder-8B-Instruct & MC & 31.97 & 57.07 & 87.70 & 52.25 & 26.14\\
     & Recurrent Generator & Seed-Coder-8B-Instruct & FQ & 25.41 & 56.66 & 82.65 & 56.15 & 21.94\\
     & Recurrent Generator & Seed-Coder-8B-Instruct & ALL & 28.28 & 55.53 & 81.83 & 49.39 & 22.42\\
     & Recurrent Generator & DeepSeek-Prover-V2-7B & LP & 50.82 & 59.02 & 84.29 & 51.64 & 36.04\\
     & Recurrent Generator & DeepSeek-Prover-V2-7B & MC & 49.18 & 56.15 & 84.43 & 51.43 & 35.44\\
     & Recurrent Generator & DeepSeek-Prover-V2-7B & FQ & 51.23 & 59.53 & 84.15 & 53.07 & 37.45\\
     & Recurrent Generator & DeepSeek-Prover-V2-7B & ALL & 50.82 & 56.86 & 83.06 & 51.43 & 36.79\\
     & Recurrent Generator & Goedel-Prover-V2-8B & LP & 50.00 & 44.98 & 73.50 & 48.57 & 33.45\\
     & Recurrent Generator & Goedel-Prover-V2-8B & MC & 51.23 & 47.44 & 75.82 & 48.77 & 35.36\\
     & Recurrent Generator & Goedel-Prover-V2-8B & FQ & 53.28 & 52.66 & 78.96 & 53.48 & 37.93\\
     & Recurrent Generator & Goedel-Prover-V2-8B & ALL & 52.46 & 41.91 & 71.58 & 45.70 & 32.50\\
    \midrule
    Baseline & One-Off Generator & Goedel-Prover-V2-8B & - & 58.61 & 47.34 & 73.63 & 49.80 & 35.61\\
    \mycdashline{1-9}
     & Recurrent Generator & Qwen2.5-7B-Instruct & LP & 0.82 & 51.23 & 83.06 & 28.69 & 0.82\\
     & Recurrent Generator & Qwen2.5-7B-Instruct & MC & 2.05 & 46.82 & 83.61 & 35.25 & 1.64\\
     & Recurrent Generator & Qwen2.5-7B-Instruct & FQ & 10.66 & 44.88 & 81.01 & 45.08 & 9.10\\
     & Recurrent Generator & Qwen2.5-7B-Instruct & ALL & 8.20 & 51.74 & 82.79 & 39.14 & 7.50\\
     & Recurrent Generator & Seed-Coder-8B-Instruct & LP & 17.62 & 52.77 & 81.56 & 39.96 & 13.62\\
     & Recurrent Generator & Seed-Coder-8B-Instruct & MC & 27.05 & 50.00 & 82.92 & 47.34 & 19.92\\
     & Recurrent Generator & Seed-Coder-8B-Instruct & FQ & 25.00 & 44.98 & 76.64 & 52.46 & 17.88\\
     & Recurrent Generator & Seed-Coder-8B-Instruct & ALL & 27.46 & 47.44 & 79.10 & 48.57 & 19.77\\
     & Recurrent Generator & DeepSeek-Prover-V2-7B & LP & 56.56 & 49.28 & 77.19 & 50.61 & 35.83\\
     & Recurrent Generator & DeepSeek-Prover-V2-7B & MC & 55.74 & 48.77 & 77.32 & 47.95 & 34.57\\
     & Recurrent Generator & DeepSeek-Prover-V2-7B & FQ & 57.38 & 49.08 & 75.68 & 50.82 & 35.54\\
     & Recurrent Generator & DeepSeek-Prover-V2-7B & ALL & 56.97 & 45.90 & 74.86 & 49.18 & 35.04\\
     & Recurrent Generator & Goedel-Prover-V2-8B & LP & 52.46 & 40.27 & 68.31 & 48.16 & 32.17\\
     & Recurrent Generator & Goedel-Prover-V2-8B & MC & 54.51 & 40.57 & 70.22 & 48.77 & 32.91\\
     & Recurrent Generator & Goedel-Prover-V2-8B & FQ & 54.92 & 45.59 & 73.63 & 50.61 & 35.30\\
     & Recurrent Generator & Goedel-Prover-V2-8B & ALL & 59.02 & 40.37 & 67.90 & 45.90 & 33.53\\
    \midrule
    \multicolumn{3}{l}{\textit{ProofNet-Test}}\\
    \midrule
    Baseline & One-Off Generator & DeepSeek-Prover-V2-7B & - & 11.83 & 41.80 & 67.38 & 35.75 & 7.14\\
    \mycdashline{1-9}
     & Recurrent Generator & Qwen2.5-7B-Instruct & LP & 0.54 & 40.46 & 75.99 & 22.04 & 0.31\\
     & Recurrent Generator & Qwen2.5-7B-Instruct & MC & 2.15 & 42.07 & 74.55 & 22.85 & 1.82\\
     & Recurrent Generator & Qwen2.5-7B-Instruct & FQ & 0.54 & 40.86 & 74.73 & 23.12 & 0.54\\
     & Recurrent Generator & Qwen2.5-7B-Instruct & ALL & 0.54 & 43.55 & 76.34 & 22.58 & 0.54\\
     & Recurrent Generator & Seed-Coder-8B-Instruct & LP & 6.45 & 45.83 & 76.16 & 31.45 & 4.69\\
     & Recurrent Generator & Seed-Coder-8B-Instruct & MC & 6.45 & 42.34 & 73.48 & 31.45 & 5.29\\
     & Recurrent Generator & Seed-Coder-8B-Instruct & FQ & 4.84 & 42.20 & 72.22 & 33.60 & 4.26\\
     & Recurrent Generator & Seed-Coder-8B-Instruct & ALL & 7.53 & 40.86 & 72.22 & 29.30 & 5.29\\
     & Recurrent Generator & DeepSeek-Prover-V2-7B & LP & 11.29 & 44.62 & 73.12 & 35.22 & 6.66\\
     & Recurrent Generator & DeepSeek-Prover-V2-7B & MC & 10.22 & 43.95 & 73.48 & 39.52 & 6.54\\
     & Recurrent Generator & DeepSeek-Prover-V2-7B & FQ & 11.83 & 46.24 & 72.76 & 40.86 & 7.50\\
     & Recurrent Generator & DeepSeek-Prover-V2-7B & ALL & 12.90 & 45.30 & 72.94 & 40.05 & 8.23\\
     & Recurrent Generator & Goedel-Prover-V2-8B & LP & 20.43 & 37.50 & 62.01 & 41.67 & 12.40\\
     & Recurrent Generator & Goedel-Prover-V2-8B & MC & 18.82 & 32.53 & 60.93 & 44.35 & 11.81\\
     & Recurrent Generator & Goedel-Prover-V2-8B & FQ & 11.83 & 40.46 & 67.03 & 39.52 & 8.12\\
     & Recurrent Generator & Goedel-Prover-V2-8B & ALL & 16.67 & 32.53 & 63.08 & 38.17 & 10.26\\
    \midrule
    Baseline & One-Off Generator & Goedel-Prover-V2-8B & - & 17.74 & 31.59 & 63.44 & 29.84 & 6.89\\
    \mycdashline{1-9}
     & Recurrent Generator & Qwen2.5-7B-Instruct & LP & 1.61 & 33.33 & 72.58 & 13.44 & 1.30\\
     & Recurrent Generator & Qwen2.5-7B-Instruct & MC & 1.08 & 33.74 & 71.86 & 15.59 & 1.08\\
     & Recurrent Generator & Qwen2.5-7B-Instruct & FQ & 1.61 & 31.85 & 70.07 & 18.28 & 1.57\\
     & Recurrent Generator & Qwen2.5-7B-Instruct & ALL & 1.61 & 35.62 & 70.61 & 17.47 & 1.52\\
     & Recurrent Generator & Seed-Coder-8B-Instruct & LP & 3.23 & 32.66 & 68.64 & 18.55 & 2.76\\
     & Recurrent Generator & Seed-Coder-8B-Instruct & MC & 4.30 & 33.60 & 70.43 & 22.58 & 3.29\\
     & Recurrent Generator & Seed-Coder-8B-Instruct & FQ & 3.23 & 32.12 & 66.67 & 25.27 & 2.66\\
     & Recurrent Generator & Seed-Coder-8B-Instruct & ALL & 4.84 & 30.91 & 63.98 & 24.46 & 3.54\\
     & Recurrent Generator & DeepSeek-Prover-V2-7B & LP & 15.05 & 35.48 & 67.56 & 32.80 & 7.97\\
     & Recurrent Generator & DeepSeek-Prover-V2-7B & MC & 14.52 & 34.14 & 68.46 & 30.11 & 6.82\\
     & Recurrent Generator & DeepSeek-Prover-V2-7B & FQ & 14.52 & 35.35 & 67.20 & 34.14 & 7.26\\
     & Recurrent Generator & DeepSeek-Prover-V2-7B & ALL & 16.13 & 34.95 & 66.31 & 33.87 & 7.57\\
     & Recurrent Generator & Goedel-Prover-V2-8B & LP & 16.13 & 21.91 & 53.41 & 34.14 & 8.65\\
     & Recurrent Generator & Goedel-Prover-V2-8B & MC & 19.35 & 21.51 & 51.25 & 34.68 & 9.98\\
     & Recurrent Generator & Goedel-Prover-V2-8B & FQ & 17.74 & 25.27 & 58.60 & 34.68 & 8.74\\
     & Recurrent Generator & Goedel-Prover-V2-8B & ALL & 18.82 & 24.19 & 56.81 & 36.83 & 8.96\\
    \bottomrule
  \end{tabular}
  \label{tab:reg}
\end{table}



\end{document}